\def\N{\mathcal{N}}
\def\reals{\mathbb{R}}
\newcommand{\W}[1]{ W^{(#1)}}
\def\E{\mathbb{E}}
\def\v{\text{vec}}
\newtheorem{lemma}{Lemma}
\newtheorem{corollary}{Corollary}
\newtheorem{theorem}{Theorem}
\icmltitlerunning{A Modular Analysis of Provable Acceleration via Polyak's Momentum}
\begin{document}
\twocolumn[
\icmltitle{A Modular Analysis of Provable Acceleration via Polyak's Momentum: Training a Wide ReLU Network and a Deep Linear Network}




\begin{icmlauthorlist}
\icmlauthor{Jun-Kun Wang}{to}
\icmlauthor{Chi-Heng Lin}{goo}
\icmlauthor{Jacob Abernethy}{to}
\end{icmlauthorlist}

\icmlaffiliation{to}{School of Computer Science, Georgia Institute of Technology}
\icmlaffiliation{goo}{School of Electrical and Computer Engineering, Georgia Institute of Technology}

\icmlcorrespondingauthor{Jun-Kun Wang}{jimwang@gatech.edu}
\icmlcorrespondingauthor{Chi-Heng Lin}{cl3385@gatech.edu}
\icmlcorrespondingauthor{Jacob Abernethy}{prof@gatech.edu}

\icmlkeywords{Machine Learning, ICML}

\vskip 0.3in
]



\printAffiliationsAndNotice{}  

\begin{abstract}
Incorporating a so-called ``momentum'' dynamic in gradient descent methods is widely used in neural net training as it has been broadly observed that, at least empirically, it often leads to significantly faster convergence.
At the same time, there are very few theoretical guarantees
in the literature to explain this apparent acceleration effect.
Even for the classical strongly convex quadratic problems, several existing results only show Polyak's momentum has an accelerated linear rate asymptotically. 
In this paper, we first revisit the quadratic problems and show a non-asymptotic accelerated linear rate of Polyak's momentum. 
Then, we provably show that Polyak's momentum achieves acceleration for training a one-layer wide ReLU network and a deep linear network,
which are perhaps the two most popular canonical models for studying optimization and deep learning in the literature. 
Prior work \citep{DZPS19,WDW19} showed that using vanilla gradient descent, and with an use of over-parameterization, the error decays as $(1- \Theta(\frac{1}{ \kappa'}))^t$ after $t$ iterations, where $\kappa'$ is the condition number of a Gram Matrix.
Our result shows that with the appropriate choice of parameters Polyak's momentum has a rate of $(1-\Theta(\frac{1}{\sqrt{\kappa'}}))^t$.
For the deep linear network, prior work \citep{HXP20} showed that 
vanilla gradient descent has a rate of 
$(1-\Theta(\frac{1}{\kappa}))^t$, where $\kappa$ is the condition number of a data matrix.  
Our result shows an acceleration rate $(1- \Theta(\frac{1}{\sqrt{\kappa}}))^t$ is achievable by Polyak's momentum.
This work establishes that momentum does indeed speed up neural net training.
\end{abstract}

\section{Introduction}

Momentum methods are very popular for training neural networks in various applications (e.g. \citet{Rnet16,attention17,KSH12}).
It has been widely observed that the use of momentum helps faster training in deep learning (e.g. \citet{KH1918,WRSSR17,CO19}).
Among all the momentum methods, the most popular one seems to be Polyak's momentum (a.k.a. Heavy Ball momentum) \citep{P64}, which is the default choice of momentum in PyTorch and Tensorflow.
The success of Polyak's momentum in deep learning is widely appreciated
and almost all of the recently developed adaptive gradient methods like
Adam \cite{KB15}, AMSGrad \cite{RKK18}, and AdaBound \cite{LXLS19}    adopt the use of Polyak's momentum, instead of Nesterov's momentum. 

However, despite its popularity, little is known in theory about why Polyak's momentum helps to accelerate training neural networks.
Even for convex optimization, 
problems 
like strongly convex quadratic problems seem to be one of the few cases that discrete-time Polyak's momentum method provably achieves faster convergence than standard gradient descent (e.g. \citet{LRP16,goh2017why,GFJ15,GLZX19,LR17,LR18,CGZ19,SP20,NB15,WJR21,FSRV20,DJ19,SDJS18,H20}). 
On the other hand, the theoretical guarantees of
Adam, AMSGrad , or AdaBound     
are only worse if the momentum parameter $\beta$ is non-zero and 
the guarantees deteriorate as the momentum parameter increases,
which do not show any advantage of the use of momentum \citep{AMMC20}.
Moreover,
the convergence rates that have been established for Polyak's momentum in several
related works \citep{GPS16,SYLHGJ19,YLL18,LGY20,MJ20} do not improve upon those for vanilla gradient descent or vanilla SGD in the worst case.
\citet{LRP16,GFJ15} even show negative cases in \emph{convex} optimization 
that the use of Polyak's momentum results in divergence. 
Furthermore,
\citet{NKJK18} construct a problem instance for which the momentum method under its optimal tuning is outperformed by other algorithms. 
\citet{WCA20} show that Polyak's momentum helps escape saddle points faster compared with the case without momentum, which is the only provable advantage of Polyak's momentum in non-convex optimization that we are aware of.
A solid understanding of the empirical success of Polyak's momentum in deep learning has eluded researchers for some time.

We begin this paper by first revisiting the use of Polyak's momentum for 
the class of strongly convex quadratic problems,
\begin{equation} \label{obj:strc}
 \min_{w \in \reals^d} \frac{1}{2} w^\top \Gamma w + b^\top w,
\end{equation} 
where $\Gamma \in \reals^{d \times d}$ is a PSD matrix such that $\lambda_{\max}( \Gamma)= \alpha$, $\lambda_{\min}(\Gamma) =  \mu > 0$.
This is one of the few\footnote{In Section~\ref{sec:pre} and
	Appendix~\ref{app:history}, we will provide more discussions about this point.}
known examples that Polyak's momentum has a provable \textit{globally} \textit{accelerated} linear rate in the \textit{discrete-time} setting. 
Yet even for this class of problems existing results only establish an accelerated linear rate in an asymptotic sense and several of them do not have an explicit rate in the non-asymptotic regime (e.g. \citet{P64,LRP16,M19,R18}).
Is it possible to prove a non-asymptotic accelerated linear rate in this case? We will return to this question soon. 

For general $\mu$-strongly convex, $\alpha$-smooth, and twice differentiable
functions (not necessarily quadratic), denoted as $F_{\mu,\alpha}^2$,
Theorem 9 in \citet{P64} shows an asymptotic accelerated linear rate
when the iterate is \textit{sufficiently} close to the minimizer so that the landscape can be well approximated by that of a quadratic function. However, 
the definition of the neighborhood was not very precise in the paper. In this work, we show a locally accelerated linear rate under a quantifiable definition of the neighborhood.

\begin{algorithm}[t]
\begin{algorithmic}[1]
\caption{Gradient descent with Polyak's momentum \citep{P64} (Equivalent Version 1)
} \label{alg:HB1}{}
\STATE Required: Step size $\eta$ and momentum parameter $\beta$.
\STATE Init: $w_{0} \in \reals^d $ and $M_{-1} = 0 \in \reals^d$.
\FOR{$t=0$ to $T$}
\STATE Given current iterate $w_t$, obtain gradient $\nabla \ell(w_t)$.
\STATE Update momentum $M_t = \beta M_{t-1} +  \nabla \ell(w_t)$.
\STATE Update iterate $w_{t+1} = w_t - \eta M_t$.
\ENDFOR
\end{algorithmic}
\end{algorithm}

\begin{algorithm}[t]
\begin{algorithmic}[1]
\caption{Gradient descent with Polyak's momentum \citep{P64} (Equivalent Version 2) } 
\label{alg:HB2}
\STATE Required: step size $\eta$ and momentum parameter $\beta$.
\STATE Init: $w_{0} = w_{-1} \in \reals^d $
\FOR{$t=0$ to $T$}
\STATE Given current iterate $w_t$, obtain gradient $\nabla \ell(w_t)$.
\STATE Update iterate $w_{t+1} = w_t - \eta \nabla \ell(w_t) + \beta ( w_t - w_{t-1} )$.
\ENDFOR
\end{algorithmic}
\end{algorithm}

Furthermore, we provably show that Polyak's momentum helps to achieve a faster convergence for training two neural networks, compared to vanilla GD.
The first is training a one-layer ReLU network.
Over the past few years there have appeared an enormous number of works considering training a one-layer ReLU network, provably showing convergence results for vanilla (stochastic) gradient descent 
(e.g. \citet{LL18,JT20,LY17,DZPS19,DLLWZ16,ZL19_icml,ZY19,ZCZG19,ADHLSW19_icml,JGH18,LXSBSP19,COB19,OS19,BG17,CHS20,
T17,S17,BL20,BSMM19,LMZ20,HN20,Dan17,ZG19,DGM20,D20,WLLM19,YS20,FDZ19,SY19,CCGZ20}),
as well as for other algorithms (e.g. \citet{ZMG19,WDS19,Cetal19,ZSJBD17,GKLW16,BPSW20,LSSWY20,PE20}).
However, we are not aware of any theoretical works that study the momentum method in neural net training except the work \citet{KCH20}.
These authors show that SGD with Polyak's momentum (a.k.a. stochastic Heavy Ball) with infinitesimal step size, i.e. $\eta \rightarrow 0$, for training a one-hidden-layer network with an infinite number of neurons, i.e. $m \rightarrow \infty$, converges to a stationary solution.
However, the theoretical result does not show a faster convergence by momentum. 
In this paper we consider the discrete-time setting
and
nets with finitely many neurons. 
We 
provide a non-asymptotic convergence rate of Polyak's momentum, establishing a concrete improvement relative to the best-known rates for vanilla gradient descent.

Our setting
of training a ReLU network follows the same framework as previous results, including \citet{DZPS19,ADHLSW19_icml,ZY19}. 
Specifically,
we study training a one-hidden-layer ReLU neural net of the form, 
\begin{equation} \label{eq:Network}
\N_{W}^{\text{ReLU}}(x) := \frac{1}{\sqrt{m} } \sum_{r=1}^m a_r \sigma( \langle w^{(r)},  x \rangle ),
\end{equation}
where $\sigma(z):= z \cdot \mathbbm{1}\{ z \geq 0\}$ is the ReLU activation,
$w^{(1)}, \dots, w^{(m)}  \in \reals^d$ are the weights of $m$ neurons on the first layer, $a_1, \dots, a_m \in \reals$ are weights on the second layer, 
 and $\N_{W}^{\text{ReLU}}(x) \in \reals$ is the output predicted on input $x$. 
Assume $n$ number of samples $\{ x_i \in \reals^d \}_{i=1}^n$ is given.
Following \citet{DZPS19,ADHLSW19_icml,ZY19},
we define a Gram matrix $H \in \reals^{n \times n}$ for the weights $W$
and its expectation $\bar{H} \in \reals^{n \times n}$ over the random draws of $w^{(r)} \sim N(0,I_d) \in \reals^d$ 
whose $(i,j)$ entries are defined
as follows,
\begin{equation}
\begin{aligned}
& H(W)_{i,j}   =  \sum_{r=1}^m \frac{x_i^\top x_j}{m} \mathbbm{1}\{ \langle w^{(r)}, x_i \rangle \geq 0 \text{ } \&  \text{ }   \langle w^{(r)}, x_j \rangle \geq 0 \}
\\ & \quad
\bar{H}_{i,j}  := \underset{ w^{(r)}}{\mathsf{E}}
[ x_i^\top x_j \mathbbm{1}\{ \langle w^{(r)}, x_i \rangle \geq 0 \text{ } \&  \text{ }   \langle w^{(r)}, x_j \rangle \geq 0 \}     ] .
\end{aligned}
\end{equation}
The matrix $\bar{H}$ is also called a neural tangent kernel (NTK) matrix in the literature (e.g. \citet{JGH18,Y19,BM19}).
Assume that the smallest eigenvalue $\lambda_{\min}(\bar{H})$ is strictly positive and certain conditions about the step size and the number of neurons are satisfied.
Previous works \citep{DZPS19,ZY19} show a linear rate of vanilla gradient descent, while we show an accelerated linear rate 
\footnote{ 
We borrow the term ``accelerated linear rate'' from the convex optimization literature \citep{N13}, because the result here has a resemblance to those results in convex optimization, even though the neural network training is a non-convex problem.
}
of gradient descent with Polyak's momentum. As far as we are aware, our result is the first acceleration result of training an over-parametrized ReLU network. 

The second result is training a deep linear network.
The deep linear network is a canonical model for studying optimization and deep learning, and in particular for understanding gradient descent (e.g. \citet{BHL18,SMG14,HXP20}), studying the optimization landscape (e.g. \citet{K16,LvB18}), and establishing the effect of implicit regularization (e.g. \citet{MGWLSS20,JT19,LMZ18,RC20,ACHL19,GBL19,GWBNS17,LL20}).
In this paper, following \cite{DH19,HXP20}, we study training a $L$-layer linear network of the form,
\begin{equation} \label{eq:NetworkLinear}
\textstyle
\N_W^{L\text{-linear}}(x) := \frac{1}{\sqrt{m^{L-1} d_{y}}} \W{L} \W{L-1} \cdots \W{1} x,
\end{equation}
where $\W{l} \in \reals^{d_l \times d_{l-1}}$ is the weight matrix of the layer $l \in [L]$, and $d_0 = d$, $d_L = d_y$ and $d_l = m$ for $l \neq 1, L$.
Therefore, except the first layer $\W{1} \in \reals^{ m \times d}$ and the last layer $\W{L} \in \reals^{ d_y \times m}$, all the intermediate layers are $m \times m$ square matrices. 
The scaling $\frac{1}{\sqrt{m^{L-1} d_{y}}} $ is necessary to ensure that the network's output at the initialization 
$\N_{W_0}^{L\text{-linear}}(x)$ has the same size as that of the input $x$, in the sense that $\E[\| \N_{W_0}^{L\text{-linear}}(x) \|^2 ] = \| x \|^2$, where the expectation is taken over some appropriate random initialization of the network (see e.g. \citet{DH19,HXP20}).
\citet{HXP20} show vanilla gradient descent with orthogonal initialization converges linearly and the required width of the network $m$ is independent of the depth $L$, while we show an accelerated linear rate of Polyak's momentum and the width $m$ is also independent of $L$. To our knowledge, this is the first acceleration result 
of training a deep linear network.

A careful reader may be tempted by the following line of reasoning: a deep linear network (without activation) is effectively a simple linear model, and we already know that a linear model with the squared loss gives a quadratic objective for which Polyak's momentum exhibits an accelerated convergence rate. 
But this intuition, while natural, is not quite right: it is indeed nontrivial even to show that vanilla gradient descent provides a linear rate on deep linear networks \cite{HXP20,DH19,BHL18,ACGH19,HM16,WWM19,ZLG20}, as the optimization landscape is non-convex.
Existing works show that under certain assumptions, all the local minimum are global \cite{K16,LvB18,YSJ17,LK17,ZL18,HM16}. These results are not sufficient to explain the linear convergence of momentum, let alone the acceleration; see Section~\ref{app:exp} in the appendix for an empirical result.

Similarly, it is known that under the NTK regime the output of the ReLU network trained by gradient descent can be approximated by a linear model (e.g. \citet{HXAP20}). However, this result alone neither implies a global convergence of any algorithm nor characterizes the optimization landscape. 
While \cite{LZB20a} attempt to derive an algorithm-independent equivalence of a class of linear models and a family of wide networks, their result requires the activation function to be differentiable which does not hold for the most prevalent networks like ReLU. Also, their work heavily depends on the regularity of Hessian, making it hard to generalize beyond differentiable networks. Hence, while there has been some progress understanding training of wide networks through linear models, there remains a significant gap in applying this to the momentum dynamics of a non-differentiable networks.
\citet{LZB20b} establish an interesting connection between solving an over-parametrized non-linear system of equations and solving the classical linear system. They show that for smooth and twice differentiable activation, 
the optimization landscape of an over-parametrized network satisfies a (non-convex) notion called the Polyak-Lokasiewicz (PL) condition \citep{P63}, i.e. $
\frac{1}{2} \| \nabla \ell(w) \|^2 \geq \mu \left( \ell(w) - \ell(w_*) \right)$, where $w_*$ is a global minimizer and $\mu > 0$.
It is not clear whether their result can be extended to ReLU activation, however, and the 
existing result of \citet{DKB18} for the discrete-time Polyak's momentum under the PL condition does not give an accelerated rate nor is it better than that of vanilla GD.
\citet{ADR20} show a \emph{variant} of Polyak's momentum method having an accelerated rate in a \emph{continuous-time} limit for a problem that satisfies PL and has a unique global minimizer. 
It is unclear if their result is applicable to our problem. 
Therefore, showing the advantage of training the ReLU network and the deep linear network by using existing results of Polyak's momentum can be difficult.

To summarize, our contributions 
in the present work include
\begin{itemize}
\item 
In convex optimization, we show
an accelerated linear rate in the non-asymptotic sense for solving the class of the strongly convex quadratic problems via Polyak's momentum
(Theorem~\ref{thm:stcFull}).
We also provide an analysis of the accelerated local convergence  
for the class of functions in $F_{\mu,\alpha}^2$ (Theorem~\ref{thm:STC}). 
We establish a technical result (Theorem~\ref{thm:akv}) that helps to obtain these non-asymptotic rates.
\item 
In non-convex optimization, we show 
accelerated linear rates of the discrete-time Polyak's momentum for training an over-parametrized ReLU network and a deep linear network (Theorems~\ref{thm:acc} and~\ref{thm:LinearNet}).
\end{itemize}
Furthermore, we will develop a modular analysis to show all the results in this paper. We identify conditions and propose a meta theorem of acceleration when the momentum method exhibits a certain dynamic, which can be of independent interest.
We show that when applying Polyak's momentum for these problems, the induced dynamics exhibit a form where we can directly apply our meta theorem.

\section{Preliminaries} \label{sec:pre}

Throughout this paper, $\| \cdot \|_F$ represents the Frobenius norm and $\| \cdot \|_2$ represents the spectral norm of a matrix, while $\| \cdot \|$ represents $l_2$ norm of a vector. We also denote $\otimes$ the Kronecker product, $\sigma_{\max}(\cdot)=\| \cdot \|_2$ and $\sigma_{\min}(\cdot)$ the largest and the smallest singular value of a matrix respectively.

For the case of training neural networks, we will consider minimizing the squared loss 
\begin{equation}  \label{eq:obj}
\textstyle 
\ell(W):= \frac{1}{2} \sum_{i=1}^n \big(  y_i -   \N_{W}(x_i)    \big)^2,
\end{equation}
where $x_i \in \reals^{d}$ is the feature vector, $y_i \in \reals^{d_y}$
is the label of sample $i$, and there are $n$ number of samples.
For training the ReLU network, we have
$\N_{W}(\cdot) := \N_W^{\text{ReLU}}(\cdot)$, $d_y = 1$,
and $W:= \{ w^{(r)}  \}_{r=1}^m$,
while for the deep linear network, we have 
$\N_{W}(\cdot) := \N_W^{L\text{-linear}}(\cdot)$,
and $W$ represents the set of all the weight matrices, i.e. $W:= \{ W^{(l)}  \}_{l=1}^L$.
The notation $A^k$ represents the $k_{th}$ matrix power of $A$.

\subsection{Prior result of Polyak's momentum}

Algorithm~\ref{alg:HB1} and Algorithm~\ref{alg:HB2} show two equivalent presentations of gradient descent with Polyak's momentum. Given the same initialization, one can show that 
Algorithm~\ref{alg:HB1} and Algorithm~\ref{alg:HB2} generate exactly the same iterates during optimization. 

Let us briefly describe a prior acceleration result of Polyak's momentum. 
The recursive dynamics of Poylak's momentum for solving the
strongly convex quadratic problems (\ref{obj:strc})
can be written as
\begin{equation} \label{eq:A}
\textstyle
\begin{bmatrix}
w_{t+1} - w_* \\
w_{t} - w_*
\end{bmatrix}
=
\underbrace{
\begin{bmatrix}
I_d - \eta \Gamma + \beta I_d &  - \beta I_d \\
I_d                      &  0_d
\end{bmatrix} }_{:=A}
\cdot
 \begin{bmatrix}
w_{t} - w_* \\
w_{t-1} - w_* 
\end{bmatrix},
\end{equation}
where $w_*$ is the unique minimizer.
By a recursive expansion, one can get
\begin{equation} \label{eq:B}
\textstyle
\|
\begin{bmatrix}
w_{t} - w_* \\
w_{t-1} - w_*
\end{bmatrix}
\| \leq \| A^{t} \|_2 \| 
\begin{bmatrix}
w_{0} - w_* \\
w_{-1} - w_*
\end{bmatrix}
\|.
\end{equation}
Hence, it suffices to control the spectral norm of the matrix power
$\| A^{t} \|_2$ for obtaining a convergence rate. In the literature, this is achieved by using 
Gelfand's formula.
\begin{theorem} (\citet{G41}; see also \citet{F18}) (Gelfand's formula) \label{thm:Gelfand}
Let $A$ be a $d \times d$ matrix. Define the spectral radius $\rho(A) := \max_{i \in [d]} | \lambda_i(A)|$, where $\lambda_i(\cdot)$ is the $i_{th}$ eigenvalue.
Then, there exists a non-negative sequence $\{ \epsilon_t \}$ such that 
$\| A^t \|_2 = \left( \rho(A) + \epsilon_t \right)^t $
and
$\lim_{t \rightarrow \infty} \epsilon_t = 0$.
\end{theorem}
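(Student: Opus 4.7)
The plan is to establish the equivalent limit form $\lim_{t \to \infty} \|A^t\|_2^{1/t} = \rho(A)$, and then set $\epsilon_t := \|A^t\|_2^{1/t} - \rho(A) \geq 0$ to obtain the stated representation $\|A^t\|_2 = (\rho(A) + \epsilon_t)^t$ with $\epsilon_t \to 0$. I would split the argument into a matching lower and upper bound on $\|A^t\|_2^{1/t}$, proving the lower bound elementarily via an eigenvector and the upper bound via the Jordan canonical form.

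For the lower bound, pick an eigenvalue $\lambda \in \mathbb{C}$ with $|\lambda| = \rho(A)$ and a nonzero eigenvector $v \in \mathbb{C}^d$. Since $A^t v = \lambda^t v$, we have $\|A^t v\|_2 = \rho(A)^t \|v\|_2$, which immediately yields $\|A^t\|_2 \geq \rho(A)^t$, and hence $\|A^t\|_2^{1/t} \geq \rho(A)$ for every $t \geq 1$.

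For the upper bound, I would use the Jordan canonical form $A = P J P^{-1}$, where $J$ is block-diagonal with blocks of the form $J_k = \lambda_k I + N_k$ and each $N_k$ nilpotent of index at most $d$. Using that $\lambda_k I$ and $N_k$ commute, the binomial expansion gives $J_k^t = \sum_{j=0}^{d-1} \binom{t}{j} \lambda_k^{t-j} N_k^j$ for $t \geq d$, whose spectral norm is dominated by a polynomial $q(t)$ of degree at most $d-1$ multiplied by $\rho(A)^{t-(d-1)}$. Passing through the similarity transform yields $\|A^t\|_2 \leq \|P\|_2 \|P^{-1}\|_2 \cdot q(t) \cdot \rho(A)^{t-(d-1)}$, and since the prefactor satisfies $q(t)^{1/t} \to 1$ and $\rho(A)^{-(d-1)/t} \to 1$, we conclude $\limsup_{t \to \infty} \|A^t\|_2^{1/t} \leq \rho(A)$.

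The main obstacle is the degenerate case $\rho(A) = 0$, in which $A$ is nilpotent so $A^t = 0$ for all $t \geq d$; there the bound $\rho(A)^{t-(d-1)}$ is vacuous and one must directly set $\epsilon_t = 0$ for $t \geq d$, verifying the formula trivially. The only other subtlety is the polynomial-times-geometric bookkeeping over the Jordan blocks, which is routine once the dominant block is identified.
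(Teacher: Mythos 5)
Your proof is correct, but note that the paper does not prove this statement at all: Theorem~\ref{thm:Gelfand} is quoted as a classical result with a citation to \citet{G41} (see also the cited reference there), and the authors only use its conclusion, precisely to point out that the sequence $\{\epsilon_t\}$ has no quantifiable rate --- which is what motivates their Theorem~\ref{thm:akv}. So there is no in-paper argument to compare against; what you supply is the standard textbook proof specialized to the spectral norm, and it is sound. The lower bound $\|A^t\|_2 \geq \rho(A)^t$ via an eigenvector (or, equivalently, via $\rho(A)^t = \rho(A^t) \leq \|A^t\|_2$ for any operator norm) gives the non-negativity of $\epsilon_t := \|A^t\|_2^{1/t} - \rho(A)$, and the Jordan-form upper bound gives $\limsup_t \|A^t\|_2^{1/t} \leq \rho(A)$, so defining $\epsilon_t$ this way yields exactly the stated identity with $\epsilon_t \to 0$. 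Two small points worth tightening in a write-up: (i) in the bound on $J_k^t$ you should write $|\lambda_k|^{t-j} = |\lambda_k|^{t-(d-1)} |\lambda_k|^{d-1-j} \leq \rho(A)^{t-(d-1)} \max\{1,\rho(A)\}^{d-1}$ and absorb the constant $\max\{1,\rho(A)\}^{d-1}$ into $q(t)$, since $\rho(A)^{t-j} \leq \rho(A)^{t-(d-1)}$ is false when $\rho(A) > 1$; (ii) if $A$ is real its leading eigenvector may be complex, which is harmless because the operator $2$-norm of a real matrix over $\mathbb{C}^d$ equals its norm over $\reals^d$, but this deserves a sentence. Your handling of the nilpotent case $\rho(A)=0$ is also fine, and in fact the uniform definition $\epsilon_t = \|A^t\|_2^{1/t} - \rho(A)$ already covers it without a separate case.
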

We remark that there is a lack of the convergence rate of $\epsilon_t$ in
Gelfand's formula in general.

Denote $\kappa:= \alpha / \mu$ the condition number.
One can control the spectral radius $\rho(A)$
as $\rho(A) \leq  1 - \frac{2}{\sqrt{\kappa}+1}$ by choosing $\eta$ and $\beta$ appropriately,  which leads to the following result.
\begin{theorem} (\citet{P64}; see also \citet{LRP16,R18,M19}) \label{thm:polyak}
Gradient descent with Polyak's momentum with the step size $\eta = \frac{4}{(\sqrt{\mu}+\sqrt{\alpha})^2}$ and the momentum parameter $\beta = \left( 1 - \frac{2}{\sqrt{\kappa}+1} \right)^2$ has
\[
\|
\begin{bmatrix}
w_{t+1} - w_* \\
w_{t} - w_*
\end{bmatrix}
\| \leq \left(  1 - \frac{2}{\sqrt{\kappa}+1}  + \epsilon_t \right)^{t+1}
\begin{bmatrix}
w_{0} - w_* \\
w_{-1} - w_*
\end{bmatrix}
\|,
\]
where $\epsilon_t$ is a non-negative sequence that goes to zero.
\end{theorem}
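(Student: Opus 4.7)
The plan is to follow the two-step program already laid out in the preamble to the theorem: first bound the operator norm $\|A^t\|_2$ via Gelfand's formula (Theorem~\ref{thm:Gelfand}), and then compute the spectral radius $\rho(A)$ for the prescribed parameter choices. Combining $\|A^t\|_2 = (\rho(A) + \epsilon_t)^t$ with the recursion (\ref{eq:A})--(\ref{eq:B}) immediately gives the stated bound, so the substantive work is reduced to showing $\rho(A) = 1 - \frac{2}{\sqrt{\kappa}+1}$.

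To compute $\rho(A)$, I would diagonalize $\Gamma = U \Lambda U^\top$ with $\Lambda = \mathrm{diag}(\lambda_1,\ldots,\lambda_d)$ where $\lambda_i \in [\mu,\alpha]$, and observe that after a suitable similarity transformation (permuting the coordinates of the stacked vector so coordinates from the two blocks are interleaved), $A$ is similar to a block-diagonal matrix with $2\times 2$ blocks
\[
A_i = \begin{bmatrix} 1 - \eta \lambda_i + \beta & -\beta \\ 1 & 0 \end{bmatrix}.
\]
Hence $\rho(A) = \max_i \rho(A_i)$. The characteristic polynomial of each $A_i$ is $z^2 - (1-\eta\lambda_i+\beta)z + \beta = 0$, with discriminant $\Delta_i = (1-\eta\lambda_i+\beta)^2 - 4\beta$. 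When $\Delta_i \le 0$, the two eigenvalues are complex conjugates with common modulus $\sqrt{\beta}$.

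The key algebraic step is to verify that the prescribed $\eta$ and $\beta$ make $\Delta_i \le 0$ for every $\lambda_i \in [\mu, \alpha]$. Writing $\sqrt{\beta} = \frac{\sqrt{\kappa}-1}{\sqrt{\kappa}+1}$ and noting $(\sqrt{\mu}+\sqrt{\alpha})^2 = \mu(1+\sqrt{\kappa})^2$, a direct check yields $\eta \mu = \frac{4}{(1+\sqrt{\kappa})^2} = (1-\sqrt{\beta})^2 = 1+\beta - 2\sqrt{\beta}$ and analogously $\eta \alpha = \frac{4\kappa}{(1+\sqrt{\kappa})^2} = (1+\sqrt{\beta})^2 = 1+\beta + 2\sqrt{\beta}$. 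Consequently $1 + \beta - \eta\lambda \in [-2\sqrt{\beta}, 2\sqrt{\beta}]$ for all $\lambda \in [\mu,\alpha]$, i.e.\ $\Delta_i \le 0$ throughout, with equality exactly at the endpoints $\lambda_i \in \{\mu,\alpha\}$. Therefore $\rho(A_i) = \sqrt{\beta}$ for every $i$, and so $\rho(A) = \sqrt{\beta} = 1 - \frac{2}{\sqrt{\kappa}+1}$.

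The main subtlety I anticipate is the equality case $\Delta_i = 0$ at $\lambda_i \in \{\mu,\alpha\}$: at these boundary eigenvalues the corresponding $2\times 2$ block is defective (Jordan form with a repeated eigenvalue), so $\|A_i^t\|_2$ grows like $t \rho(A_i)^{t-1}$ rather than $\rho(A_i)^t$. This is precisely why the statement is framed asymptotically using Gelfand's formula rather than as a clean geometric bound: the polynomial factor in $t$ gets absorbed into a vanishing sequence $\epsilon_t$ with $(\rho(A) + \epsilon_t)^t \ge t \rho(A)^{t-1}$. Once we accept this asymptotic form from Theorem~\ref{thm:Gelfand}, concatenating the spectral radius computation above with (\ref{eq:B}) finishes the proof.
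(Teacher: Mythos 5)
Your proposal is correct and follows essentially the same route the paper sketches for this prior result: expand the residual recursion (\ref{eq:A})--(\ref{eq:B}), control $\rho(A)$ under the prescribed $\eta,\beta$ via the $2\times 2$ block reduction (the same decomposition the paper reuses in Lemma~\ref{lem:diagonal}), and invoke Gelfand's formula (Theorem~\ref{thm:Gelfand}) to convert $\rho(A)=1-\frac{2}{\sqrt{\kappa}+1}$ into the asymptotic bound with the sequence $\epsilon_t$. Your remark about the defective blocks at $\lambda_i\in\{\mu,\alpha\}$ correctly identifies why the rate is only asymptotic, which is exactly the shortcoming the paper's Theorem~\ref{thm:akv} is designed to remove.
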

That is, when $t \rightarrow \infty$, Polyak's momentum has the
 $(  1 - \frac{2}{\sqrt{\kappa}+1})$ rate, which has a better dependency on the condition number $\kappa$ than the
$1-\Theta(\frac{1}{\kappa})$ rate of vanilla gradient descent.
A concern is that the bound is not quantifiable for a finite $t$.
On the other hand,
we are aware of a different analysis 
that leverages Chebyshev polynomials
instead of Gelfand's formula (e.g. \citet{LB18}), which manages to obtain a $t (1- \Theta(\frac{1}{\sqrt{\kappa}}) )^t$ convergence rate.
So the accelerated linear rate is still obtained in an asymptotic sense.
Theorem~9 in \citet{CGZ19} shows a rate
$\max \{\bar{C}_1,  t \bar{C}_2 \} (1- \Theta(\frac{1}{\sqrt{\kappa}})^t )$ 
for some constants $\bar{C}_1$ and $\bar{C}_2$ under the same choice of the momentum parameter and the step size as Theorem~\ref{thm:polyak}. 
However, for a large $t$, the dominant term could be $ t (1- \Theta(\frac{1}{\sqrt{\kappa}})^t)$. 
In this paper, we aim at obtaining a bound that (I) holds for a wide range of values of the parameters, (II) has a dependency on the squared root of the condition number $\sqrt{\kappa}$, (III) is quantifiable in each iteration and is better than the rate $t (1- \Theta(\frac{1}{\sqrt{\kappa}}) )^t $.

\subsection{(One-layer ReLU network) Settings and Assumptions}
The ReLU activation is not differentiable at zero. So 
for solving (\ref{eq:obj}), 
we will replace the notion of gradient in Algorithm~\ref{alg:HB1} and \ref{alg:HB2} with subgradient
$\frac{ \partial \ell(W_t)}{ \partial w_t^{(r)} }  := \frac{1}{\sqrt{m}} \sum_{i=1}^n \big( \N_{W_t}(x_i) - y_i \big) a_r \cdot \mathbbm{1}[ \langle w_t^{(r)}, x_i \rangle \geq 0]  x_i $ 
and update the neuron $r$ as 
$
w_{t+1}^{(r)} = w_t^{(r)} - \eta \frac{ \partial \ell(W_t)}{ \partial w_t^{(r)} } + \beta \big(  w_t^{(r)} - w_{t-1}^{(r)}  \big).
$
\begin{figure}[t]
  \centering
    \includegraphics[width=0.4\textwidth]{./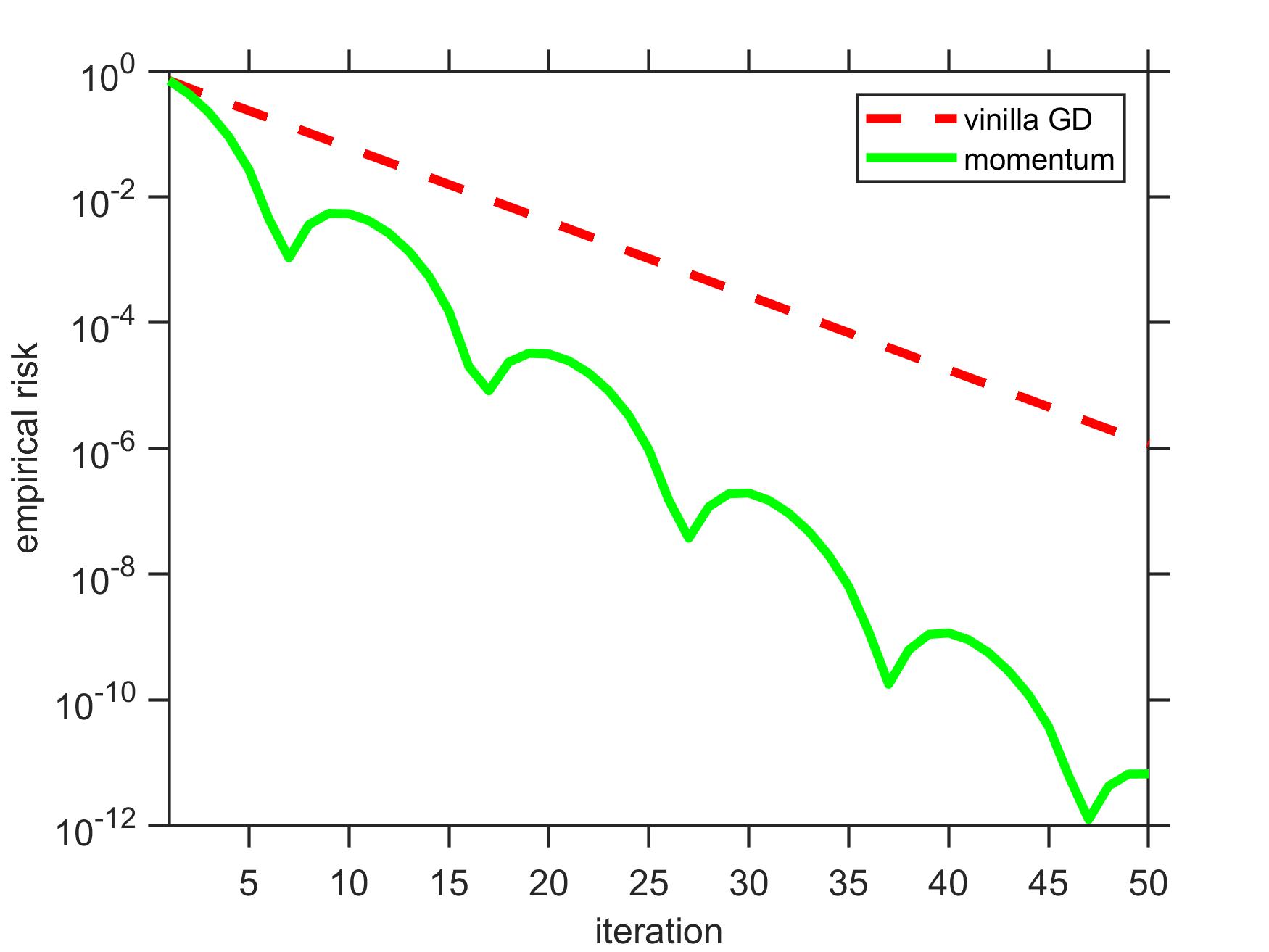}
    \caption{\footnotesize Empirical risk $\ell(W_t)$ vs. iteration $t$. Polyak's momentum accelerates the optimization process of training an over-parametrized one-layer ReLU network. Experimental details are available in Appendix~\ref{app:exp}. }
        \label{fig:exp} 
\end{figure}
As described in the introduction,
we assume that the smallest eigenvalue of the Gram matrix $\bar{H} \in \reals^{n \times n }$ is strictly positive, i.e. $\lambda_{\min}( \bar{H} ) > 0 $.
We will also denote the largest eigenvalue of the Gram matrix $\bar{H}$ as
$\lambda_{\max}( \bar{H} )$ and
denote the condition number of the Gram matrix as $\kappa := \frac{\lambda_{\max}(\bar{H}) }{ \lambda_{\min}(\bar{H}) }$.
\citet{DZPS19} show that the strict positiveness assumption is indeed mild.
Specifically, they show that if no two inputs are parallel, then the least eigenvalue is strictly positive. 
\citet{PSG2020} were able to provide a quantitative lower bound
under certain conditions.
Following the same framework of \citet{DZPS19},
we consider that each weight vector $w^{(r)} \in \reals^d$ is initialized according to the normal distribution, i.e. $w^{(r)} \sim N(0,I_d)$, 
and each $a_r \in R$ is sampled from the Rademacher distribution, 
i.e. $a_r =1$ with probability 0.5; and $a_r=-1$ with probability $0.5$.
We also assume $\| x_i \| \leq 1$ for all samples $i$.
As the previous works (e.g. \citet{LL18,JT20,DZPS19}), we consider only training the first layer $\{w^{(r)} \}$
and the second layer $\{a_r\}$ is fixed throughout the iterations. We will
denote $u_t \in \reals^n$ whose $i_{th}$ entry is the network's prediction for sample $i$, i.e. $u_t[i] := \N_{W_t}^{\text{ReLU}}(x_i)$ in iteration $t$
and denote $y \in \reals^n$ the vector whose $i_{th}$ element is the label of sample $i$. 
The following theorem
is a prior result due to \citet{DZPS19}.

\begin{theorem} (Theorem 4.1 in \citet{DZPS19}) \label{thm:du}
Assume that $\lambda := \lambda_{\min}( \bar{H}) / 2 > 0$ and that $w_0^{(r)} \sim N(0,I_d)$ and $a_r$ uniformly sampled from $\{-1,1\}$. 
Set the number of nodes $m = \Omega( \lambda^{-4} n^6 \delta^{-3})$ and the constant step size $\eta = O(\frac{\lambda}{n^2})$.
Then, with probability at least $1-\delta$ over the random initialization,
vanilla gradient descent, i.e. Algorithm~\ref{alg:HB1}\&~\ref{alg:HB2} with $\beta=0$, has
$\|
u_t - y 
\|^2 \leq \left( 1 - \eta  \lambda \right)^{t}
\cdot
\|
u_0 - y
\|^2.
$
\end{theorem}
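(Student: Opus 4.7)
The plan is to establish the linear rate by a coupled induction that controls simultaneously (a) the closeness of the time-varying Gram matrix $H(W_t)$ to its infinite-width expectation $\bar H$, and (b) the cumulative weight displacement $\|w_t^{(r)} - w_0^{(r)}\|$. The dynamics of the residual $u_t - y$ can be written, by expanding $u_{t+1}[i] - u_t[i]$ using the gradient formula for $\partial \ell / \partial w_t^{(r)}$, as
\[
u_{t+1} - y \;=\; (I - \eta H(W_t))(u_t - y) \;+\; e_t,
\]
where $e_t$ collects the second-order error coming from those neurons whose activation indicator $\mathbbm{1}\{\langle w_t^{(r)}, x_i\rangle \geq 0\}$ flips between steps $t$ and $t+1$. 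If one can guarantee $\lambda_{\min}(H(W_t)) \geq \lambda$ and $\|e_t\|$ is a lower-order perturbation, then taking norms and choosing $\eta = O(\lambda/n^2)$ (to dominate $\|H(W_t)\|_2 = O(n)$) yields the desired contraction $\|u_{t+1}-y\|^2 \leq (1-\eta\lambda)\|u_t - y\|^2$.

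First, I would show $\|H(W_0) - \bar H\|_2 \leq \lambda/2$ with probability $\geq 1-\delta/3$ by a Hoeffding bound on each of the $n^2$ entries (each is a mean of $m$ bounded independent random variables in $w_0^{(r)}$) followed by a union bound; this forces $m = \Omega(n^2 \log(n/\delta)/\lambda^2)$. Next, I would prove a perturbation lemma: if $\|w_s^{(r)} - w_0^{(r)}\| \leq R$ for every $r$ and every $s \leq t$, then $\|H(W_s) - H(W_0)\|_2 \leq \lambda/2$. The mechanism is that a neuron's pattern on sample $x_i$ can change only when $|\langle w_0^{(r)}, x_i\rangle| \leq R$, and the anti-concentration (small-ball) probability of this event for a Gaussian is $O(R)$; averaging over the $m$ neurons and over $n$ samples, together with a Markov/Bernstein-style concentration, gives the bound provided $R = O(\lambda/n)$. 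Combining the two pieces yields $\lambda_{\min}(H(W_t)) \geq \lambda$, which is the ingredient feeding back into the contraction above.

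The induction is closed by bounding the total displacement under the inductive hypothesis of linear decay. Using the gradient formula and $\|\partial\ell/\partial w_s^{(r)}\| \leq \frac{1}{\sqrt m}\sqrt{n}\,\|u_s - y\|$, a telescoping sum gives
\[
\|w_t^{(r)} - w_0^{(r)}\| \;\leq\; \sum_{s=0}^{t-1} \eta \|\tfrac{\partial \ell}{\partial w_s^{(r)}}\|
\;\leq\; \frac{\eta \sqrt{n}}{\sqrt m} \sum_{s=0}^{t-1} (1-\eta\lambda)^{s/2}\|u_0 - y\|
\;=\; O\!\left(\tfrac{\sqrt n\,\|u_0-y\|}{\sqrt m\, \lambda}\right),
\]
and since $\|u_0 - y\|^2 = O(n)$ with high probability at initialization, this is $O(n/(\sqrt m\,\lambda))$. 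Requiring this to be at most the allowed $R = O(\lambda/n)$ forces $m = \Omega(n^4/\lambda^4)$, and tightening the bad-event probabilities for the activation-flip counts across $n$ samples, $m$ neurons, and all iterations by a more refined union/Bernstein argument is what accounts for the extra $n^2 \delta^{-3}$ factors in the stated $m = \Omega(\lambda^{-4}n^6\delta^{-3})$. Finally, I would verify $\|e_t\|$ is $O(\eta \lambda)\|u_t - y\|$ (higher order in the perturbation), so it can be absorbed into the contraction without degrading the rate.

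The main obstacle is the circularity: the linear convergence rate is used to bound the cumulative weight movement, and the smallness of that movement is exactly what is needed to guarantee the Gram matrix remains well-conditioned and hence that the contraction actually holds. Making the induction close quantitatively --- in particular, choosing $R$ and $m$ so that the tail probabilities of the anti-concentration and flip-count events fit inside the $\delta$ budget while keeping the displacement below $R$ --- is the delicate bookkeeping, and is the source of the seemingly large polynomial $m = \Omega(\lambda^{-4}n^6\delta^{-3})$ in the hypothesis.
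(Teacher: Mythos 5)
This theorem is not proved in the paper at all---it is imported verbatim as Theorem~4.1 of \citet{DZPS19}, so there is no in-paper proof to compare against. Your sketch correctly reconstructs the original DZPS19 argument (the contraction $u_{t+1}-y=(I-\eta H(W_t))(u_t-y)+e_t$ together with a coupled induction on Gram-matrix stability and cumulative weight displacement), and is sound in outline; the only loose point is attributing the $n^6\delta^{-3}$ factor to a ``Bernstein'' refinement, whereas in the original analysis it comes from cruder Markov-type bounds on $\|u_0-y\|^2=O(n/\delta)$ and on the activation-flip counts.
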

Later \citet{ZY19} improve the network size $m$ to
$m =\Omega( \lambda^{-4} n^4 \log^3 ( n / \delta))$.
\citet{WDW19} provide an improved analysis over \citet{DZPS19}, which shows that the step size $\eta$ of vanilla gradient descent can be set as $\eta =  \frac{1}{c_1 \lambda_{\max}(\bar{H})}$ for some quantity $c_1>0$. The result in turn leads to a convergence rate 
$( 1 - \frac{1}{c_2 \kappa}  )$ for some quantity $c_2>0$. 
However, the quantities $c_1$ and $c_2$ are not universal constants and actually depend on the problem parameters $\lambda_{\min}(\bar{H})$, $n$, and $\delta$.
A question that we will answer in this paper is ``\textit{Can Polyak's momentum achieve an accelerated linear rate $\left( 1- \Theta(\frac{1}{\sqrt{\kappa}}) \right)$, where the factor $\Theta(\frac{1}{\sqrt{\kappa}})$ does not depend on any other problem parameter?}''. 

\subsection{(Deep Linear network) Settings and Assumptions}

For the case of deep linear networks, we will denote  
$X := [x_1,\dots, x_n] \in \reals^{d \times n}$
the data matrix
 and $Y:= [y_1, \dots, y_n] \in \reals^{d_y \times n}$  
 the corresponding label matrix. 
We will also
denote $\bar{r} := rank(X)$ and the condition number $\kappa := \frac{\lambda_{\max}( X^\top X)}{ \lambda_{\bar{r}}(X^\top X)}$.
Following \citet{HXP20},
we will assume that the linear network
is initialized by
the orthogonal initialization,
which is conducted by sampling uniformly from (scaled) orthogonal matrices such that
 $(\W{1}_0)^\top \W{1}_0 = m I_{d}$,
 $\W{L}_0 (\W{L}_0)^\top  = m I_{d_y}$, and $(\W{l}_0)^\top \W{l}_0= \W{l}_0 (\W{l}_0)^\top  = m I_{m}$
for layer $2 \leq l \leq L-1$. 
We will denote $\W{j:i} := W_j W_{j-1} \cdots W_i = \Pi_{l=i}^j W_l$, where $1 \leq i \leq j \leq L$ and $\W{i-1:i} = I$. We also denote the network's output 
$\textstyle
U := \frac{1}{\sqrt{m^{L-1} d_y} } \W{L:1}X \in \reals^{d_y \times n}.
$

In our analysis, following \citet{DH19,HXP20},
we will further assume that (A1) there exists a $W^*$ such that $Y = W^* X$, $X \in \reals^{d \times \bar{r}}$, and $\bar{r}=rank(X)$, which is actually without loss of generality (see e.g. the discussion in Appendix B of \citet{DH19}).
\begin{theorem} (Theorem 4.1 in \citet{HXP20}) \label{thm:hu}
Assume (A1) and the use of the orthogonal initialization.
Suppose the width of the deep linear network satisfies $m 
\geq C \frac{\|X \|^2_F}{\sigma^2_{\max}(X)} \kappa^2 \left( d_y (1 + \| W_* \|^2_2) + \log (\bar{r} /\delta)  \right)$ and $m \geq \max \{d_x,d_y \}$ 
for some $\delta \in (0,1)$ and a sufficiently large constant $C> 0$.
Set the constant step size $\eta = \frac{d_y}{2 L \sigma^2_{\max}(X) }$.
Then, with probability at least $1-\delta$ over the random initialization,
vanilla gradient descent, i.e. Algorithm~\ref{alg:HB1}\&~\ref{alg:HB2} with $\beta=0$, has
$\|
U_t - Y 
\|^2_F \leq \left( 1 - \Theta(\frac{1}{\kappa}) \right)^{t}
\cdot
\|
U_0 - Y
\|^2_F.$
\end{theorem}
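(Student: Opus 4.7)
The plan is to track the residual $U_t - Y$ directly and show that its one-step dynamics reduce, to leading order in $\eta$, to a contracting matrix recursion whose rate is $1-\Theta(1/\kappa)$. Expanding the product $\W{L:1}_{t+1} = \prod_{l=L}^{1} (\W{l}_t - \eta \nabla_{\W{l}} \ell(W_t))$ and retaining only the terms linear in $\eta$ gives an update of the form
\begin{equation*}
U_{t+1} - Y = (U_t - Y) - \frac{\eta}{m^{L-1} d_y} \sum_{l=1}^L \W{L:l+1}_t (\W{L:l+1}_t)^\top (U_t - Y) X^\top (\W{l-1:1}_t)^\top \W{l-1:1}_t X + R_t,
\end{equation*}
where $R_t$ collects the higher-order-in-$\eta$ cross terms from the product. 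At the orthogonal initialization $\W{L:l+1}_0 (\W{L:l+1}_0)^\top = m^{L-l} I_{d_y}$ and $(\W{l-1:1}_0)^\top \W{l-1:1}_0 = m^{l-1} I_d$, which collapses the leading sum to $(U_t - Y)\big(I - (\eta L / d_y) X^\top X\big)$. With the prescribed $\eta = d_y / (2 L \sigma_{\max}^2(X))$, the spectral norm of $I - (\eta L / d_y) X^\top X$ restricted to the row space of $X$ is at most $1 - 1/(2\kappa)$; since assumption (A1) forces $U_t - Y$ to live in that row space, this already gives the claimed per-step contraction of $\|U_t - Y\|_F^2$ at initialization.

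The second step is an induction on $t$ showing that the same contraction survives throughout training. Suppose $\|U_s - Y\|_F^2 \leq (1 - c/\kappa)^s \|U_0 - Y\|_F^2$ for all $s \leq t$ and some constant $c > 0$. The per-layer drift is then controlled by
\begin{equation*}
\|\W{l}_t - \W{l}_0\|_2 \leq \sum_{s=0}^{t-1} \eta \|\nabla_{\W{l}} \ell(W_s)\|_2,
\end{equation*}
and substituting a chain-rule bound $\|\nabla_{\W{l}} \ell(W_s)\|_2 \leq C\sqrt{m^{L-1}/d_y}\,\|U_s - Y\|_F$ and summing the geometric series yields a total drift proportional to $\sqrt{m^{L-1}/d_y}\cdot\kappa\cdot\|U_0 - Y\|_F/\sigma_{\min}(X)$. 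Once $m$ meets the theorem's lower bound, this drift is small enough (relative to $\sqrt{m}$) that both $\W{L:l+1}_t (\W{L:l+1}_t)^\top$ and $(\W{l-1:1}_t)^\top \W{l-1:1}_t$ stay within spectral relative error $O(1/\kappa)$ of their values at $t=0$, and the higher-order remainder $R_t$ is dominated by the leading contraction. Plugging these perturbation bounds back in preserves a $1 - \Theta(1/\kappa)$ contraction with a slightly smaller constant, closing the induction.

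The main obstacle I anticipate is keeping the required width $m$ independent of the depth $L$. A naive spectral-norm estimate of the perturbed product $\prod_l (\W{l}_0 + \Delta_l)$ accumulates error multiplicatively over $L$ factors, which quickly forces $m$ to scale polynomially in $L$. The fix (as in \citet{HXP20}) is to use matrix concentration for products of Haar-distributed orthogonal matrices together with the observation that the per-step movement of each layer is itself driven by the residual, which decays geometrically at rate $\Theta(1/\kappa)$, so the cumulative drift telescopes to a quantity whose $L$-dependence is only logarithmic. Carefully calibrating these concentration inequalities so that the deviation of $\W{L:l+1}_t (\W{L:l+1}_t)^\top$ from $m^{L-l} I_{d_y}$ stays $O(1/\kappa)$ with $m$ essentially depth-free, and verifying that the row-space structure needed to extract the contraction from $R_t$ survives the perturbation, is where most of the technical work lives; the remaining steps (converting weight movement to gradient norm, gradient norm to $\sqrt{\ell(W_t)}$, and chaining the recursion across iterations) are routine.
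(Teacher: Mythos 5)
Your overall strategy is the right one, and it is essentially the route of \citet{HXP20} (mirrored, for the momentum case, by Lemma~\ref{lem:DL-residual}, Lemma~\ref{lem:DL-A}, and Lemmas~\ref{lem:linear-deviate2}--\ref{lem:linear-eigen} in Appendix~\ref{app:sec:linear}): expand the layer-wise product to first order in $\eta$, use the exact isometry of the partial products at orthogonal initialization to collapse the leading term to $(U_t-Y)\big(I-\tfrac{\eta L}{d_y}X^\top X\big)$, observe that the rows of $U_t-Y$ lie in the row space of $X$ under (A1), and close an induction in which the weight drift controls both the perturbation of the partial products and the higher-order remainder. However, one of your quantitative steps is wrong as written and would break the induction. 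The chain-rule bound should read $\|\partial\ell/\partial \W{l}_s\|_F \le \tfrac{1}{\sqrt{m^{L-1}d_y}}\,\sigma_{\max}(\W{L:l+1}_s)\,\sigma_{\max}(\W{l-1:1}_s)\,\|X\|_2\,\|U_s-Y\|_F = O\big(\|X\|_2\,\|U_s-Y\|_F/\sqrt{d_y}\big)$: the $1/\sqrt{m^{L-1}d_y}$ normalization of the network cancels the $m^{(L-1)/2}$ growth of the partial products (compare (\ref{eq:gnorm-linear})). Your stated bound $C\sqrt{m^{L-1}/d_y}\,\|U_s-Y\|_F$ is too large by a factor $m^{(L-1)/2}$, and the resulting drift estimate $\sqrt{m^{L-1}/d_y}\cdot\kappa\cdot\|U_0-Y\|_F/\sigma_{\min}(X)$ would dwarf $\sqrt{m}$ for any admissible width, so the claim that the drift is ``small relative to $\sqrt m$'' could not be closed. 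With the corrected bound the cumulative drift is $O\big(\eta\,\kappa\,\|X\|_2\,\|U_0-Y\|_F/\sqrt{d_y}\big)=O\big(\sqrt{d_y}\,\kappa\,B_0/(L\,\sigma_{\max}(X))\big)$, which is exactly the radius $R^{L\text{-linear}}$ of Lemma~\ref{lem:linear-deviate2} (without the extra $C_0$ of the momentum case), and the condition $m\gtrsim (L\,R^{L\text{-linear}})^2$ then reproduces the theorem's width requirement.

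Your closing paragraph also misidentifies the mechanism behind the depth-independence of $m$. No matrix concentration for products of Haar-distributed orthogonal matrices is needed, and the $L$-dependence of the cumulative drift is not logarithmic: at orthogonal initialization every partial product satisfies $\sigma_{\max}(\W{j:i}_0)=\sigma_{\min}(\W{j:i}_0)=m^{(j-i+1)/2}$ \emph{deterministically} (Lemma~\ref{lem:DL-A}); the only probabilistic ingredient is the bound $\ell(W_0)\le B_0^2$ on the initial residual. The multiplicative accumulation over $L$ perturbed factors is handled by the elementary binomial estimate $(\sqrt m+R)^{j-i+1}-(\sqrt m)^{j-i+1}\le(\sqrt m)^{j-i+1}\big((1+R/\sqrt m)^{L}-1\big)$ of Lemma~\ref{lem:linear-eigen}, and this is small precisely because $\eta\propto 1/L$ makes the per-layer drift $R\propto 1/L$, so that $LR/\sqrt m$ is $O(1/\sqrt m)$ under the stated width; that is where the depth-free width comes from. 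Once these two points are repaired, your argument coincides with the known proof.
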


\section{Modular Analysis} \label{sec:meta}

In this section,
we will provide a meta theorem for the following dynamics of the residual vector $\xi_t \in \reals^{n_0} $,
\begin{equation} \label{eq:meta}
\begin{split}
\begin{bmatrix}
\xi_{t+1} \\
\xi_{t} 
\end{bmatrix}
 &
=
\begin{bmatrix}
I_{n_0} - \eta H + \beta I_{n_0} & - \beta  I_{n_0}   \\
I_{n_0} & 0_{n_0} 
\end{bmatrix}
\begin{bmatrix}
\xi_{t} \\
\xi_{t-1} 
\end{bmatrix}
+
\begin{bmatrix}
\varphi_t \\ 0_{n_0}
\end{bmatrix}
,
\end{split}
\end{equation}
where $\eta$ is the step size, $\beta$ is the momentum parameter,
$H \in \reals^{n_0 \times n_0}$ is a PSD matrix, $\varphi_t \in \reals^{n_0}$ is some vector, and $I_{n_0}$ is the $n_0 \times n_0$-dimensional identity matrix. Note that $\xi_t$ and $\varphi_t$ depend on the underlying model learned at iteration $t$, i.e. depend on $W_t$.

We first show that the residual dynamics of Polyak's momentum for solving all the four problems in this paper are in the form of \eqref{eq:meta}.
The proof of the following lemmas (Lemma~\ref{lem:SC-residual},~\ref{lem:ReLU-residual}, and~\ref{lem:DL-residual}) are available in Appendix~\ref{app:sec:meta}.

\subsection{Realization: Strongly convex quadratic problems} \label{inst:stc}

One can easily see that the dynamics of Polyak's momentum (\ref{eq:A}) for solving the strongly convex quadratic problem (\ref{obj:strc}) is in the form of (\ref{eq:meta}). We thus have the following lemma.

\begin{lemma} \label{lem:stc-residual} 
Applying Algorithm~\ref{alg:HB1} or Algorithm~\ref{alg:HB2} to solving the class of strongly convex quadratic problems (\ref{obj:strc})  induces a residual dynamics in the form of 
(\ref{eq:meta}), where 
$\xi_t  = w_t - w_* (\text{and hence } n_0 = d), 
 H    = \Gamma,
\varphi_t = 0_d.
$
\end{lemma}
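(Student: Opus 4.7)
The plan is essentially a direct verification, since the dynamics in display (\ref{eq:A}) has already been derived in the excerpt; we simply need to rewrite it in the template form (\ref{eq:meta}) and identify the matching pieces.

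First I would start from the Version~2 update rule in Algorithm~\ref{alg:HB2} applied to the quadratic objective (\ref{obj:strc}). The gradient is $\nabla \ell(w) = \Gamma w + b$, so the update reads
\[
w_{t+1} = w_t - \eta(\Gamma w_t + b) + \beta(w_t - w_{t-1}).
\]
Since $\Gamma$ is positive definite (as $\mu>0$), the unique minimizer satisfies $\Gamma w_* = -b$. Substituting $b = -\Gamma w_*$ and subtracting $w_*$ from both sides yields
\[
w_{t+1}-w_* = (I_d - \eta \Gamma + \beta I_d)(w_t - w_*) - \beta(w_{t-1}-w_*).
\]

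Next I would set $\xi_t := w_t - w_*$ (so $n_0 = d$) and append the trivial identity $\xi_t = \xi_t$ as a second block row, producing the $2d \times 2d$ block system
\[
\begin{bmatrix} \xi_{t+1} \\ \xi_t \end{bmatrix}
= \begin{bmatrix} I_d - \eta\Gamma + \beta I_d & -\beta I_d \\ I_d & 0_d \end{bmatrix}
\begin{bmatrix} \xi_t \\ \xi_{t-1} \end{bmatrix}.
\]
Comparing this with the template (\ref{eq:meta}), the match is immediate with $H = \Gamma$ (which is PSD by assumption) and $\varphi_t = 0_d$, since there is no inhomogeneous forcing term in the quadratic case.

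There is no real obstacle here: the only thing worth remarking on is that one should verify the two algorithmic versions give the same iterates (stated but not proved in the excerpt) so that the Version~2 manipulation above is legitimate for both, and that the choice $\varphi_t \equiv 0$ reflects the fact that the gradient is exactly linear in $\xi_t$ (no higher-order remainder), which is special to the quadratic case and will \emph{not} hold in the subsequent realizations for the ReLU and deep linear networks.
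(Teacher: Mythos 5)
Your verification is correct and follows essentially the same route as the paper, which treats the lemma as an immediate consequence of the dynamics already displayed in (\ref{eq:A}) matching the template (\ref{eq:meta}) with $H=\Gamma$ and $\varphi_t=0_d$. Your explicit derivation of (\ref{eq:A}) from the update rule via $\Gamma w_* = -b$ simply fills in the short computation the paper leaves to the reader.
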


\subsection{Realization: Solving $F_{\mu,\alpha}^2$} \label{inst:SC}
A similar result holds for optimizing functions in $F_{\mu,\alpha}^2$.
\begin{lemma} \label{lem:SC-residual} 
Applying Algorithm~\ref{alg:HB1} or Algorithm~\ref{alg:HB2} to minimizing a function $f(w) \in F_{\mu,\alpha}^2$ induces a residual dynamics in the form of 
(\ref{eq:meta}), where 
$\xi_t  = w_t - w_* $,
$H = \int_0^1 \nabla^2 f\big( (1-\tau) w_0 + \tau w_* \big) d \tau$,
$\varphi_t  = \eta \big( \int_0^1 \nabla^2 f\big( (1-\tau) w_0 + \tau w_* \big) d \tau -  \int_0^1 \nabla^2 f\big( (1-\tau) w_t + \tau w_* \big) d \tau \big) (w_t - w_*)$, where
$w_*:=\arg\min_w f(w)$.
\end{lemma}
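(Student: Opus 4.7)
\textbf{Proof proposal for Lemma~\ref{lem:SC-residual}.} The plan is to reduce the momentum update to a linear recursion plus a ``drift'' term $\varphi_t$ that captures the error of linearizing $\nabla f$ around a quadratic approximation whose Hessian is frozen at $w_0$. The key analytic tool is the fundamental theorem of calculus applied to $\nabla f$: since $f \in F_{\mu,\alpha}^2$ and $w_* = \arg\min_w f(w)$ satisfies $\nabla f(w_*) = 0$, for every iterate $w_t$ we have
\begin{equation*}
\nabla f(w_t) \;=\; \nabla f(w_t) - \nabla f(w_*) \;=\; H_t\,(w_t - w_*), \qquad H_t \;:=\; \int_0^1 \nabla^2 f\big((1-\tau)w_t + \tau w_*\big)\, d\tau.
\end{equation*}
This identity is the one nontrivial ingredient; the rest is bookkeeping.

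Starting from the Version~2 form (Algorithm~\ref{alg:HB2}), I would subtract $w_*$ from both sides of $w_{t+1} = w_t - \eta \nabla f(w_t) + \beta(w_t - w_{t-1})$ and substitute the integral representation above to obtain
\begin{equation*}
w_{t+1} - w_* \;=\; (I_d + \beta I_d - \eta H_t)(w_t - w_*) - \beta(w_{t-1} - w_*).
\end{equation*}
Because the meta-dynamics \eqref{eq:meta} requires a single, iteration-independent PSD matrix $H$, the next step is to freeze the Hessian at the initial point: set $H := H_0 = \int_0^1 \nabla^2 f\big((1-\tau)w_0 + \tau w_*\big)\,d\tau$, and add and subtract $\eta H (w_t - w_*)$ to get
\begin{equation*}
w_{t+1} - w_* \;=\; (I_d - \eta H + \beta I_d)(w_t - w_*) - \beta(w_{t-1} - w_*) + \eta(H - H_t)(w_t - w_*).
\end{equation*}
Defining $\xi_t := w_t - w_*$ and $\varphi_t := \eta(H - H_t)(w_t - w_*)$ — which is exactly the expression stated in the lemma — this is the top block of \eqref{eq:meta}. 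Appending the trivial identity $\xi_t = \xi_t$ as the second block gives the $2n_0 \times 2n_0$ matrix form with $n_0 = d$, completing the derivation.

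Positive semidefiniteness of $H$ follows from $\mu$-strong convexity (each $\nabla^2 f(\cdot) \succeq \mu I_d$, so the average along the segment is also $\succeq \mu I_d$), which is needed so that $H$ qualifies as a PSD matrix in the sense of \eqref{eq:meta}. I do not anticipate any real obstacle: the only subtlety is the algebraic trick of inserting $\pm \eta H (w_t - w_*)$ so that the time-varying Hessian $H_t$ is replaced by the fixed $H$ at the cost of the drift $\varphi_t$; this is the device that allows the meta theorem to be applied. The same derivation would be mimicked by Lemma~\ref{lem:stc-residual} in the purely quadratic case, where $H_t \equiv \Gamma$ and consequently $\varphi_t = 0$.
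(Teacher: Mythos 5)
Your proposal is correct and follows essentially the same route as the paper's own proof: apply the fundamental theorem of calculus to write $\nabla f(w_t) = \big(\int_0^1 \nabla^2 f((1-\tau)w_t + \tau w_*)\,d\tau\big)(w_t - w_*)$ using $\nabla f(w_*)=0$, then freeze the averaged Hessian at $w_0$ and absorb the difference into $\varphi_t = \eta\,(H - H_t)(w_t - w_*)$. The bookkeeping that turns the second-order recursion into the block form of (\ref{eq:meta}) is the same in both arguments, so there is nothing to add.
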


\subsection{Realization: One-layer ReLU network} \label{inst:ReLU}

\noindent
\textbf{More notations:}
For the analysis,
let us define the event 
$A_{ir} := \{  \exists w \in \reals^d: \| w - w_0^{(r)} \| \leq R^{\text{ReLU}}, \mathbbm{1} \{ x_i^\top w_0^{(r)}  \} \neq \mathbbm{1} \{ x_i^\top w \geq 0  \}  \},$
where $R^{\text{ReLU}}>0$ is a number to be determined later. The event $A_{ir}$ means that there exists a $w \in \reals^d$ which is within the $R^{\text{ReLU}}$-ball centered at the initial point $w_0^{(r)}$ such that its activation pattern of sample $i$ is different from that of $w_0^{(r)}$.
We also denote a random set $S_i := \{ r \in [m]: \mathbbm{1}\{ A_{ir} \}  = 0 \}$ 
and its complementary set $S_i^\perp := [m] \setminus S_i$.

Lemma~\ref{lem:ReLU-residual} below shows that training the ReLU network $\N_W^{\text{-ReLU}}(\cdot)$ via momentum induces the residual dynamics in the form of (\ref{eq:meta}).


\begin{lemma} \label{lem:ReLU-residual} (Residual dynamics of training the ReLU network $\N_W^{\text{ReLU}}(\cdot)$)
Denote 
\[
\begin{aligned}
 & \textstyle (H_t)_{i,j}:= H(W_t)_{i,j} = \frac{1}{m} \sum_{r=1}^m x_i^\top x_j \\ & \textstyle \qquad \qquad \qquad \quad  \times \mathbbm{1}\{ \langle w^{(r)}_t, x_i \rangle \geq 0 \text{ } \&  \text{ }   \langle w^{(r)}_t, x_j \rangle \geq 0 \}.
\end{aligned}
\]
Applying Algorithm~\ref{alg:HB1} or Algorithm~\ref{alg:HB2} to (\ref{eq:obj}) for training the ReLU network 
$\N_W^{\text{ReLU}}(x)$ induces a residual dynamics in the form of 
(\ref{eq:meta}) such that
$\xi_t[i] = \N_{W_t}^{\text{ReLU}}(x_i) - y_i (\text{and hence } n_0 = d),
 H      = H_0, \text{ and }
 \varphi_t  = \phi_t + \iota_t,$
where 
each element $i$ of $\xi_t \in \reals^n$ is the residual error of the sample $i$, and
the $i_{th}$-element of $\phi_t\in \reals^n$ satisfies \[ \textstyle |\phi_t[i]| \leq \frac{ 2 \eta \sqrt{n} |S_i^\perp|}{ m } \big(  \| u_t - y \| + \beta \sum_{s=0}^{t-1} \beta^{t-1-s}  \| u_s - y \| \big),\]
and $\iota_t = \eta  \left( H_0 - H_t \right) \xi_t \in \reals^n $.
\end{lemma}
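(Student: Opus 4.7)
The plan is to expand the one-step change of the residual $\xi_{t+1}[i] - \xi_t[i] = \N_{W_{t+1}}^{\text{ReLU}}(x_i) - \N_{W_t}^{\text{ReLU}}(x_i) = \frac{1}{\sqrt{m}}\sum_r a_r[\sigma(\langle w_{t+1}^{(r)},x_i\rangle) - \sigma(\langle w_t^{(r)},x_i\rangle)]$ and split the sum over neurons into the ``stable'' set $S_i$ and the ``unstable'' set $S_i^\perp$. Throughout, I lean on the invariant (established separately in the main convergence argument) that every $w_t^{(r)}$ stays inside the $R^{\text{ReLU}}$-ball around $w_0^{(r)}$, so that for $r\in S_i$ the activation at $x_i$ is frozen to $\mathbbm{1}\{\langle w_0^{(r)},x_i\rangle \ge 0\}$.

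For $r\in S_i$, stability of the activation pattern linearizes the ReLU difference into $\mathbbm{1}\{\langle w_0^{(r)},x_i\rangle \ge 0\}\langle w_{t+1}^{(r)} - w_t^{(r)}, x_i\rangle$. I then substitute the Polyak update $w_{t+1}^{(r)} - w_t^{(r)} = -\eta \frac{\partial \ell(W_t)}{\partial w_t^{(r)}} + \beta(w_t^{(r)} - w_{t-1}^{(r)})$. Using $a_r^2 = 1$ and the explicit gradient formula, the gradient piece summed over $r\in S_i$ is exactly $-\eta[H_t\xi_t]_i$ up to a correction coming from the missing $r\in S_i^\perp$ terms, because the sum over all $r$ reconstructs $(H_t)_{ij}$. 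The momentum piece summed over $r\in S_i$ telescopes via the same linearization at step $t{-}1$ into $\beta(\xi_t[i]-\xi_{t-1}[i])$, again up to an $S_i^\perp$ correction. The remaining direct $r\in S_i^\perp$ contribution to $u_{t+1}[i] - u_t[i]$ is the third error source. Rewriting $-\eta H_t = -\eta H_0 + \eta(H_0 - H_t)$ isolates $\iota_t = \eta(H_0 - H_t)\xi_t$ and leaves $\phi_t$ as the sum of the three $S_i^\perp$ contributions.

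To bound $|\phi_t[i]|$, I bound each piece separately. The gradient correction is at most $\frac{\eta|S_i^\perp|}{m}\sum_j |\xi_t[j]||x_i^\top x_j| \le \frac{\eta\sqrt{n}|S_i^\perp|}{m}\|u_t-y\|$ by Cauchy--Schwarz and $\|x_i\|\le 1$. For the momentum-correction term and the direct $S_i^\perp$ term, I unroll the Polyak iteration to $w_t^{(r)} - w_{t-1}^{(r)} = -\eta\sum_{s=0}^{t-1}\beta^{t-1-s}\nabla_{w^{(r)}}\ell(W_s)$ and $w_{t+1}^{(r)} - w_t^{(r)} = -\eta\sum_{s=0}^{t}\beta^{t-s}\nabla_{w^{(r)}}\ell(W_s)$, use $\|\nabla_{w^{(r)}}\ell(W_s)\| \le \frac{\sqrt{n}}{\sqrt{m}}\|u_s-y\|$, and invoke the $1$-Lipschitzness of ReLU together with $\|x_i\|\le 1$ to get a factor of $|S_i^\perp|/\sqrt{m}$ times the worst weight displacement. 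Regrouping $\sum_{s=0}^{t}\beta^{t-s}\|u_s-y\| = \|u_t-y\| + \beta\sum_{s=0}^{t-1}\beta^{t-1-s}\|u_s-y\|$ and combining with the $\|u_t-y\|$ contributed by the gradient correction produces the factor of $2$ in the stated bound.

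I expect the main obstacle to be the bookkeeping needed to confirm that the $S_i$ sums telescope cleanly into $-\eta H_t\xi_t + \beta(\xi_t-\xi_{t-1})$, so that $\phi_t$ is genuinely concentrated on the few ``bad'' neurons in $S_i^\perp$; once this identity is nailed down, the error bound is a routine application of Cauchy--Schwarz and geometric unrolling of the momentum. A subtle point is that the linearization for $r\in S_i$ requires \emph{both} $w_t^{(r)}$ and $w_{t+1}^{(r)}$ to lie in the $R^{\text{ReLU}}$-ball, an invariant that must be carried through the outer induction driving the main convergence theorem rather than proved inside this lemma.
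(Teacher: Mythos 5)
Your proposal is correct and follows essentially the same route as the paper's proof: split each output over the stable set $S_i$ and its complement $S_i^\perp$, use the frozen activations on $S_i$ to reconstruct $-\eta H_t\xi_t+\beta(\xi_t-\xi_{t-1})$, rewrite $H_t = H_0-(H_0-H_t)$ to extract $\iota_t$, and bound the $S_i^\perp$ contributions via $1$-Lipschitzness of the ReLU, the per-neuron gradient bound $\frac{\sqrt{n}}{\sqrt{m}}\|u_s-y\|$, and geometric unrolling of the momentum, which yields exactly the stated factor of $2$. Your closing caveat is also consistent with the paper's treatment: the ball-membership of the iterates (needed for $w_{t-1}^{(r)}$, $w_t^{(r)}$, and $w_{t+1}^{(r)}$ alike) is carried by the outer induction in the main theorem rather than inside this lemma.
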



\subsection{Realization: Deep Linear network} \label{inst:linear}

Lemma~\ref{lem:DL-residual} below shows that the residual dynamics due to Polyak's momentum
for training the deep linear network 
is indeed in the form of (\ref{eq:meta}). 
In the lemma, ``$\v$'' stands for the vectorization of the underlying matrix in column-first order. 

\begin{lemma} \label{lem:DL-residual} (Residual dynamics of training $\N_W^{L\text{-linear}}(\cdot)$)
Denote 
$M_{t,l}$ the momentum term of layer $l$ at iteration $t$, which is recursively defined as
$M_{t,l} = \beta M_{t,l-1} + \frac{ \partial \ell(\W{L:1}_t)}{ \partial \W{l}_t } $. Denote
\[
\begin{aligned}
& \textstyle H_t \textstyle := \frac{1}{ m^{L-1} d_y } \sum_{l=1}^L [ (\W{l-1:1}_t X)^\top (\W{l-1:1}_t X ) 
\\ & \textstyle \qquad \qquad \qquad \quad \textstyle
\otimes
  \W{L:l+1}_t (\W{L:l+1}_t)^\top ]   \in \reals^{d_y n \times d_y n}.
\end{aligned}
\]
Applying Algorithm~\ref{alg:HB1} or Algorithm~\ref{alg:HB2} to (\ref{eq:obj}) for training the deep linear network 
$\N_W^{L\text{-linear}}(x)$ induces a residual dynamics in the form of 
(\ref{eq:meta}) such that
$\xi_t     = \text{vec}(U_t - Y) \in \reals^{d_y n} (\text{and hence } n_0 = d_y n), 
 H  =  H_0, \text{ and }  
\varphi_t  = \phi_t + \psi_t  + \iota_t \in \reals^{d_y n},
$
where the vector 
$\phi_t   = \frac{1}{\sqrt{m^{L-1} d_y} } \v\left( \Phi_t X\right)$ \text{ with } 
\[
\begin{aligned}
& \textstyle  \Phi_t    = \Pi_l \left( \W{l}_t - \eta M_{t,l} \right)
- \W{L:1}_t  \\ & \textstyle \qquad \qquad \qquad \qquad \quad \textstyle  +  \eta \sum_{l=1}^L \W{L:l+1}_t M_{t,l} \W{l-1:1}_t,
\end{aligned}
\]
and the vector $\psi_t$ is 
\[
\begin{aligned}
& 
\textstyle \psi_t = 
\frac{1}{\sqrt{m^{L-1} d_y} } 
\v\big( (L-1) \beta \W{L:1}_{t}  X + \beta  \W{L:1}_{t-1} X 
\\ & \textstyle \qquad \qquad \qquad \quad \textstyle 
- \beta \sum_{l=1}^L \W{L:l+1}_t \W{l}_{t-1} \W{l-1:1}_{t} X \big),
 \end{aligned}
\]
and $ \iota_t = \eta (H_0 - H_t) \xi_t$.
\end{lemma}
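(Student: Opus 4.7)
The plan is to derive the recursion on $\xi_t = \v(U_t - Y)$ directly by expanding the matrix product defining $U_{t+1}$. Writing $W_{t+1}^{(l)} = W_t^{(l)} - \eta M_{t,l}$ with $M_{t,l} = \beta M_{t-1,l} + \partial \ell(W_t)/\partial W^{(l)}$, I substitute into $U_{t+1} = \tfrac{1}{\sqrt{m^{L-1}d_y}} \prod_{l=L}^1 W_{t+1}^{(l)} X$ and separate the zeroth-order term $W_t^{L:1}$, the first-order-in-$\eta$ term $-\eta \sum_l W_t^{L:l+1} M_{t,l} W_t^{l-1:1}$, and the collection of $O(\eta^k)$ remainder terms for $k \geq 2$, which is exactly $\Phi_t$ as defined in the statement. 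This immediately identifies the contribution $\phi_t = \tfrac{1}{\sqrt{m^{L-1}d_y}} \v(\Phi_t X)$.

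Next I split the first-order term via $M_{t,l} = \beta M_{t-1,l} + \partial \ell(W_t)/\partial W^{(l)}$. For the gradient part I plug in the closed form $\partial \ell(W_t)/\partial W^{(l)} = \tfrac{1}{\sqrt{m^{L-1}d_y}} (W_t^{L:l+1})^\top (U_t - Y) X^\top (W_t^{l-1:1})^\top$ and apply $\v(ABC) = (C^\top \otimes A)\v(B)$. Summing over $l$ gives precisely $-\eta H_t \xi_t$, using the definition of $H_t$ stated in the lemma; this is the main Gram-matrix contribution.

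For the momentum part I use the previous update $W_t^{(l)} - W_{t-1}^{(l)} = -\eta M_{t-1,l}$ to rewrite the sum
\[
-\tfrac{\eta\beta}{\sqrt{m^{L-1}d_y}} \sum_{l=1}^L W_t^{L:l+1} M_{t-1,l} W_t^{l-1:1} X = \tfrac{\beta}{\sqrt{m^{L-1}d_y}} \sum_{l=1}^L W_t^{L:l+1}(W_t^{(l)} - W_{t-1}^{(l)}) W_t^{l-1:1} X.
\]
Each summand with $W_t^{(l)}$ collapses to $W_t^{L:1} X$, so the first piece contributes $L\beta U_t$; subtracting and adding $\beta U_{t-1} = \tfrac{\beta}{\sqrt{m^{L-1}d_y}} W_{t-1}^{L:1} X$ isolates $\beta(\xi_t - \xi_{t-1})$ after vectorization, and the leftover $(L-1)\beta U_t + \beta U_{t-1} - \tfrac{\beta}{\sqrt{m^{L-1}d_y}} \sum_l W_t^{L:l+1} W_{t-1}^{(l)} W_t^{l-1:1} X$ is exactly $\psi_t$. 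Finally, I replace $H_t$ by $H_0$ by adding and subtracting, yielding $\iota_t = \eta(H_0 - H_t)\xi_t$. Combining everything gives $\xi_{t+1} = (I - \eta H_0 + \beta I)\xi_t - \beta \xi_{t-1} + \phi_t + \psi_t + \iota_t$, which is the form of \eqref{eq:meta}.

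The main obstacle is the bookkeeping: correctly identifying $\Phi_t$ as the full higher-order remainder of a product with $L$ factors, and ensuring that after applying $\v(ABC)=(C^\top \otimes A)\v(B)$ the Kronecker factors assemble into the precise $H_t$ defined in the statement (with $(W_t^{l-1:1} X)^\top(W_t^{l-1:1} X)$ on the left and $W_t^{L:l+1}(W_t^{L:l+1})^\top$ on the right). A secondary subtlety is the $L-1$ coefficient in $\psi_t$, which arises only because one of the $L$ telescoping $W_t^{L:1} X$ contributions is absorbed into the $\beta \xi_t$ term; any off-by-one there would propagate incorrectly into the meta-dynamics.
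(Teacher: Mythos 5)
Your proposal is correct and follows essentially the same route as the paper's proof: expand $\W{L:1}_{t+1}=\Pi_l(\W{l}_t-\eta M_{t,l})$ into zeroth-, first-, and higher-order terms (the latter being $\Phi_t$), use $\W{l}_t-\W{l}_{t-1}=-\eta M_{t-1,l}$ to turn the momentum part of the first-order term into $\beta(\W{L:1}_t-\W{L:1}_{t-1})$ plus the $(L-1)\beta$-leftover giving $\psi_t$, vectorize the gradient part via $\v(ACB)=(B^\top\otimes A)\v(C)$ to obtain $-\eta H_t\xi_t$, and swap $H_t$ for $H_0$ to produce $\iota_t$. The bookkeeping you flag (the $L-1$ coefficient and the Kronecker ordering) is handled identically in the paper, so there is nothing to add.
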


\subsection{A key theorem of bounding a matrix-vector product}

Our meta theorem of acceleration will be based on 
Theorem~\ref{thm:akv} in the following, which upper-bounds the size of the matrix-vector product of a matrix power $A^k$ and a vector $v_0$.
Compared to Gelfand's formula (Theorem~\ref{thm:Gelfand}),
Theorem~\ref{thm:akv} below provides a better control of the size of the matrix-vector product, 
since it avoids the dependency on the unknown sequence $\{ \epsilon_t\}$.
The result can be of independent interest and might be useful for analyzing Polyak's momentum for other problems in future research. 

\begin{theorem} \label{thm:akv}
Let $A:=
\begin{bmatrix} 
(1 + \beta) I_n - \eta  H  & - \beta I_n \\
I_n & 0
\end{bmatrix}
\in \reals^{2 n \times 2 n}$. 
Suppose that $H \in \reals^{n \times n}$ is a positive semidefinite matrix. 
Fix a vector $v_0 \in \reals^n$.
If $\beta$ is chosen to satisfy 
$1 \geq \beta >  \max \{ \left( 1 - \sqrt{\eta \lambda_{\min}(H) } \right)^2, \left( 1 - \sqrt{\eta \lambda_{\max}(H) } \right)^2 \}$, 
then
\begin{equation}
\| A^k v_0 \| \leq \big( \sqrt{ \beta } \big)^k C_0 \| v_0 \|, 
\end{equation}
where the constant 
\begin{equation} \label{C_0}
C_0:= \frac{\sqrt{2} (\beta+1)}{
\sqrt{ \min\{  
h(\beta,\eta \lambda_{\min}(H)) , h(\beta,\eta \lambda_{\max}(H)) \} } }\geq 1,
\end{equation} 
and the function $h(\beta,z)$ is defined as
$  h(\beta,z):=-\left(\beta-\left(1-\sqrt{z}\right)^2\right)\left(\beta-\left(1+\sqrt{z}\right)^2\right).  $
\end{theorem}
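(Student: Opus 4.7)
The plan is to reduce the bound on $\|A^k v_0\|$ to a collection of $2\times 2$ block problems indexed by the eigenvalues of $H$. Because $H$ is symmetric PSD, write $H = Q\Lambda Q^\top$ with $Q$ orthogonal and $\Lambda = \mathrm{diag}(\lambda_1,\ldots,\lambda_n)$. Conjugating $A$ by the orthogonal matrix $Q \oplus Q$ (which preserves Euclidean norms) and then applying the interleaving permutation that maps $(x_1,\ldots,x_n,y_1,\ldots,y_n)$ to $(x_1,y_1,\ldots,x_n,y_n)$ block-diagonalizes $A$ into $\tilde A = \mathrm{diag}(A_1,\ldots,A_n)$ where
\[
A_i = \begin{bmatrix} 1+\beta-\eta\lambda_i & -\beta \\ 1 & 0 \end{bmatrix}.
\]
Since both transformations are isometric on $\reals^{2n}$, it suffices to prove $\|A_i^k\|_2 \leq C_0 (\sqrt\beta)^k$ uniformly in $i$.

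For each $A_i$, the characteristic polynomial $\sigma^2 - (1+\beta-\eta\lambda_i)\sigma + \beta = 0$ has discriminant $(1+\beta-\eta\lambda_i)^2 - 4\beta$, which factors as $((1-\sqrt\beta)^2 - \eta\lambda_i)((1+\sqrt\beta)^2 - \eta\lambda_i)$. Under the hypothesis on $\beta$, together with $\beta \leq 1$, this discriminant is strictly negative for $i = \min, \max$ and hence (by monotonicity in $\eta\lambda$ restricted to the interval $[\eta\lambda_{\min}, \eta\lambda_{\max}]$ where both factors have definite sign) for every $i$. Thus every $A_i$ has a genuine complex-conjugate eigenpair $\sigma_i, \bar\sigma_i$ with $|\sigma_i|^2 = \sigma_i\bar\sigma_i = \beta$. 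Diagonalize $A_i = P_i D_i P_i^{-1}$ using the explicit eigenvectors $P_i = \begin{bmatrix}\sigma_i & \bar\sigma_i \\ 1 & 1\end{bmatrix}$ and $D_i = \mathrm{diag}(\sigma_i,\bar\sigma_i)$; then $\|A_i^k\|_2 \leq \kappa_2(P_i)\cdot(\sqrt\beta)^k$.

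To bound $\kappa_2(P_i)$, use the Frobenius estimates $\|P_i\|_F^2 = 2(\beta+1)$ and $\|P_i^{-1}\|_F^2 = 2(\beta+1)/|\sigma_i-\bar\sigma_i|^2$. The algebraic crux is the identity
\[
|\sigma_i-\bar\sigma_i|^2 = 4\beta - (\sigma_i+\bar\sigma_i)^2 = -\bigl((1-\sqrt\beta)^2 - \eta\lambda_i\bigr)\bigl((1+\sqrt\beta)^2 - \eta\lambda_i\bigr),
\]
which, upon substitution of $z = \eta\lambda_i$ and direct expansion, coincides with $h(\beta, \eta\lambda_i) = 4z - (1+z-\beta)^2$. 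Since $h(\beta,\cdot)$ is a concave quadratic in $z$, its minimum over the interval $[\eta\lambda_{\min}(H), \eta\lambda_{\max}(H)]$ is attained at one of the endpoints, giving $\min_i h(\beta,\eta\lambda_i) = \min\{h(\beta,\eta\lambda_{\min}(H)), h(\beta,\eta\lambda_{\max}(H))\}$. Assembling the pieces produces a bound of the form $(\sqrt\beta)^k \cdot \mathrm{const}\cdot(\beta+1)/\sqrt{\min\{h,h\}} \cdot \|v_0\|$.

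The main obstacle I expect is not any single step but the careful bookkeeping of the constant $C_0$. The crude Frobenius estimate above yields $\kappa_2(P_i) \leq 2(\beta+1)/\sqrt{h(\beta,\eta\lambda_i)}$, which is a factor of $\sqrt 2$ looser than the stated constant $\sqrt 2(\beta+1)/\sqrt{h_{\min}}$. Recovering the tighter factor requires either computing the exact singular values of $P_i$ from the system $s_1^2+s_2^2 = 2(\beta+1)$, $s_1 s_2 = \sqrt{h(\beta,\eta\lambda_i)}$, or—more likely—expanding $v_0$ directly in the (complex) eigenbasis of $\tilde A$ and exploiting conjugate-pair cancellations to halve the cross terms. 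Verifying the factorization of the discriminant, the identification with $h$, and the endpoint-minimality via concavity are routine by comparison.
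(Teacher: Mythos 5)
Your route is the paper's route almost line for line: conjugate $A$ by $Q\oplus Q$ and an interleaving permutation to reduce to the $2\times 2$ blocks $A_i$, use the complex-conjugate eigenpair of modulus $\sqrt\beta$ (your endpoint/convexity argument for why the discriminant is negative at every interior $\eta\lambda_i$ is fine, and slightly more careful than the paper's), diagonalize each block with eigenvector matrix $P_i=\begin{bmatrix}\sigma_i & \bar\sigma_i\\ 1 & 1\end{bmatrix}$, and bound the condition number via $\|P_i\|_F^2=2(\beta+1)$, $|\det P_i|^2=|\sigma_i-\bar\sigma_i|^2=h(\beta,\eta\lambda_i)$, with the endpoint-minimality of the concave quadratic $h(\beta,\cdot)$ giving the $\min\{h(\beta,\eta\lambda_{\min}),h(\beta,\eta\lambda_{\max})\}$ in the denominator. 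Up to that point your argument is correct and yields $\|A^k v_0\|\le(\sqrt\beta)^k\,\frac{2(\beta+1)}{\sqrt{h_{\min}}}\,\|v_0\|$.

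The step you flag as the remaining obstacle --- recovering the extra $\sqrt2$ --- is not a gap in your argument but a flaw in the paper's. Your per-block estimate is essentially sharp: from $s_1^2+s_2^2=2(\beta+1)$ and $s_1s_2=\sqrt{h}$ one gets exactly $\kappa_2(P_i)=\bigl((\beta+1)+\sqrt{(\beta+1)^2-h}\bigr)/\sqrt{h}$, which approaches $2(\beta+1)/\sqrt{h}$ when $h\ll(\beta+1)^2$, so neither of your proposed refinements can produce the stated $C_0$. In the paper's Lemma bounding $\lambda_{\min}(PP^*)$, the step $\min\{\theta_1,\theta_2\}=\theta_1\theta_2/\max\{\theta_1,\theta_2\}\ge h/(1+\beta)$ would require $\max\{\theta_1,\theta_2\}\le 1+\beta$, which contradicts the bound $\max\{\theta_1,\theta_2\}\ge\beta+1$ derived just above (equality only when $\theta_1=\theta_2$); the correct conclusion is $\min\{\theta_1,\theta_2\}\ge h/(2(1+\beta))$, which gives precisely your constant. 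Indeed the theorem is false with the stated $C_0$: take $n=1$, $\eta=1$, $H=[1]$, $\beta=\tfrac14$, $v_0=(1,0)^\top$, $k=1$; then $\|Av_0\|=\sqrt{17}/4\approx 1.031$ while $\sqrt\beta\,C_0=\tfrac12\cdot 5\sqrt{2/15}\approx 0.913$. So complete your proof exactly as written and state the result with $C_0=2(\beta+1)/\sqrt{\min\{h(\beta,\eta\lambda_{\min}(H)),h(\beta,\eta\lambda_{\max}(H))\}}$; this weakens nothing downstream, since Corollary~\ref{corr:1} and the acceleration theorems only need $C_0=\Theta(\sqrt{\kappa})$, with absolute constants adjusted by $\sqrt2$.
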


Note that the constant $C_0$ in Theorem~\ref{thm:akv} depends on $\beta$ and $\eta H$. It should be written as $C_0(\beta,\eta H)$ to be precise. 
However, for the brevity, we will simply denote it as $C_0$ when the underlying choice of $\beta$ and $\eta H$ is clear from the context. 
The proof of Theorem~\ref{thm:akv} is available in Appendix~\ref{app:thm:akv}.
Theorem~\ref{thm:akv} allows us to derive a concrete upper bound of the residual errors
in each iteration of momentum, and consequently allows us to show an accelerated linear rate in the non-asymptotic sense. 
The favorable property of the bound will also help to analyze Polyak's momentum for training the neural networks. As shown later in this paper, we will need to guarantee the progress of Polyak's momentum in each iteration, which is not possible if we only have a quantifiable bound in the limit. 
Based on Theorem~\ref{thm:akv}, we have the following corollary. The proof is in Appendix~\ref{app:corr}.
\begin{corollary}  \label{corr:1}
Assume that $\lambda_{\min}(H) > 0$.
Denote $\kappa:= \lambda_{\max}(H) / \lambda_{\min}(H)$.
Set $\eta = 1 / \lambda_{\max}(H)$ 
and set $\beta = \left( 1 - \frac{1}{2} \sqrt{\eta \lambda_{\min}(H)} \right)^2 = \left( 1 - \frac{1}{2 \sqrt{\kappa}} \right)^2$.
Then, $C_0 \leq  4 \sqrt{\kappa}$.
\end{corollary}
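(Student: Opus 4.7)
The plan is to simply substitute the prescribed $\eta$ and $\beta$ into the closed-form expression for $C_0$ from Theorem~\ref{thm:akv} and bound the resulting quantity. With $\eta = 1/\lambda_{\max}(H)$ we get $\eta\lambda_{\max}(H) = 1$ and $\eta\lambda_{\min}(H) = 1/\kappa$, while $\beta = (1 - \tfrac{1}{2\sqrt{\kappa}})^2$. The first step is to verify the hypothesis $1 \geq \beta > \max\{(1-1/\sqrt{\kappa})^2,\, 0\}$, which is immediate because $1 - \tfrac{1}{2\sqrt{\kappa}} > 1 - \tfrac{1}{\sqrt{\kappa}} \geq 0$ for $\kappa \geq 1$.

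Next I would compute the two values of $h$ appearing in the definition of $C_0$. For $z = 1$, we simply get $h(\beta,1) = \beta(4-\beta)$, and since $\kappa \geq 1$ implies $\beta \in [1/4, 1]$, we obtain $h(\beta,1) \geq 3/4$. For $z = 1/\kappa$, setting $s := 1/\sqrt{\kappa}$ and using the difference-of-squares identities
\begin{equation*}
\beta - (1-s)^2 = s - \tfrac{3}{4}s^2, \qquad \beta - (1+s)^2 = -3s - \tfrac{3}{4}s^2,
\end{equation*}
expansion gives $h(\beta, 1/\kappa) = 3 s^2\bigl(1 - \tfrac{s}{2} - \tfrac{3 s^2}{16}\bigr)$. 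The bracketed factor is decreasing in $s$ on $(0,1]$ and equals $5/16 > 1/6$ at $s=1$, so it is at least $1/6$ for all $\kappa \geq 1$. Therefore $h(\beta, 1/\kappa) \geq s^2/2 = 1/(2\kappa)$.

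Finally, since $\kappa \geq 1$ we have $1/(2\kappa) \leq 3/4$, so the minimum in the denominator of $C_0$ is attained by the $h(\beta, 1/\kappa)$ term. Using $(\beta+1)^2 \leq 4$, the bound becomes
\begin{equation*}
C_0 \leq \frac{\sqrt{2}\,(\beta+1)}{\sqrt{1/(2\kappa)}} \leq \frac{2\sqrt{2}}{\sqrt{1/(2\kappa)}} = 4\sqrt{\kappa},
\end{equation*}
as claimed. The only real subtlety is step two: one must verify that the lower bound on $1 - s/2 - 3s^2/16$ is uniform across all $\kappa \geq 1$ rather than merely asymptotic; once the monotonicity argument pins this down, the rest is bookkeeping, and in fact the constant $4$ turns out to be tight under this analysis.
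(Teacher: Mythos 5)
Your proof is correct and follows essentially the same route as the paper: substitute $\eta\lambda_{\max}(H)=1$, $\eta\lambda_{\min}(H)=1/\kappa$ into the formula for $C_0$, lower-bound $h(\beta,\eta\lambda_{\min}(H))$ by $\Theta(1/\kappa)$ and $h(\beta,\eta\lambda_{\max}(H))$ by a constant, and use $\beta+1\le 2$; your algebraic identities for $\beta-(1\pm s)^2$ match the paper's computation exactly, differing only in the bookkeeping of constants. (One phrasing nit: you haven't actually shown which term attains the minimum, but since both $h$-values exceed $1/(2\kappa)$ for $\kappa\ge 1$, the bound goes through regardless.)
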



\subsection{Meta theorem}

Let $\lambda >0$ be the smallest eigenvalue of the matrix $H$ that appears on the residual dynamics (\ref{eq:meta}).
Our goal is to show that the residual errors satisfy
\begin{equation} \label{assump:eq}
\textstyle
\left\|
\begin{bmatrix}
\xi_{s} \\
\xi_{s-1} 
\end{bmatrix}
\right\|
\leq
\left(\sqrt{\beta } + \mathbbm{1}_{\varphi} C_2 \right)^{s} (C_{0} + \mathbbm{1}_{\varphi} C_1) 
\left\|
\begin{bmatrix}
\xi_{0} \\
\xi_{-1} 
\end{bmatrix}
\right\|,
\end{equation}
where $C_0$ is the constant defined on (\ref{C_0}),
and $C_1,C_2 \geq 0$ are some constants,
$\mathbbm{1}_{\varphi}$ is an indicator if any $\varphi_t$ on the residual dynamics (\ref{eq:meta}) is a non-zero vector.
For the case of training the neural networks, we have 
$\mathbbm{1}_{\varphi}= 1$. 

\begin{theorem} \label{thm:meta} (Meta theorem for the residual dynamics (\ref{eq:meta}))
Assume that
the step size $\eta$ and
the momentum parameter $\beta$ satisfying 
$1 \geq \beta >  \max \{ \left( 1 - \sqrt{\eta \lambda_{\min}(H) } \right)^2, \left( 1 - \sqrt{\eta \lambda_{\max}(H) } \right)^2 \}$, 
are set appropriately so that (\ref{assump:eq})
holds at iteration $s=0,1,\dots,t-1$
implies that
\begin{equation} \label{eq:thm1}
\textstyle
\| \sum_{s=0}^{t-1} A^{t-s-1} \begin{bmatrix}
\varphi_s \\
0 
\end{bmatrix}
\|
\leq \left(\sqrt{\beta} + \mathbbm{1}_{\varphi} C_2 \right)^{t}
C_3
\left\|
\begin{bmatrix}
\xi_{0} \\
\xi_{-1} 
\end{bmatrix}
\right\|.
\end{equation}
Then, we have
\begin{equation} \label{eq:thm2}
\textstyle
\left\|
\begin{bmatrix}
\xi_{t} \\
\xi_{t-1} 
\end{bmatrix}
\right\|
\leq
\left( \sqrt{\beta}  + \mathbbm{1}_{\varphi} C_2 \right)^{t} 
(C_0 + \mathbbm{1}_{\varphi} C_1) 
\left\|
\begin{bmatrix}
\xi_{0} \\
\xi_{-1} 
\end{bmatrix}
\right\|,
\end{equation}
holds for all $t$, where $C_0$ is defined on (\ref{C_0}) and $C_1,C_2,C_3 \geq 0$ are some constants satisfying:
\begin{equation} \label{eq:C-meta}
\begin{aligned}
& \textstyle 
\left( \sqrt{\beta} \right)^{t} C_0 +
\left( \sqrt{\beta} + \mathbbm{1}_{\varphi} C_2 \right)^{t} \mathbbm{1}_{\varphi} C_3 \leq 
\\ & 
\textstyle \qquad \qquad \qquad \qquad 
\left( \sqrt{\beta} + \mathbbm{1}_{\varphi} C_2 \right)^{t} (C_0 + \mathbbm{1}_{\varphi} C_1) .
\end{aligned}
\end{equation}
\end{theorem}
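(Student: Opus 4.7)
The plan is to establish the theorem by induction on $t$, treating (\ref{eq:thm2}) as the induction hypothesis (note that (\ref{assump:eq}) and (\ref{eq:thm2}) are literally the same inequality at iteration $s=t$, so the induction can re-feed itself). The first step is to unroll the recursion (\ref{eq:meta}) into the closed-form
\[
\begin{bmatrix} \xi_{t} \\ \xi_{t-1} \end{bmatrix}
= A^{t} \begin{bmatrix} \xi_{0} \\ \xi_{-1} \end{bmatrix}
+ \sum_{s=0}^{t-1} A^{t-s-1} \begin{bmatrix} \varphi_s \\ 0_{n_0} \end{bmatrix},
\]
which follows by induction on $t$ directly from the one-step recursion. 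This reduces the task to bounding the homogeneous term and the inhomogeneous (perturbation) sum separately. The base case $t=0$ is immediate because $(\sqrt{\beta}+\mathbbm{1}_\varphi C_2)^{0}=1$ and $C_0\ge 1$ by (\ref{C_0}), so $C_0 + \mathbbm{1}_\varphi C_1 \ge 1$.

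For the inductive step, assume (\ref{eq:thm2}) holds at iterations $0,1,\dots,t-1$; equivalently (\ref{assump:eq}) holds at $s=0,1,\dots,t-1$. The homogeneous part is controlled by a direct application of Theorem~\ref{thm:akv} to the vector $v_0 = \begin{bmatrix} \xi_{0} \\ \xi_{-1}\end{bmatrix}$: the hypothesis that $\beta > \max\{(1-\sqrt{\eta\lambda_{\min}(H)})^2,(1-\sqrt{\eta\lambda_{\max}(H)})^2\}$ is exactly the spectral condition required there, so we obtain
\[
\left\| A^{t}\begin{bmatrix} \xi_{0} \\ \xi_{-1} \end{bmatrix} \right\|
\leq (\sqrt{\beta})^{t}\, C_0 \left\|\begin{bmatrix} \xi_{0} \\ \xi_{-1} \end{bmatrix}\right\|.
\]
The inhomogeneous sum is controlled by the standing hypothesis of the theorem: since (\ref{assump:eq}) holds for $s=0,\dots,t-1$, the implication built into the theorem statement yields (\ref{eq:thm1}), which bounds $\bigl\|\sum_{s=0}^{t-1} A^{t-s-1}\begin{bmatrix}\varphi_s \\ 0 \end{bmatrix}\bigr\|$ by $(\sqrt{\beta}+\mathbbm{1}_\varphi C_2)^{t} C_3 \|\begin{bmatrix}\xi_0 \\ \xi_{-1}\end{bmatrix}\|$.

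Combining the two bounds by the triangle inequality produces
\[
\left\|\begin{bmatrix} \xi_{t} \\ \xi_{t-1}\end{bmatrix}\right\|
\leq \Bigl[ (\sqrt{\beta})^{t} C_0 + (\sqrt{\beta}+\mathbbm{1}_\varphi C_2)^{t}\,\mathbbm{1}_\varphi C_3 \Bigr]
\left\|\begin{bmatrix} \xi_{0} \\ \xi_{-1}\end{bmatrix}\right\|,
\]
and the algebraic condition (\ref{eq:C-meta}) on the constants $C_1,C_2,C_3$ is precisely what is needed to upper-bound the bracketed factor by $(\sqrt{\beta}+\mathbbm{1}_\varphi C_2)^{t}(C_0+\mathbbm{1}_\varphi C_1)$, closing the induction and yielding (\ref{eq:thm2}) at iteration $t$. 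The case $\mathbbm{1}_\varphi=0$ is degenerate: the perturbation sum vanishes, (\ref{eq:C-meta}) reduces to an equality, and the bound collapses back to the pure Theorem~\ref{thm:akv} estimate $(\sqrt{\beta})^{t} C_0$.

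I do not expect any technical obstacle inside the theorem itself, since once the recursion is unrolled the argument is just triangle inequality combined with the two black-box bounds (Theorem~\ref{thm:akv} for the homogeneous part and the given implication for the inhomogeneous part), glued together by the inequality (\ref{eq:C-meta}) which was engineered for exactly this purpose. The real work, and the main obstacle in the paper, lies not here but in the problem-specific instantiations (Sections~\ref{inst:stc}--\ref{inst:linear}): one must verify, for each setting (strongly convex quadratic, $F_{\mu,\alpha}^{2}$, wide ReLU net, deep linear net), that $\varphi_t$ is small enough — using over-parametrization, orthogonal initialization, local Hessian control, etc. — that constants $C_1,C_2,C_3$ satisfying both the implication $(\ref{assump:eq})\Rightarrow(\ref{eq:thm1})$ and the closure condition (\ref{eq:C-meta}) actually exist. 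The meta-theorem itself is intentionally lightweight, leaving that model-dependent verification to the realization lemmas.
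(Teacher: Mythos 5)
Your proposal is correct and follows essentially the same route as the paper's own proof: induction with the unrolled recursion, Theorem~\ref{thm:akv} for the homogeneous term $A^t\begin{bmatrix}\xi_0\\ \xi_{-1}\end{bmatrix}$, the assumed implication for the perturbation sum, and the triangle inequality glued by condition (\ref{eq:C-meta}). No gaps; the observation that the model-specific work lives in the realization lemmas matches the paper's intent.
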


\begin{proof}
The proof is by induction. At $s=0$,  (\ref{assump:eq}) holds
since $C_0 \geq 1$ by Theorem~\ref{thm:akv}. Now assume that the inequality holds at $s=0,1,\dots, t-1$.
Consider iteration $t$.
Recursively expanding the dynamics (\ref{eq:meta}), we have 
\begin{equation} \label{eq:1-meta}
\begin{bmatrix}
\xi_{t} \\
\xi_{t-1} 
\end{bmatrix}
=  
A^t
\begin{bmatrix}
\xi_{0} \\
\xi_{-1} 
\end{bmatrix}
+ \sum_{s=0}^{t-1}  A^{t-s-1} \begin{bmatrix}
\varphi_s \\
0 
\end{bmatrix}
.
\end{equation}
By Theorem~\ref{thm:akv},
the first term on the r.h.s. of (\ref{eq:1-meta}) can be bounded by 
\begin{equation} \label{eq:2-meta}
\| A^t
\begin{bmatrix}
\xi_{0} \\
\xi_{-1} 
\end{bmatrix}
\| \leq \left( \sqrt{\beta} \right)^t C_0 \| \begin{bmatrix}
\xi_{0} \\
\xi_{-1} 
\end{bmatrix}
\|
\end{equation}
By assumption, given (\ref{assump:eq}) holds at $s=0,1,\dots, t-1$, we have (\ref{eq:thm1}).
Combining (\ref{eq:thm1}), (\ref{eq:C-meta}), (\ref{eq:1-meta}), and (\ref{eq:2-meta}),
we have (\ref{eq:thm2}) and hence the proof is completed.

\end{proof}

\noindent
\textbf{Remark:} As shown in the proof, we need the residual errors be tightly bounded as (\ref{assump:eq}) in each iteration.
Theorem~\ref{thm:akv} is critical for establishing the desired result.
On the other hand, it would become tricky if instead we use Gelfand's formula or other techniques 
in the related works
that lead to a convergence rate in the form of $O(t \theta^{t})$.

\section{Main results} \label{sec:main}

The important lemmas and theorems in the previous section help to show our main results in the following subsections. 
The high-level idea to obtain the results is by using the meta theorem (i.e. Theorem~\ref{thm:meta}). Specifically, we will need to show that if the underlying residual dynamics satisfy (\ref{assump:eq}) for all the previous iterations, then
the terms $\{ \varphi_s \}$ in the dynamics satisfy
 (\ref{eq:thm1}). 
This condition trivially holds for the case of the quadratic problems, since there is no such term.  
On the other hand,
for solving the other problems, we need to carefully show that the condition holds. 
For example, 
according to Lemma~\ref{lem:ReLU-residual},
showing acceleration for the ReLU network will require bounding terms like $\| (H_0 - H_s) \xi_s \|$ (and other terms as well), where
$H_0-H_s$ corresponds to the difference of the kernel matrix at two different time steps. By controlling the width of the network, we can guarantee that the change is not too much.
A similar result can be obtained for the problem of the deep linear network.
The high-level idea is simple but the analysis
of the problems of the neural networks can be tedious.

\subsection{Non-asymptotic accelerated linear rate for solving strongly convex quadratic problems}

\begin{theorem} \label{thm:stcFull}
Assume the momentum parameter $\beta$ satisfies
$1 \geq \beta >  \max \{ \left( 1 - \sqrt{\eta \mu } \right)^2, \left( 1 - \sqrt{\eta \alpha } \right)^2 \}$.
Gradient descent with Polyak's momentum for solving (\ref{obj:strc}) has
\begin{equation} \label{eq:qqq0}
\|
\begin{bmatrix}
w_{t} - w_* \\
w_{t-1} - w_*
\end{bmatrix}
\| \leq \left(  \sqrt{\beta}  \right)^{t} C_0
\|
\begin{bmatrix}
w_{0} - w_* \\
w_{-1} - w_*
\end{bmatrix}
\|,
\end{equation}
where the constant $C_0$ is defined as
\begin{equation}
\textstyle C_0:=\frac{\sqrt{2} (\beta+1)}{
\sqrt{ \min\{  
h(\beta,\eta \lambda_{\min}(\Gamma)) , h(\beta,\eta \lambda_{\max}(\Gamma)) \} } } \geq 1,
\end{equation} 
and 
$   h(\beta,z)=-\left(\beta-\left(1-\sqrt{z}\right)^2\right)\left(\beta-\left(1+\sqrt{z}\right)^2\right).$  
Consequently, if the step size $\eta = \frac{1}{\alpha}$ and the momentum parameter $\beta = \left(1 - \frac{1}{2\sqrt{\kappa}}\right)^2$, then it has
\begin{equation}\label{eq:qqq1}
\|
\begin{bmatrix}
w_{t} - w_* \\
w_{t-1} - w_*
\end{bmatrix}
\| \leq \left(  1 - \frac{1}{2 \sqrt{\kappa}}   \right)^{t} 4 \sqrt{\kappa}
\|
\begin{bmatrix}
w_{0} - w_* \\
w_{-1} - w_*
\end{bmatrix}
\|.
\end{equation}
Furthermore, if $\eta = \frac{4}{(\sqrt{\mu}+\sqrt{\alpha})^2}$ 
and $\beta$ approaches $\beta \rightarrow \left( 1 - \frac{2}{\sqrt{\kappa}+1} \right)^2$ from above, then it has a convergence rate approximately
$ \left(  1 - \frac{2}{\sqrt{\kappa} + 1}   \right)$
as $t \rightarrow \infty$.
\end{theorem}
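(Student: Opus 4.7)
The plan is to reduce everything to an application of the matrix-power bound in Theorem~\ref{thm:akv} (equivalently, the meta theorem with $\mathbbm{1}_\varphi=0$), using the realization of the Polyak dynamics established in Lemma~\ref{lem:stc-residual}. Concretely, Lemma~\ref{lem:stc-residual} says that when Algorithm~\ref{alg:HB1} (equivalently Algorithm~\ref{alg:HB2}) is run on the quadratic \eqref{obj:strc}, the error dynamics fit \eqref{eq:meta} with $H=\Gamma$, $\xi_t=w_t-w_*$, and $\varphi_t=0$ for every $t$. Therefore the inhomogeneous term in the residual dynamics vanishes, and the general recursion reduces to the pure matrix power $[\xi_t;\xi_{t-1}]=A^t[\xi_0;\xi_{-1}]$ for the $2d\times 2d$ block matrix $A$ defined in the statement of Theorem~\ref{thm:akv}.

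Given this reduction, the first conclusion \eqref{eq:qqq0} is immediate: I would verify that the hypothesis $1\ge \beta>\max\{(1-\sqrt{\eta\mu})^2,(1-\sqrt{\eta\alpha})^2\}$ is exactly the spectral condition required by Theorem~\ref{thm:akv} (with $\lambda_{\min}(\Gamma)=\mu$ and $\lambda_{\max}(\Gamma)=\alpha$), and then apply that theorem to the vector $v_0=[w_0-w_*;\,w_{-1}-w_*]$. This yields $\|[w_t-w_*;w_{t-1}-w_*]\|\le (\sqrt{\beta})^t C_0\|[w_0-w_*;w_{-1}-w_*]\|$ with the stated $C_0$.

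For the quantitative instantiation \eqref{eq:qqq1}, I would plug in $\eta=1/\alpha$ and $\beta=(1-\frac{1}{2\sqrt{\kappa}})^2$ and invoke Corollary~\ref{corr:1} directly: it is exactly the statement that, under these choices, $C_0\le 4\sqrt{\kappa}$, and the admissibility condition on $\beta$ from Theorem~\ref{thm:akv} is satisfied because $\eta\mu=1/\kappa$ and $\eta\alpha=1$, so $(1-\sqrt{\eta\alpha})^2=0$ and $(1-\sqrt{\eta\mu})^2=(1-1/\sqrt\kappa)^2<(1-\frac{1}{2\sqrt\kappa})^2=\beta$.

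The third (asymptotic) statement is the one that requires a small limiting argument rather than a clean plug-in, and I expect it to be the main subtlety. Under Polyak's choice $\eta=4/(\sqrt{\mu}+\sqrt{\alpha})^2$, a direct computation gives $(1-\sqrt{\eta\mu})^2=(1-\sqrt{\eta\alpha})^2=(1-2/(\sqrt{\kappa}+1))^2$, so the admissibility threshold in Theorem~\ref{thm:akv} is exactly $(1-2/(\sqrt{\kappa}+1))^2$ and $h(\beta,\eta\mu)$ and $h(\beta,\eta\alpha)$ both vanish as $\beta$ approaches this value from above. Hence $C_0$ blows up in that limit, but for any $\beta$ strictly above the threshold, $C_0=C_0(\beta,\eta\Gamma)$ is a fixed finite constant and the per-iteration contraction factor is $\sqrt{\beta}$. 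Taking $\beta\to (1-2/(\sqrt{\kappa}+1))^2$ from above gives $\sqrt{\beta}\to 1-2/(\sqrt{\kappa}+1)$, and since $(\sqrt\beta)^t C_0$ is dominated by the exponential for large $t$, the effective rate as $t\to\infty$ is arbitrarily close to $1-2/(\sqrt{\kappa}+1)$. The delicate part is balancing the growth of $C_0$ against the shrinkage of $\sqrt{\beta}$: I would make this precise by either (i) fixing $\beta$ slightly above the threshold and noting that $(\sqrt\beta)^t C_0 \le \varepsilon$ for $t\ge \log(C_0/\varepsilon)/\log(1/\sqrt\beta)$, or (ii) for each $t$ choosing $\beta_t$ approaching the threshold slowly enough that $C_0(\beta_t,\eta\Gamma)^{1/t}\to 1$. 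Either route recovers the claimed asymptotic rate without contradicting the blow-up of $C_0$ at the endpoint.
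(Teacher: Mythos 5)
Your proposal is correct and follows essentially the same route as the paper: the paper's proof is exactly a combination of Lemma~\ref{lem:stc-residual}, Theorem~\ref{thm:meta} with $C_1=C_2=C_3=0$ (which, as you note, collapses to a direct application of Theorem~\ref{thm:akv} since $\varphi_t=0$), and Corollary~\ref{corr:1} for the choice $\eta=1/\alpha$, $\beta=(1-\tfrac{1}{2\sqrt{\kappa}})^2$. Your treatment of the asymptotic claim—observing that both endpoints of the admissibility condition coincide at $(1-\tfrac{2}{\sqrt{\kappa}+1})^2$ under Polyak's step size, that $C_0$ blows up at that endpoint, and then letting $\beta$ (or $\beta_t$) approach the threshold slowly enough that the exponential factor dominates—is the same limiting argument the paper gives, which formalizes it by noting $C_0=\Theta\bigl(1/\sqrt{\beta-(1-\tfrac{2}{1+\sqrt{\kappa}})^2}\bigr)$ and taking $\beta$ to the threshold at a sub-exponential rate.
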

The convergence rates shown in the above theorem do not depend on the unknown sequence $\{\epsilon_t\}$. Moreover, the rates depend on the squared root of the condition number $\sqrt{\kappa}$.
We have hence
established a non-asymptotic accelerated linear rate of Polyak's momentum, which helps to show the advantage of Polyak's momentum over vanilla gradient descent in the finite $t$ regime.
Our result also recovers the rate
$\left(  1 - \frac{2}{\sqrt{\kappa}+1} \right)$ asymptotically under the same choices of the parameters as the previous works.
The detailed proof can be found
in Appendix~\ref{app:sec:stc}, which is actually a 
 trivial application of
Lemma~\ref{lem:stc-residual}, Theorem~\ref{thm:meta}, and Corollary~\ref{corr:1} with $C_1=C_2=C_3=0$.


\subsection{Non-asymptotic accelerated linear rate of the local convergence
for solving $f(\cdot) \in F_{\mu,\alpha}^2$}  

Here we provide a local acceleration result of the discrete-time Polyak's momentum for general smooth strongly convex and twice differentiable function $F_{\mu,\alpha}^2$.
Compared to Theorem~9 of \citep{P64},
Theorem~\ref{thm:STC} clearly indicates the required distance that ensures 
an acceleration when the iterate is in the neighborhood of the global minimizer.
Furthermore,
the rate is in the non-asymptotic sense instead of the asymptotic one.
We defer the proof of Theorem~\ref{thm:STC} to Appendix~\ref{app:thm:STC}.

\begin{theorem} \label{thm:STC}
Assume that the function $f(\cdot) \in F_{\mu,\alpha}^2$ and its Hessian is $\alpha$-Lipschitz. Denote the condition number $\kappa:= \frac{\alpha}{\mu}$.
Suppose that the initial point satisfies $\| \begin{bmatrix} w_0 - w_* \\ w_{-1} - w_* \end{bmatrix} \| \leq
\frac{1}{ 683 \kappa^{3/2}}$. Then,
Gradient descent with Polyak's momentum with the step size $\eta = \frac{1}{\alpha}$ and the momentum parameter $\beta = \left(1 - \frac{1}{2 \sqrt{\kappa}}\right)^2$ for solving $\min_{w} f(w)$ has
\begin{equation}\label{thm:qq1}
\|
\begin{bmatrix}
w_{t+1} - w_* \\
w_{t} - w_*
\end{bmatrix}
\| \leq \left(  1 - \frac{1}{4 \sqrt{\kappa}}   \right)^{t+1} 8 \sqrt{\kappa}
\|
\begin{bmatrix}
w_{0} - w_* \\
w_{-1} - w_*
\end{bmatrix}
\|,
\end{equation}
where $w_* = \arg\min_w f(w)$.
\end{theorem}


\subsection{Acceleration for training $\N_W^{\text{ReLU}}(x)$}

Before introducing our result of training the ReLU network, we need the following lemma.
\begin{lemma} \label{lem:ReLU-A} 
[Lemma~3.1 in \citet{DZPS19} and \citet{ZY19}] 
Set $m= \Omega( \lambda^{-2} n^2 \log ( n / \delta )  )$. 
Suppose that the neurons $w^{(1)}_0, \dots, w^{(m)}_0$ are i.i.d. generated by $N(0,I_d)$ initially.
Then,
with probability at least $1-\delta$,
it holds that
\[
\begin{aligned}
& \| H_0 - \bar{H}\|_F \leq \frac{\lambda_{\min}( \bar{H} )}{4}, \text{ }
\lambda_{\min}\big(H_0\big) \geq \frac{3}{4}\lambda_{\min}( \bar{H} ),
\\ &
\text{ and } \qquad
\lambda_{\max}\big(H_0\big) \leq \lambda_{\max}( \bar{H} ) + \frac{ \lambda_{\min}( \bar{H} )}{4}.
\end{aligned}
\]
\end{lemma}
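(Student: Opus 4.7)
\textbf{Proof proposal for Lemma~\ref{lem:ReLU-A}.} The plan is a standard concentration argument on each entry of $H_0$, converted to a spectral bound via Frobenius and then translated into eigenvalue bounds via Weyl's inequality. The argument has three self-contained steps.

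\textbf{Step 1 (Entrywise concentration).} Fix $(i,j)$. Writing $Z_r^{(i,j)} := \frac{x_i^\top x_j}{m}\,\mathbbm{1}\{\langle w_0^{(r)}, x_i\rangle \geq 0 \text{ and } \langle w_0^{(r)}, x_j\rangle \geq 0\}$, we see that $(H_0)_{i,j} = \sum_{r=1}^m Z_r^{(i,j)}$, the variables $\{Z_r^{(i,j)}\}_{r=1}^m$ are i.i.d.\ across $r$ (since $w_0^{(r)} \sim N(0, I_d)$ are independent), and $\mathbb{E}[(H_0)_{i,j}] = \bar{H}_{i,j}$ by construction. Because $\|x_i\|\le 1$ for every $i$, each $Z_r^{(i,j)}$ lies in an interval of length at most $2/m$. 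Hoeffding's inequality therefore yields
\begin{equation*}
\Pr\!\left[\, \big|(H_0)_{i,j} - \bar{H}_{i,j}\big| > t \,\right] \;\le\; 2\exp\!\left( -\tfrac{m t^2}{2} \right).
\end{equation*}

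\textbf{Step 2 (Union bound and Frobenius norm control).} Take a union bound over all $n^2$ entries and choose $t = \lambda_{\min}(\bar H)/(4n)$. Setting $m \ge C\,\lambda_{\min}(\bar H)^{-2}\,n^2 \log(n/\delta)$ for a sufficiently large absolute constant $C$ ensures that with probability at least $1-\delta$,
\begin{equation*}
\max_{i,j} \big|(H_0)_{i,j} - \bar H_{i,j}\big| \;\le\; \frac{\lambda_{\min}(\bar H)}{4n}.
\end{equation*}
Consequently $\|H_0 - \bar H\|_F \le \sqrt{n^2}\cdot \lambda_{\min}(\bar H)/(4n) = \lambda_{\min}(\bar H)/4$, which is the first conclusion.

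\textbf{Step 3 (From Frobenius to eigenvalues).} Since $\|H_0 - \bar H\|_2 \le \|H_0 - \bar H\|_F \le \lambda_{\min}(\bar H)/4$, Weyl's inequality applied to the symmetric pair $(H_0, \bar H)$ gives $|\lambda_i(H_0) - \lambda_i(\bar H)| \le \lambda_{\min}(\bar H)/4$ for every $i$. In particular $\lambda_{\min}(H_0) \ge \lambda_{\min}(\bar H) - \lambda_{\min}(\bar H)/4 = \tfrac{3}{4}\lambda_{\min}(\bar H)$ and $\lambda_{\max}(H_0) \le \lambda_{\max}(\bar H) + \lambda_{\min}(\bar H)/4$, yielding the remaining two conclusions.

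\textbf{Main obstacle.} There is no real obstacle: both the concentration step and the translation from spectral to eigenvalue deviation are classical. The only point requiring a little care is aligning the rate $m = \Omega(\lambda^{-2} n^2 \log(n/\delta))$ with the threshold $t = \Theta(\lambda/n)$ needed to get an $O(\lambda)$ Frobenius bound after summing $n^2$ squared deviations; the $n^2$ inside the rate is exactly the cost of controlling $n^2$ entries each to tolerance $\lambda/n$.
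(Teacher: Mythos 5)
Your proof is correct, and it is essentially the standard argument behind the cited result: the paper itself does not reprove this lemma but imports it from \citet{DZPS19} and \citet{ZY19}, whose proof is exactly your entrywise Hoeffding bound, union bound over the $n^2$ entries at tolerance $\Theta(\lambda/n)$, Frobenius-norm assembly, and Weyl's inequality to pass to the eigenvalue bounds. No gaps; the only cosmetic remark is that the interval length for each summand can be taken as $1/m$ rather than $2/m$, which only improves constants absorbed by the $\Omega(\cdot)$ requirement on $m$.
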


Lemma~\ref{lem:ReLU-A} shows that by the random initialization, with probability $1-\delta$, the least eigenvalue of the Gram matrix $H : = H_0$ defined in Lemma~\ref{lem:ReLU-residual} is lower-bounded and the largest eigenvalue is close to $\lambda_{\max}( \bar{H} )$. 
Furthermore, Lemma~\ref{lem:ReLU-A} implies that
the condition number of the Gram matrix $H_0$ at the initialization $\hat{\kappa}:= \frac{\lambda_{\max}(H_0) }{\lambda_{\min}(H_0) }$ satisfies
$\hat{\kappa} 
\leq \frac{4}{3} \kappa + \frac{1}{3},$
where $\kappa:= \frac{\lambda_{\max}(\bar{H}) }{\lambda_{\min}(\bar{H}) }$.

\begin{theorem} \label{thm:acc}
(One-layer ReLU network $\N_W^{\text{ReLU}}(x)$)
Assume that $\lambda := \frac{3\lambda_{\min}( \bar{H})}{4} > 0$ and that $w_0^{(r)} \sim N(0,I_d)$ and $a_r$ uniformly sampled from $\{-1,1\}$.
Denote $\lambda_{\max} := \lambda_{\max} (\bar{H}) + \frac{\lambda_{\min}(\bar{H})}{4}$ and denote $\hat{\kappa} := \lambda_{max} / \lambda = (4 \kappa + 1) / 3 $. 
Set a constant step size $\eta = \frac{1}{ \lambda_{\max} }$,
fix momentum parameter $\beta  
= \big( 1 - \frac{1}{2 \hat{\kappa} }  \big)^2$, and finally set the number of network nodes $m = \Omega( \lambda^{-4} n^{4} \kappa^2 \log^3 ( n / \delta))$.
Then, with probability at least $1-\delta$ over the random initialization,
gradient descent with Polyak's momentum satisfies for any $t$,
\begin{equation} \label{eq:thm0}
\left\|
\begin{bmatrix}
\xi_{t}  \\
\xi_{t-1}  
\end{bmatrix}
\right\| \leq \left( 1 - \frac{1}{4 \sqrt{\hat{\kappa} } }  \right)^{t}
\cdot 8  \sqrt{ \hat{\kappa} }
\left\|
 \begin{bmatrix}
\xi_0 \\
\xi_{-1}
\end{bmatrix}
\right\|.
\end{equation}
\end{theorem}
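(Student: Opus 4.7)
The plan is to instantiate the meta theorem (Theorem~\ref{thm:meta}) on the residual dynamics produced by Lemma~\ref{lem:ReLU-residual}, with the step size $\eta=1/\lambda_{\max}$ and momentum $\beta$ chosen so that Corollary~\ref{corr:1} applies and gives $\|A^k v_0\|\leq (\sqrt{\beta})^k C_0\|v_0\|$ with $C_0\leq 4\sqrt{\hat\kappa}$. Matching the target rate in (\ref{eq:thm0}) suggests taking $C_1=4\sqrt{\hat\kappa}$ and $C_2=\tfrac{1}{4\sqrt{\hat\kappa}}$, so $\sqrt{\beta}+C_2 = 1-\tfrac{1}{4\sqrt{\hat\kappa}}$ and $C_0+C_1=8\sqrt{\hat\kappa}$, which makes the condition (\ref{eq:C-meta}) reduce to $C_3\leq C_1$. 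The whole proof is then an induction on $t$: assume (\ref{eq:thm0}) holds at $s=0,\dots,t-1$ and verify the forcing-term bound (\ref{eq:thm1}) at iteration $t$.

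The first technical step is a \emph{trajectory-radius} lemma. Unrolling the per-neuron momentum update gives $\|w_s^{(r)}-w_0^{(r)}\|\leq \eta\sum_{q<s}\|M_{q}^{(r)}\|$, and substituting the inductive bound on $\|u_q-y\|$ (together with the geometric series in $\beta$ controlling the momentum aggregation) yields a uniform radius $R=O\!\big(\sqrt{\hat\kappa}\,\eta\|\xi_0\|/\sqrt{m}\big)$ up to polynomial factors in $n$ and $1/\lambda$. With this $R$ in hand, the standard Gaussian anti-concentration argument (as in \citet{DZPS19,ZY19}) shows that, with probability $\ge 1-\delta$ over the initialization, $|S_i^\perp|\lesssim mR$ for every $i$. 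This simultaneously controls both pieces of $\varphi_s=\phi_s+\iota_s$: the explicit $|S_i^\perp|/m$ factor in the bound on $|\phi_s[i]|$ from Lemma~\ref{lem:ReLU-residual}, and the spectral-norm bound $\|H_0-H_s\|\lesssim R$ governing $\iota_s=\eta(H_0-H_s)\xi_s$. The width requirement $m=\Omega(\lambda^{-4}n^4\kappa^2\log^3(n/\delta))$ is chosen precisely to force $R$ small enough that $\|\varphi_s\|\leq \tfrac{c}{\sqrt{\hat\kappa}}\|u_s-y\|$ for a suitably small constant $c$, after absorbing the $n$, $\lambda$, $\kappa$ factors.

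To verify (\ref{eq:thm1}), bound
\[
\bigl\|\textstyle\sum_{s=0}^{t-1}A^{t-s-1}\begin{bmatrix}\varphi_s\\0\end{bmatrix}\bigr\|
\leq \sum_{s=0}^{t-1}(\sqrt{\beta})^{t-s-1} C_0\,\|\varphi_s\|
\]
using Theorem~\ref{thm:akv}. Plugging in the bound $\|\varphi_s\|\leq \tfrac{c}{\sqrt{\hat\kappa}}\|u_s-y\|$ together with the inductive hypothesis on $\|\xi_s\|$, the sum becomes a geometric series in the ratio $\sqrt{\beta}/(1-\tfrac{1}{4\sqrt{\hat\kappa}})<1$. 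Factoring out $(1-\tfrac{1}{4\sqrt{\hat\kappa}})^t$, the remaining series sums to $O(\sqrt{\hat\kappa})$ and is absorbed into $C_3\leq C_1=4\sqrt{\hat\kappa}$ by the small-constant choice. Theorem~\ref{thm:meta} then delivers (\ref{eq:thm2}), which is exactly (\ref{eq:thm0}), closing the induction.

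The main obstacle is the feedback in the forcing term: $\phi_s$ carries an accumulated momentum tail $\beta\sum_{q<s}\beta^{s-1-q}\|u_q-y\|$, so bounding $\|\varphi_s\|$ at time $s$ genuinely requires the inductive hypothesis at all earlier times, and one must verify that the geometric decay $(1-\tfrac{1}{4\sqrt{\hat\kappa}})^q$ in that hypothesis dominates the momentum weight $\beta^{s-1-q}$ fast enough for the resulting bound to retain the $\tfrac{c}{\sqrt{\hat\kappa}}$ shrinkage. This is the reason the target rate is taken as $(1-\tfrac{1}{4\sqrt{\hat\kappa}})$ rather than the bare $\sqrt{\beta}=1-\tfrac{1}{2\sqrt{\hat\kappa}}$ — the $C_2$ slack is exactly what is needed to swallow the feedback, and calibrating the width $m$ so that $c$ is small enough is what makes the whole induction self-consistent.
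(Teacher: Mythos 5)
Your architecture is the same as the paper's: a double induction that simultaneously maintains (i) the residual bound at rate $\theta=\sqrt{\beta}+\tfrac{1}{4}\sqrt{\eta\lambda}=1-\tfrac{1}{4\sqrt{\hat\kappa}}$ and (ii) a trajectory-radius bound on $\|w_s^{(r)}-w_0^{(r)}\|$, combined with the anti-concentration bound on $|S_i^\perp|$ (Lemma~\ref{lem:3.12}), the initial-error bound (Lemma~\ref{lem:3.10}), Theorem~\ref{thm:akv} to control the matrix powers acting on the forcing terms, and Theorem~\ref{thm:meta} with $C_2=\tfrac14\sqrt{\eta\lambda}$ — this is exactly the route of Lemmas~\ref{lem:ReLU-deviate1} and~\ref{lem:ReLU-deviate2} and the proof in Appendix~\ref{app:sec:relu}. (Your use of the trajectory radius itself as the radius in the anti-concentration step is fine once $\|\xi_0\|$ is replaced by its a priori high-probability bound; the paper instead fixes the radius $R^{\text{ReLU}}$ in advance and shows the trajectory stays inside it, which is the same thing.)

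There is, however, a quantitative slip at the one step that makes the induction close. You claim it suffices to force $\|\varphi_s\|\le \tfrac{c}{\sqrt{\hat\kappa}}\|u_s-y\|$ for a small \emph{constant} $c$. Track the $\hat\kappa$ factors in your own verification of (\ref{eq:thm1}): Theorem~\ref{thm:akv} contributes $C_0=\Theta(\sqrt{\hat\kappa})$, the inductive hypothesis contributes $8\sqrt{\hat\kappa}$, and the series $\sum_{s<t}(\sqrt{\beta})^{t-1-s}\theta^{s}\le \theta^{t-1}\cdot O(\sqrt{\hat\kappa})$ contributes another $\sqrt{\hat\kappa}$. With $\|\varphi_s\|\le\tfrac{c}{\sqrt{\hat\kappa}}\|\xi_s\|$ this yields $C_3=O(c\,\hat\kappa)$, which violates your own requirement $C_3\le C_1=4\sqrt{\hat\kappa}$ for large $\hat\kappa$ unless $c=O(1/\sqrt{\hat\kappa})$. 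In other words, the forcing term must be controlled at the stronger level $\|\varphi_s\|\lesssim \tfrac{1}{\hat\kappa}\|\xi_s\|$, i.e.\ with an extra factor $1/C_0$. This is precisely what the paper builds in by choosing $R^{\text{ReLU}}=\tfrac{\lambda}{1024\,n\,C_0}$, so that $|S_i^\perp|\le 4mR^{\text{ReLU}}$ and $\|H_0-H_s\|_F\le\tfrac{\lambda}{512\,C_0}$ both carry a $1/C_0$, and Lemma~\ref{lem:ReLU-deviate2} then gives $\|\varphi_s\|\le\bigl(\tfrac{\sqrt{\eta\lambda}}{16}+\tfrac{\eta\lambda}{512}\bigr)\theta^{s}\nu\|\xi_0\|$, which is smaller than the inductive residual bound $\theta^{s}\nu C_0\|\xi_0\|$ by a factor $\Theta\bigl(\tfrac{1}{\sqrt{\hat\kappa}\,C_0}\bigr)=\Theta(\hat\kappa^{-1})$. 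Your stated width $m=\Omega(\lambda^{-4}n^4\kappa^2\log^3(n/\delta))$ (note $\kappa^2=\Theta(C_0^4)$) is exactly what is needed to keep the trajectory inside this $1/C_0$-scaled radius, so the fix is available inside your own setup; but as written, the constant-$c$ threshold does not close the induction.
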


We remark that $\hat{\kappa}$, which is the condition number of the Gram matrix $H_0$, is within a constant factor of the condition number of $\bar{H}$. 
Therefore, Theorem~\ref{thm:acc} essentially shows an accelerated linear rate 
$\left( 1 - \Theta ( \frac{1}{\sqrt{\kappa}}) \right)$.
The rate has an improved dependency on the condition number, i.e. $\sqrt{\kappa}$ instead of $\kappa$, which shows the advantage of Polyak's momentum over vanilla GD when the condition number is large.  
We believe this is an interesting result, as the acceleration is akin to that in convex optimization, e.g. \citet{N13,SDJS18}.

Our result also implies that
over-parametrization helps acceleration in optimization.
To our knowledge, in the literature, there is little theory of understanding why over-parametrization can help training a neural network faster.
The only exception that we are aware of is \citet{ACH18}, which shows that the dynamic of vanilla gradient descent for an over-parametrized objective function exhibits some momentum terms, although their message is very different from ours. The proof of Theorem~\ref{thm:acc} is in Appendix~\ref{app:sec:relu}.

\subsection{Acceleration for training $\N_W^{L\text{-linear}}(x)$}

\begin{theorem} \label{thm:LinearNet}
(Deep linear network $\N_W^{L\text{-linear}}(x)$)
Denote $\lambda:= \frac{L \sigma_{\min}^2(X)} {d_y}$ and $\kappa:= \frac{\sigma_{\max}^2(X)}{\sigma_{\min}^2(X)}$.
Set a constant step size $\eta = \frac{d_y}{L \sigma_{\max}^2(X)}$,
fix momentum parameter $\beta 
= \big( 1 - \frac{1}{2 \sqrt{\kappa}} \big)^2 $, and finally set a parameter $m$ that controls the width $m \geq C \frac{ \kappa^5 }{\sigma^2_{\max}(X)} \left( 
d_y( 1 + \|W^* \|^2_2) + \log ( \bar{r} / \delta)  \right)$ and $m \geq \max \{ d_x, d_y\}$ for some constant $C>0$.
Then, with probability at least $1-\delta$ over the random orthogonal initialization,
gradient descent with Polyak's momentum satisfies for any $t$,
\begin{equation} \label{eq:thm-LinearNet}
\left\|
\begin{bmatrix}
\xi_t \\
\xi_{t-1} 
\end{bmatrix}
\right\| \leq \left( 1 - \frac{1}{4 \sqrt{\kappa} } \right)^{t}
\cdot 8 \sqrt{\kappa}
\left\|
 \begin{bmatrix}
\xi_0 \\
\xi_{-1}
\end{bmatrix}
\right\|.
\end{equation}

\end{theorem}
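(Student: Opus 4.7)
The overall plan is to instantiate the meta-theorem (Theorem~\ref{thm:meta}) in the residual dynamics produced by Lemma~\ref{lem:DL-residual}. Once the dynamics are in the form (\ref{eq:meta}) with $H=H_0$ and $\varphi_t=\phi_t+\psi_t+\iota_t$, Theorem~\ref{thm:akv} and Corollary~\ref{corr:1} handle the autonomous part $A^t$. The remaining burden is therefore (i) to verify the spectral setup for $H_0$ (so that Corollary~\ref{corr:1} supplies $C_0=O(\sqrt{\kappa})$), and (ii) to complete the induction by bounding $\|\sum_{s=0}^{t-1} A^{t-s-1}[\varphi_s;0]\|$ in the form (\ref{eq:thm1}) with a suitably small constant $C_2,C_3$ that only shave the base rate $\sqrt{\beta}=1-\tfrac{1}{2\sqrt{\kappa}}$ down to $1-\tfrac{1}{4\sqrt{\kappa}}$.

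First, using the orthogonal initialization, I would show that with probability $1-\delta$ the Gram matrix $H_0=\frac{1}{m^{L-1}d_y}\sum_l (W^{(l-1:1)}_0 X)^\top (W^{(l-1:1)}_0 X)\otimes W^{(L:l+1)}_0(W^{(L:l+1)}_0)^\top$ concentrates (depth-independently, thanks to the exact isometries $(W^{(l)}_0)^\top W^{(l)}_0=mI$) so that $\tfrac{1}{2}\cdot\tfrac{L\sigma_{\min}^2(X)}{d_y}\le \lambda_{\min}(H_0)\le\lambda_{\max}(H_0)\le 2\cdot\tfrac{L\sigma_{\max}^2(X)}{d_y}$; this is essentially Lemma in \citet{HXP20} (which is where the $m\gtrsim\kappa^2 d_y(1+\|W^*\|^2)$ part of the width bound originates). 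Combined with the chosen $\eta=\frac{d_y}{L\sigma_{\max}^2(X)}$ and $\beta=(1-\tfrac{1}{2\sqrt{\kappa}})^2$, Corollary~\ref{corr:1} gives $\|A^k v_0\|\le (\sqrt{\beta})^k\cdot O(\sqrt{\kappa})\,\|v_0\|$.

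Next comes the induction. Assuming (\ref{assump:eq}) holds for $s=0,\ldots,t-1$ with target rate $(1-\tfrac{1}{4\sqrt{\kappa}})$ and constant $8\sqrt{\kappa}$, I first derive a cumulative weight-movement bound: each $\|W^{(l)}_s-W^{(l)}_0\|_F$ is at most a geometric sum of $\eta\|\nabla_l \ell(W_r)\|$ weighted by the momentum decay, bounded by $\frac{\eta\sqrt{m^{L-1}}}{1-\sqrt{\beta}}\|\xi_0\|$ up to constants. The factor $1/(1-\sqrt{\beta})=O(\sqrt{\kappa})$ is precisely the extra price that momentum pays over vanilla GD, and it is compensated by enlarging the width from $\kappa^2$ (as in Theorem~\ref{thm:hu}) to $\kappa^5$ in the statement. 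Given this movement bound, I would then control each of the three pieces of $\varphi_s$ separately: $\iota_s=\eta(H_0-H_s)\xi_s$ via a perturbation bound on the multilinear map defining $H$ (so $\|H_0-H_s\|_2$ is small whenever every $\|W^{(l)}_s-W^{(l)}_0\|_2$ is); $\phi_s$, the higher-order remainder of $\prod_l(W^{(l)}_s-\eta M_{s,l})$ past first order, via a telescoping expansion yielding $\|\phi_s\|\le O(\eta^2)\cdot\text{poly}\cdot\|\xi_s\|$; and $\psi_s$, the ``stale layer'' correction, by reexpressing it as a telescoping sum of $(W^{(l)}_s-W^{(l)}_{s-1})$ across layers and invoking the momentum's geometric bound on $\|W^{(l)}_s-W^{(l)}_{s-1}\|$. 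The normalization $\frac{1}{\sqrt{m^{L-1}d_y}}$ cancels the $L$-layer product scales, producing depth-independent estimates $\|\varphi_s\|\le \epsilon(m)\|\xi_s\|+\text{(momentum tail in }\|\xi_{s-1}\|,\ldots)$ with $\epsilon(m)\to 0$ as $m\to\infty$.

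Finally, I would plug these bounds into $\sum_{s=0}^{t-1} A^{t-s-1}[\varphi_s;0]$, apply Theorem~\ref{thm:akv} to each summand and use the induction hypothesis on $\|\xi_s\|$. The resulting double sum is, after the geometric series is summed, of the form $\epsilon(m)\cdot\sqrt{\kappa}\cdot t\cdot(\sqrt{\beta})^{t-1}\|\xi_0;\xi_{-1}\|$, which can be absorbed into a slightly slower exponential $(\sqrt{\beta}+C_2)^t$ provided $\epsilon(m)\le O(\kappa^{-3/2})$; this is exactly what the assumption $m\gtrsim \kappa^5(\ldots)$ is engineered to give, and it is where the fifth power of $\kappa$ originates (two $\sqrt{\kappa}$ factors from the movement amplification by momentum, one $\sqrt{\kappa}$ from $C_0$, one $\sqrt{\kappa}$ from converting $t(\sqrt{\beta})^t$ into a slightly slower exponential). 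Choosing $C_2=\tfrac{1}{4\sqrt{\kappa}}$ and $C_1+C_0\le 8\sqrt{\kappa}$ then makes (\ref{eq:C-meta}) hold, and Theorem~\ref{thm:meta} closes the induction, yielding (\ref{eq:thm-LinearNet}).

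The main obstacle I anticipate is the simultaneous treatment of $\phi_s$ and $\psi_s$ for a $L$-layer product in a depth-\emph{independent} way. The argument requires expanding a product of $L$ matrices each perturbed by two sources (a gradient step and a momentum step) and verifying that cross-terms do not accumulate in $L$; this is the same essential difficulty that \citet{HXP20} face for vanilla GD, except now momentum introduces stale iterates $W^{(l)}_{s-1}$ in some factors and current iterates in others, so one must carefully telescope $\psi_s$ rather than naively bound it. The orthogonal initialization is crucial here: it guarantees that at initialization every product $W^{(L:l+1)}_0 W^{(l-1:1)}_0$ has exactly controllable spectra, so that perturbations around initialization propagate additively rather than multiplicatively in $L$.
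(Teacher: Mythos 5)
Your overall architecture is the same as the paper's: realize the residual dynamics via Lemma~\ref{lem:DL-residual}, use Theorem~\ref{thm:akv}/Corollary~\ref{corr:1} with $\eta=\frac{d_y}{L\sigma_{\max}^2(X)}$, $\beta=(1-\frac{1}{2\sqrt\kappa})^2$ to get $C_0=O(\sqrt\kappa)$ (note: the spectral bounds $\lambda_{\min}(H_0)\ge L\sigma_{\min}^2(X)/d_y$, $\lambda_{\max}(H_0)\le L\sigma_{\max}^2(X)/d_y$ are exact consequences of the orthogonal initialization, Lemma~\ref{lem:DL-A}; only the bound $\|U_0-Y\|_F\le B_0$ is probabilistic), then close an induction through Theorem~\ref{thm:meta} by bounding the weight movement and the three pieces $\phi_s,\psi_s,\iota_s$, exactly as in Lemmas~\ref{lem:linear-deviate1} and~\ref{lem:linear-deviate2}. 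The bookkeeping differences (your $t(\sqrt\beta)^{t}$-absorption versus the paper's direct geometric sum $\sum_s\beta_*^{t-1-s}\theta^s\le\frac{4}{\sqrt{\eta\lambda}}\theta^{t-1}$ at the slower induction rate $\theta=1-\frac14\sqrt{\eta\lambda}$, and your slightly different accounting of where the $\kappa^5$ comes from) are harmless.

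The one genuine gap is your treatment of $\psi_s$. As written, your plan is to express $\psi_s$ as a first-order telescoping sum over layers in $(\W{l}_s-\W{l}_{s-1})$ and then apply the momentum bound $\|\W{l}_s-\W{l}_{s-1}\|\le\eta\|M_{s-1,l}\|$ term by term. That estimate fails: each first-order term has Frobenius norm of order $\frac{\beta}{\sqrt{m^{L-1}d_y}}\,m^{\frac{L-1}{2}}\|X\|_2\,\eta\|M_{s-1,l}\|$, and summing over the $L$ layers with $\eta\|M_{s-1,l}\|\lesssim \eta\frac{\|X\|_2}{\sqrt{d_y}}\frac{\theta^{s-1}}{1-\theta}C_0\|\xi_0\|$ yields a contribution of order $\kappa\,\theta^{s}\|\xi_0\|$ that is \emph{independent of $m$} and far exceeds the $\frac{1}{\sqrt\kappa}\theta^s\|\xi_0\|$ budget needed for (\ref{eq:thm1}); no choice of width rescues it. The point the paper relies on (and which your sketch gestures at but does not supply) is an exact cancellation: after substituting $\W{l}_{t-1}=\W{l}_t-\eta M_{t-1,l}$, the specific combination $(L-1)\beta\W{L:1}_t+\beta\W{L:1}_{t-1}-\beta\sum_l\W{L:l+1}_t\W{l}_{t-1}\W{l-1:1}_t$ has vanishing zeroth- and first-order parts ($B_0=B_1=0$ in the paper's expansion; equivalently, the double telescoping leaves $\sum_l\W{L:l+1}_t\,\delta_l\,(\W{l-1:1}_t-\W{l-1:1}_{t-1})$, which is quadratic in the per-step updates). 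Only then does each surviving term carry at least two factors of $\eta M_{t-1,\cdot}$, so that the $\frac{1}{\sqrt{m^{L-1}d_y}}$ normalization leaves a net $1/\sqrt m$ and the width assumption makes $\|\psi_s\|$ negligible. You need to make this cancellation explicit (and the same second-order structure for $\phi_s$, whose smallness likewise comes from $1/\sqrt m$, not from ``$O(\eta^2)$'' per se); with it, your argument coincides with the paper's proof.
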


Compared with Theorem~\ref{thm:hu} of \citet{HXP20} for vanilla GD, our result clearly shows the acceleration via Polyak's momentum.
Furthermore, the result suggests that the depth does not hurt optimization. Acceleration is achieved for any depth $L$ and the required width $m$
is independent of the depth $L$ as \citet{HXP20,ZLG20} (of vanilla GD).
The proof of Theorem~\ref{thm:LinearNet} is in Appendix~\ref{app:sec:linear}.

\section{Conclusion}

We show some non-asymptotic acceleration results of the discrete-time Polyak's momentum in this paper.
The results not only improve the previous results in convex optimization but also establish the first time that Polyak's momentum has provable acceleration for training certain neural networks.  
We analyze all the acceleration results from a modular framework.
We hope the framework can serve as a building block towards understanding Polyak's momentum in a more unified way.

\section{Acknowledgment}
The authors thank Daniel Pozo for catching a typo. The authors acknowledge support of NSF IIS Award 1910077. JW also thanks IDEaS-TRIAD Research Scholarship 03GR10000818.

\bibliography{modular_Acc}
\bibliographystyle{icml2021}


\appendix

\onecolumn

\section{Linear-rate results of the discrete-time Polyak's momentum} \label{app:history}

In the discrete-time setting,
for general smooth, strongly convex, and differentiable functions, a linear rate of the global convergence is shown by \citet{GFJ15} and \citet{SDJS18}.
However, the rate is not an accelerated rate and is not better than that of the vanilla gradient descent. 
To our knowledge, the class of the strongly convex quadratic problems is the only known example that Polyak's momentum has a provable \emph{accelerated linear rate} in terms of the \emph{global convergence} in the \emph{discrete-time} setting.

\section{Proof of Lemma~\ref{lem:SC-residual}, Lemma~\ref{lem:ReLU-residual},
and Lemma~\ref{lem:DL-residual}
 } \label{app:sec:meta}

\noindent
\textbf{Lemma}~\ref{lem:SC-residual}:
\textit{ 
Applying Algorithm~\ref{alg:HB1} or Algorithm~\ref{alg:HB2} to minimizing a function $f(w) \in F_{\mu,\alpha}^2$ induces a residual dynamics in the form of 
(\ref{eq:meta}), where 
\[
\begin{aligned}
\xi_t & = w_t - w_*
\\ H     & = \int_0^1 \nabla^2 f\big( (1-\tau) w_0 + \tau w_* \big) d \tau
\\ \varphi_t & = \eta \left( \int_0^1 \nabla^2 f\big( (1-\tau) w_0 + \tau w_* \big) d \tau -  \int_0^1 \nabla^2 f\big( (1-\tau) w_t + \tau w_* \big) d \tau \right) (w_t - w_*),
\end{aligned}
\]
where $w_*:=\arg\min_w f(w)$.
}

\begin{proof}
We have
\begin{equation} 
\begin{aligned}
\begin{bmatrix}
w_{t+1} - w_* \\
w_{t} - w_*
\end{bmatrix}
& =
\begin{bmatrix}
I_d  + \beta I_d &  - \beta I_d \\
I_d                      &  0_d
\end{bmatrix}
\cdot
 \begin{bmatrix}
w_{t} - w_* \\
w_{t-1} - w_* 
\end{bmatrix}
+ 
 \begin{bmatrix}
-\eta \nabla f(w_t) \\
 0 
\end{bmatrix}
\\ & =
\begin{bmatrix}
I_d  - \eta \int_0^1 \nabla^2 f\big( (1-\tau) w_t + \tau w_* \big) d \tau + \beta I_d &  - \beta I_d \\
I_d                      &  0_d
\end{bmatrix}
\cdot
 \begin{bmatrix}
w_{t} - w_* \\
w_{t-1} - w_* 
\end{bmatrix}
\\ & =
\begin{bmatrix}
I_d  - \eta \int_0^1 \nabla^2 f\big( (1-\tau) w_0 + \tau w_* \big) d \tau + \beta I_d &  - \beta I_d \\
I_d                      &  0_d
\end{bmatrix}
\cdot
 \begin{bmatrix}
w_{t} - w_* \\
w_{t-1} - w_* 
\end{bmatrix}
\\ & + 
\eta \left( 
\int_0^1 \nabla^2 f\big( (1-\tau) w_0 + \tau w_* \big) d \tau
- \int_0^1 \nabla^2 f\big( (1-\tau) w_t + \tau w_* \big) d \tau
 \right) (w_t - w_*),
\end{aligned}
\end{equation}
where the second equality is by the fundamental theorem of calculus.
\begin{equation}
\nabla f(w_t) - \nabla f(w_*) = \left( \int_0^1  \nabla^2 f( (1-\tau) w_t + \tau w_* ) d\tau  \right) (w_t - w_*),
\end{equation}
and that $\nabla f(w_*) = 0$.
\end{proof}

\noindent
\textbf{Lemma}~\ref{lem:ReLU-residual}:
\textit{
 (Residual dynamics of training the ReLU network $\N_W^{\text{ReLU}}(\cdot)$)
Denote 
\[
(H_t)_{i,j}:= H(W_t)_{i,j} = \frac{1}{m} \sum_{r=1}^m x_i^\top x_j \mathbbm{1}\{ \langle w^{(r)}_t, x_i \rangle \geq 0 \text{ } \&  \text{ }   \langle w^{(r)}_t, x_j \rangle \geq 0 \}.
\]
Applying Algorithm~\ref{alg:HB1} or Algorithm~\ref{alg:HB2} to (\ref{eq:obj}) for training the ReLU network 
$\N_W^{\text{ReLU}}(x)$ induces a residual dynamics in the form of 
(\ref{eq:meta}) such that
\[
\begin{aligned}
\xi_t[i] &= \N_{W_t}^{\text{ReLU}}(x_i) - y_i \quad  \text{and hence } n_0 = d
\\ H     & = H_0
\\ \varphi_t & = \phi_t + \iota_t,
\end{aligned}
\]
where 
each element $i$ of $\xi_t \in \reals^n$ is the residual error of the sample $i$,
the $i_{th}$-element of $\phi_t\in \reals^n$ satisfies \[ \textstyle |\phi_t[i]| \leq \frac{ 2 \eta \sqrt{n} |S_i^\perp|}{ m } \big(  \| u_t - y \| + \beta \sum_{s=0}^{t-1} \beta^{t-1-s}  \| u_s - y \| \big),\]
and $\iota_t = \eta  \left( H_0 - H_t \right) \xi_t \in \reals^n $.
}

\begin{proof}
For each sample $i$, we will divide the contribution to $\N(x_i)$ into two groups.
\begin{equation} \label{eq:Ndiv}
\begin{aligned}
 \N(x_i)
& 
  = \frac{1}{\sqrt{m} }  \sum_{r=1}^m a_r \sigma( \langle w^{(r)}, x_i \rangle  )
\\ &   = \frac{1}{\sqrt{m} }  \sum_{r \in S_i} 
a_r \sigma( \langle w^{(r)}, x_i \rangle  )
+ \frac{1}{\sqrt{m} }  \sum_{r \in S_i^\perp} 
a_r \sigma( \langle w^{(r)}, x_i \rangle  ).
\end{aligned}
\end{equation}
To continue, let us recall some notations;
the subgradient with respect to $w^{(r)} \in \reals^d$ is
\begin{equation}
\frac{ \partial L(W)}{ \partial w^{(r)} }
 := \frac{1}{\sqrt{m} } 
\sum_{i=1}^n \big( \N(x_i) - y_i  \big) a_r x_i \mathbbm{1}\{
 \langle w^{(r)}, x \rangle \geq 0 \},
\end{equation}
and the Gram matrix $H_t$ whose $(i,j)$ element is
\begin{equation}
H_t[i,j] := \frac{1}{m} x_i^\top x_j \sum_{r=1}^m \mathbbm{1} \{ \langle w_{t}^{(r)} , x_i \rangle \geq 0 \text{ \& } \langle w_{t}^{(r)} , x_j \rangle \geq 0 \}.
\end{equation}
Let us also denote
\begin{equation}
H_t^\perp[i,j] := \frac{1}{m} x_i^\top x_j \sum_{r \in S_i^\perp} \mathbbm{1} \{ \langle w_{t}^{(r)} , x_i \rangle \geq 0 \text{ \& } \langle w_{t}^{(r)} , x_j \rangle \geq 0 \}.
\end{equation}
We have that
\begin{equation} \label{eq:a0}
\begin{split}
\xi_{t+1}[i]  & =  \N_{t+1}(x_i) - y_i
\\ & \overset{(\ref{eq:Ndiv})}{ = }
\underbrace{
\frac{1}{\sqrt{m} }  \sum_{r \in S_i} 
a_r \sigma( \langle w_{t+1}^{(r)}, x_i \rangle  ) }_{ \text{first term} }
+ \frac{1}{\sqrt{m} }  \sum_{r \in S_i^\perp} 
a_r \sigma( \langle w_{t+1}^{(r)}, x_i \rangle  ) - y_i.
\end{split}
\end{equation}
For the first term above, we have that
\begin{equation} \label{eq:a1}
\begin{split}
\textstyle  & \underbrace{
 \frac{1}{\sqrt{m} } \sum_{r \in S_i} 
a_r \sigma( \langle w_{t+1}^{(r)}, x_i \rangle  ) }_{ \text{first term} }
=
\frac{1}{\sqrt{m} }  \sum_{r \in S_i} 
a_r \sigma( \langle w_t^{(r)} - \eta \frac{ \partial L(W_t)}{ \partial w_t^{(r)} } + \beta ( w_{t}^{(r)} - w_{t-1}^{(r)} ), x_i \rangle  )
\\ 
= &
\frac{1}{\sqrt{m} }  \sum_{r \in S_i} 
a_r \langle w_t^{(r)} - \eta \frac{ \partial L(W_t)}{ \partial w_t^{(r)} } + \beta ( w_{t}^{(r)} - w_{t-1}^{(r)} ), x_i \rangle \cdot \mathbbm{1}\{ \langle w_{t+1}^{(r)} , x_i \rangle \geq 0 \}
\\ 
\overset{(a)}{=} & 
\frac{1}{\sqrt{m} }  \sum_{r \in S_i} 
a_r \langle w_t^{(r)}, x_i \rangle \cdot \mathbbm{1}\{ \langle w_{t}^{(r)} , x_i \rangle \geq 0 \}
+ 
\frac{\beta}{\sqrt{m} }  \sum_{r \in S_i} 
a_r \langle w_t^{(r)}, x_i \rangle \cdot \mathbbm{1}\{ \langle w_{t}^{(r)} , x_i \rangle \geq 0 \}
\\ 
& 
-
\frac{\beta}{\sqrt{m} }  \sum_{r \in S_i} 
a_r \langle w_{t-1}^{(r)}, x_i \rangle \cdot \mathbbm{1}\{ \langle w_{t-1}^{(r)} , x_i \rangle \geq 0 \}
- \eta \frac{1}{\sqrt{m} }  \sum_{r \in S_i} a_r \langle \frac{ \partial L(W_t)}{ \partial w_t^{(r)} }, x_i \rangle \mathbbm{1}\{ \langle w_{t}^{(r)} , x_i \rangle \geq 0 \}
\\  
=
&
\N_t(x_i) + \beta \big(  \N_t(x_i) -  \N_{t-1}(x_i)  \big)
- \frac{1}{\sqrt{m}} \sum_{r \in S_i^\perp} 
a_r \langle w_t^{(r)}, x_i \rangle \mathbbm{1}\{ \langle w_{t}^{(r)} , x_i \rangle \geq 0 \} 
\\ &
- 
 \frac{\beta}{\sqrt{m}} \sum_{r \in S_i^\perp}
a_r \langle w_{t}^{(r)}, x_i \rangle \mathbbm{1}\{ \langle w_{t}^{(r)} , x_i \rangle \geq 0 \}  
+ \frac{\beta}{\sqrt{m}} \sum_{r \in S_i^\perp} a_r \langle w_{t-1}^{(r)}, x_i \rangle \mathbbm{1}\{ \langle w_{t-1}^{(r)} , x_i \rangle \geq 0 \}
           \big) 
\\ &
           - \eta \underbrace{  \frac{1}{\sqrt{m} }  \sum_{r \in S_i} a_r \langle \frac{ \partial L(W_t)}{ \partial w_t^{(r)} }, x_i \rangle \mathbbm{1}\{ \langle w_{t}^{(r)} , x_i \rangle \geq 0 \} }_{\text{last term}},
\end{split}
 \end{equation}
 where (a) uses that for $r \in S_i$,
$\mathbbm{1}\{ \langle w_{t+1}^{(r)} , x_i \rangle \geq 0 \}
= \mathbbm{1}\{ \langle w_{t}^{(r)} , x_i \rangle \geq 0 \}
= \mathbbm{1}\{ \langle w_{t-1}^{(r)} , x_i \rangle \geq 0 \}$ as the 
 neurons in $S_i$ do not change their activation patterns.
We can further bound (\ref{eq:a1}) as
 \begin{equation} \label{eq:a3}
\begin{split}
\overset{(b)}{=}
&
\N_t(x_i) + \beta \big(  \N_t(x_i) -  \N_{t-1}(x_i)  \big)
- \eta
\sum_{j=1}^n \big( \N_t(x_j) - y_j \big)  H(W_t)_{i,j} 
\\ &
- \frac{\eta}{ m }
\sum_{j=1}^n x_i^\top x_j ( \N_t(x_j) - y_j )
 \sum_{r \in S_i^\perp}  
 \mathbbm{1}\{ \langle w_{t}^{(r)} , x_i \rangle \geq 0 \text{ \& } \langle w_{t}^{(r)} , x_j \rangle \geq 0 \}
\\ &
- \frac{1}{\sqrt{m}} \sum_{r \in S_i^\perp} 
a_r \langle w_t^{(r)}, x_i \rangle \mathbbm{1}\{ \langle w_{t}^{(r)} , x_i \rangle \geq 0 \} - 
 \frac{\beta}{\sqrt{m}} \sum_{r \in S_i^\perp}
a_r \langle w_{t}^{(r)}, x_i \rangle \mathbbm{1}\{ \langle w_{t}^{(r)} , x_i \rangle \geq 0 \}  
\\ &
+ \frac{\beta}{\sqrt{m}} \sum_{r \in S_i^\perp} a_r \langle w_{t-1}^{(r)}, x_i \rangle \mathbbm{1}\{ \langle w_{t-1}^{(r)} , x_i \rangle \geq 0 \}
           \big), 
\end{split}
 \end{equation}
where (b) is due to that
\begin{equation}
\begin{split}
 &  \underbrace{ \frac{1}{\sqrt{m} }  \textstyle  \sum_{r \in S_i} a_r \langle \frac{ \partial L(W_t)}{ \partial w_t^{(r)} }, x_i \rangle \mathbbm{1}\{ \langle w_{t}^{(r)} , x_i \rangle \geq 0 \} }_{\text{last term}}
\\   = &  
\frac{1}{ m }
\sum_{j=1}^n x_i^\top x_j ( \N_t(x_j) - y_j )
 \sum_{r \in S_i}  
 \mathbbm{1}\{ \langle w_{t}^{(r)} , x_i \rangle \geq 0 \text{ \& } \langle w_{t}^{(r)} , x_j \rangle \geq 0 \}
\\   = &  
\sum_{j=1}^n \big( \N_t(x_j) - y_j \big) H(W_t)_{i,j}
 -  
\frac{1}{ m }
\sum_{j=1}^n x_i^\top x_j ( \N_t(x_j) - y_j )
 \sum_{r \in S_i^\perp}  
 \mathbbm{1}\{ \langle w_{t}^{(r)} , x_i \rangle \geq 0 \text{ \& } \langle w_{t}^{(r)} , x_j \rangle \geq 0 \}.
 \end{split}
\end{equation}
Combining (\ref{eq:a0}) and (\ref{eq:a3}), we have that
\begin{equation} \label{eq:a2}
\begin{split}
\xi_{t+1}[i] &   =   
\xi_t[i] + \beta \big(  \xi_t[i] -  \xi_{t-1}[i]  \big) - \eta
\sum_{j=1}^n  H_t[i,j]  
\xi_t[j] 
\\ & - \frac{\eta}{ m }
\sum_{j=1}^n x_i^\top x_j ( \N_t(x_j) - y_j )
 \sum_{r \in S_i^\perp}  
 \mathbbm{1}\{ \langle w_{t}^{(r)} , x_i \rangle \geq 0 \text{ \& } \langle w_{t}^{(r)} , x_j \rangle \geq 0 \}
\\ & 
 + \frac{1}{\sqrt{m} }  \sum_{r \in S_i^\perp} 
a_r \sigma( \langle w_{t+1}^{(r)}, x_i \rangle  )
- 
a_r \sigma( \langle w_{t}^{(r)}, x_i \rangle  )
- \beta a_r \sigma( \langle w_{t}^{(r)}, x_i \rangle  )
+\beta a_r \sigma( \langle w_{t-1}^{(r)}, x_i \rangle  ).
\end{split}
\end{equation}
So we can write the above into a matrix form.
\begin{equation} \label{k1}
\begin{split}
\xi_{t+1} & =  (I_n - \eta H_t )\xi_t + \beta ( \xi_t - \xi_{t-1} )
+ \phi_t
\\ & = (I_n - \eta H_0 )\xi_t + \beta ( \xi_t - \xi_{t-1} )
+ \phi_t + \iota_t,
\end{split}
\end{equation}
where 
the $i$ element of $\phi_t \in \reals^n$ is defined as
\begin{equation}
\begin{split}
 \phi_t[i] &  = 
- \frac{\eta}{ m }
\sum_{j=1}^n x_i^\top x_j ( \N_t(x_j) - y_j )
 \sum_{r \in S_i^\perp}  
 \mathbbm{1}\{ \langle w_{t}^{(r)} , x_i \rangle \geq 0 \text{ \& } \langle w_{t}^{(r)} , x_j \rangle \geq 0 \}
\\ &  +
\frac{1}{\sqrt{m} }   \sum_{r \in S_i^\perp} 
\big\{ 
a_r \sigma( \langle w_{t+1}^{(r)}, x_i \rangle  )
- 
a_r \sigma( \langle w_{t}^{(r)}, x_i \rangle  )
- \beta a_r \sigma( \langle w_{t}^{(r)}, x_i \rangle  )
+\beta a_r \sigma( \langle w_{t-1}^{(r)}, x_i \rangle  )
\big\}.
\end{split} 
\end{equation}
Now let us bound $\phi_t[i]$ as follows.
\begin{equation} \label{k2}
\begin{split}
  \phi_t[i] &  =  - \frac{\eta}{ m }
\sum_{j=1}^n x_i^\top x_j ( \N_t(x_j) - y_j )
 \sum_{r \in S_i^\perp}  
 \mathbbm{1}\{ \langle w_{t}^{(r)} , x_i \rangle \geq 0 \text{ \& } \langle w_{t}^{(r)} , x_j \rangle \geq 0 \}
\\ & +
 \frac{1}{\sqrt{m} }   \sum_{r \in S_i^\perp} 
\big\{ 
a_r \sigma( \langle w_{t+1}^{(r)}, x_i \rangle  )
- 
a_r \sigma( \langle w_{t}^{(r)}, x_i \rangle  )
- \beta a_r \sigma( \langle w_{t}^{(r)}, x_i \rangle  )
+\beta a_r \sigma( \langle w_{t-1}^{(r)}, x_i \rangle  )
\big\}
\\ & \overset{(a)}{ \leq } \frac{\eta \sqrt{n} |S_i^\perp | }{m} \| u_t - y \| +
\frac{ 1 }{ \sqrt{m} }  \sum_{r \in S_i^\perp}  
 \big(  \| w_{t+1}^{(r)} - w_t^{(r)}  \|
+ \beta \| w_{t}^{(r)} - w_{t-1}^{(r)} \| \big)
\\ & 
\overset{(b)}{=} \frac{\eta \sqrt{n} |S_i^\perp | }{m} \| u_t - y \| +
\frac{ \eta }{ \sqrt{m} }  \sum_{r \in S_i^\perp}   \big(  \| \sum_{s=0}^t \beta^{t-s} \frac{ \partial L(W_{s})}{ \partial w_{s}^{(r)} } \|
+ \beta \| \sum_{s=0}^{t-1} \beta^{t-1-s} \frac{ \partial L(W_{s})}{ \partial w_{s}^{(r)} } \| \big)
\\ & 
\overset{(c)}{\leq} \frac{\eta \sqrt{n} |S_i^\perp | }{m} \| u_t - y \| +
\frac{ \eta }{ \sqrt{m} }  \sum_{r \in S_i^\perp}   \big(  \sum_{s=0}^t \beta^{t-s}  \|  \frac{ \partial L(W_{s})}{ \partial w_{s}^{(r)} } \|
+ \beta \sum_{s=0}^{t-1} \beta^{t-1-s} \|  \frac{ \partial L(W_{s})}{ \partial w_{s}^{(r)} } \| \big)
\\ & 
\overset{(d)}{\leq} \frac{\eta \sqrt{n} |S_i^\perp | }{m} \| u_t - y \| +
\frac{ \eta \sqrt{n} |S_i^\perp|}{ m } \big(  \sum_{s=0}^t \beta^{t-s}  \| u_s - y \|
+ \beta \sum_{s=0}^{t-1} \beta^{t-1-s} \| u_s - y \| \big)
\\ & = 
\frac{ 2 \eta \sqrt{n} |S_i^\perp|}{ m }
\big(  \| u_t - y \| + \beta \sum_{s=0}^{t-1} \beta^{t-1-s}  \| u_s - y \| \big),
\end{split}
\end{equation}
where (a) is because
$-\frac{\eta}{ m }
\sum_{j=1}^n x_i^\top x_j ( \N_t(x_j) - y_j )
 \sum_{r \in S_i^\perp}  
 \mathbbm{1}\{ \langle w_{t}^{(r)} , x_i \rangle \geq 0 \text{ \& } \langle w_{t}^{(r)} , x_j \rangle \geq 0 \}
\leq \frac{\eta |S_i^\perp | }{m} \sum_{j=1}^n | \N_t(x_j) - y_j |
\leq \frac{\eta \sqrt{n} |S_i^\perp | }{m} \| u_t - y \|,
$
and that $\sigma(\cdot)$ is $1$-Lipschitz
so that 
\[
\begin{aligned}
& \frac{1}{ \sqrt{m} }   \sum_{r \in S_i^\perp}  
 \big ( 
a_r \sigma( \langle w_{t+1}^{(r)}, x_i \rangle  )
- 
a_r \sigma( \langle w_{t}^{(r)}, x_i \rangle ) \big) 
\leq \frac{1}{ \sqrt{m} }  \sum_{r \in S_i^\perp}  
| \langle w_{t+1}^{(r)}, x_i \rangle -  \langle w_{t}^{(r)}, x_i \rangle |
\\ & \leq \frac{1}{ \sqrt{m} }  \sum_{r \in S_i^\perp}  \| w_{t+1}^{(r)} -  w_{t}^{(r)} \| \| x_i \| \leq \frac{1}{\sqrt{m}} \sum_{r \in S_i^\perp}   \| w_{t+1}^{(r)} -  w_{t}^{(r)} \|,
\end{aligned}
\]
similarly, 
$\frac{-\beta}{ \sqrt{m} }   \sum_{r \in S_i^\perp}  
 \big ( 
a_r \sigma( \langle w_{t}^{(r)}, x_i \rangle  )
- 
a_r \sigma( \langle w_{t-1}^{(r)}, x_i \rangle ) \big) 
\leq 
\beta \frac{ 1}{ \sqrt{m} } \sum_{r \in S_i^\perp}   \| w_{t}^{(r)} - w_{t-1}^{(r)} \| 
$, (b) is by the update rule (Algorithm~\ref{alg:HB1}),
(c) is by Jensen's inequality, (d) is because
$|\frac{ \partial L(W_s)}{ \partial w_s^{(r)} }
| =| \frac{1}{\sqrt{m} } 
\sum_{i=1}^n \big( u_s[i] - y_i  \big) a_r x_i \mathbbm{1}\{
 x^\top w_t^{(r)} \geq 0 \} | \leq \frac{\sqrt{n}}{m} \| u_s - y \|$.

\end{proof}

\noindent
\textbf{Lemma: \ref{lem:DL-residual}}
\textit{
 (Residual dynamics of training $\N_W^{L\text{-linear}}(\cdot)$)
Denote 
$M_{t,l}$ the momentum term of layer $l$ at iteration $t$, which is recursively defined as
$M_{t,l} = \beta M_{t,l-1} + \frac{ \partial \ell(\W{L:1}_t)}{ \partial \W{l}_t } $. Denote
\[
\textstyle H_t \textstyle := \frac{1}{ m^{L-1} d_y } \sum_{l=1}^L [ (\W{l-1:1}_t X)^\top (\W{l-1:1}_t X ) 
\otimes
  \W{L:l+1}_t (\W{L:l+1}_t)^\top ]   \in \reals^{d_y n \times d_y n}.
\]
Applying Algorithm~\ref{alg:HB1} or Algorithm~\ref{alg:HB2} to (\ref{eq:obj}) for training the deep linear network 
$\N_W^{L\text{-linear}}(x)$ induces a residual dynamics in the form of 
(\ref{eq:meta}) such that
\[
\begin{aligned}
\xi_t &    = \text{vec}(U_t - Y) \in \reals^{d_y n} \text{, and hence } n_0 = d_y n \\
 H & =  H_0  \\
\varphi_t & = \phi_t + \psi_t  + \iota_t \in \reals^{d_y n},
\end{aligned}
\]
where
\[
\begin{aligned}
 \phi_t &  = \frac{1}{\sqrt{m^{L-1} d_y} } \v\left( \Phi_t X\right) \text{ with }
\Phi_t    = \Pi_l ( \W{l}_t - \eta M_{t,l} )
- \W{L:1}_t  +  \eta \sum_{l=1}^L \W{L:l+1}_t M_{t,l} \W{l-1:1}_t
\\   \psi_t & =\frac{1}{\sqrt{m^{L-1} d_y} } 
\v\left( (L-1) \beta \W{L:1}_{t}  X + \beta  \W{L:1}_{t-1} X 
 - \beta \sum_{l=1}^L \W{L:l+1}_t \W{l}_{t-1} \W{l-1:1}_{t} X \right)
\\ \iota_t  & = \eta (H_0 - H_t) \xi_t.
\end{aligned}
\]
}

\begin{proof}
According to the update rule of gradient descent with Polyak's momentum,
we have
\begin{equation} \label{eq:t1}
\W{L:1}_{t+1} = \Pi_l \left( \W{l}_t - \eta M_{t,l} \right)
= \W{L:1}_t - \eta \sum_{l=1}^L \W{L:l+1}_t M_{t,l} \W{l-1:1} + \Phi_t,
\end{equation}
where $M_{t,l}$ stands for the momentum term of layer $l$, which is
$M_{t,l} = \beta M_{t,l-1} + \frac{ \partial \ell(\W{L:1}_t)}{ \partial \W{l}_t } = 
\sum_{s=0}^t \beta^{t-s} \frac{ \partial \ell(\W{L:1}_s)}{ \partial \W{l}_s }$,
and 
$\Phi_t$ contains all the high-order terms (in terms of $\eta$), e.g. those with $\eta M_{t,i}$ and $ \eta M_{t,j}$, $i \neq j \in [L]$, or higher. Based on the equivalent update expression of gradient descent with Polyak's momentum
$- \eta M_{t,l} = - \eta \frac{ \partial \ell(\W{L:1}_t)}{ \partial \W{l}_t } +
\beta ( \W{l}_t - \W{l}_{t-1} )$,
we can rewrite (\ref{eq:t1}) as
\begin{equation}
\begin{aligned}
\W{L:1}_{t+1} 
& = \W{L:1}_t - \eta \sum_{l=1}^L \W{L:l+1}_t \frac{ \partial \ell(\W{L:1}_t)}{ \partial \W{l}_t } \W{l-1:1}_t 
+ \sum_{l=1}^L \W{L:l+1}_t \beta ( \W{l}_t - \W{l}_{t-1} ) \W{l-1:1}_t
+ \Phi_t
\\ & = \W{L:1}_t - \eta \sum_{l=1}^L \W{L:l+1}_t \frac{ \partial \ell(\W{L:1}_t)}{ \partial \W{l}_t } \W{l-1:1}_t 
+ \beta ( \W{L:1}_t - \W{L:1}_{t-1} ) + \Phi_t
\\ &
\quad + (L-1) \beta \W{L:1}_{t} + \beta  \W{L:1}_{t-1}
- \beta \sum_{l=1}^L \W{L:l+1}_t \W{l}_{t-1} \W{l-1:1}_{t}. 
\end{aligned}
\end{equation}
Multiplying the above equality with $\frac{1}{ \sqrt{ m^{L-1} d_y} } X$, we get
\begin{equation}
\begin{split}
U_{t+1} & = U_t - \eta \frac{1}{ m^{L-1} d_y } \sum_{l=1}^L \W{L:l+1}_t  (\W{L:l+1}_t)^\top
( U_t - Y )   (\W{l-1:1}_t X)^\top  \W{l-1:1}_t X  + \beta  (U_t - U_{t-1}) \\ & 
\quad + \frac{1}{ \sqrt{ m^{L-1} d_y} } \left( (L-1) \beta \W{L:1}_{t} + \beta  \W{L:1}_{t-1}
- \beta \sum_{l=1}^L \W{L:l+1}_t \W{l}_{t-1} \W{l-1:1}_{t} \right) X
+
\frac{1}{ \sqrt{ m^{L-1} d_y} } \Phi_t X.
\end{split}
\end{equation}
Using $\text{vec}(ACB) = (B^\top \otimes A) \text{vec}(C)$,
where $ \otimes$ stands for the Kronecker product,
 we can apply a vectorization of the above equation and obtain
\begin{equation} \label{eq:L1}
\begin{split}
\v(U_{t+1}) - \v(U_t)&  = - \eta H_t \v( U_t - Y ) +
\beta  \left( \v(U_{t}) - \v(U_{t-1})  \right)
\\ & \quad + 
\v(  \frac{1}{ \sqrt{ m^{L-1} d_y} } \left( (L-1) \beta \W{L:1}_{t} + \beta  \W{L:1}_{t-1}
- \beta \sum_{l=1}^L \W{L:l+1}_t \W{l}_{t-1} \W{l-1:1}_{t} \right) X )
\\ & \quad +
\frac{1}{ \sqrt{ m^{L-1} d_y} } \v( \Phi_t X),
\end{split}
\end{equation}
where 
\begin{equation}
H_t = \frac{1}{  m^{L-1} d_y } \sum_{l=1}^L \left[ \left( (\W{l-1:1}_t X)^\top (\W{l-1:1}_t X)
 \right)  \otimes \W{L:l+1}_t (\W{L:l+1}_t)^\top    \right],
\end{equation}
which is a positive semi-definite matrix. 

In the following, we will denote
$\xi_{t} := \v(U_t - Y) $ 
 as the vector of the residual errors. 
Also, we denote $\phi_t: = \frac{1}{ \sqrt{ m^{L-1} d_y} } \v( \Phi_t X)$
\text{ with }
$\Phi_t    = \Pi_l ( \W{l}_t - \eta M_{t,l} )
- \W{L:1}_t  +  \eta \sum_{l=1}^L \W{L:l+1}_t M_{t,l} \W{l-1:1}_t$,
 and  $\psi_t:= 
\v(  \frac{1}{ \sqrt{ m^{L-1} d_y} } \left( (L-1) \beta \W{L:1}_{t} + \beta  \W{L:1}_{t-1}
- \beta \sum_{l=1}^L \W{L:l+1}_t \W{l}_{t-1} \W{l-1:1}_{t} \right) X )$.
 Using the notations, we can rewrite (\ref{eq:L1}) as
\begin{equation} \label{eq:L2}
\begin{split}
\begin{bmatrix}
\xi_{t+1} \\
\xi_{t} 
\end{bmatrix}
& = 
\begin{bmatrix}
I_{d_y n} - \eta H_t + \beta  I_{d_y n} & - \beta  I_{d_y n}   \\
I_{d_y n} & 0_{d_y n} 
\end{bmatrix}
\begin{bmatrix}
\xi_{t} \\
\xi_{t-1} 
\end{bmatrix}
+
\begin{bmatrix}
\phi_t + \psi_t \\ 0_{d_y n}
\end{bmatrix}
\\ & = 
\begin{bmatrix}
I_{d_y n} - \eta H_0 + \beta  I_{d_y n} & - \beta  I_{d_y n}   \\
I_{d_y n} & 0_{d_y n} 
\end{bmatrix}
\begin{bmatrix}
\xi_{t} \\
\xi_{t-1} 
\end{bmatrix}
+
\begin{bmatrix}
\varphi_t \\ 0_{d_y n}
\end{bmatrix}
,
\end{split}
\end{equation}
where $\varphi_t = \phi_t + \psi_t + \iota_t \in \reals^{d_y n}$ and 
 $I_{d_y n}$ is the $d_y n \times d_y n$-dimensional identity matrix.

\end{proof}

\section{Proof of Theorem~\ref{thm:akv}} \label{app:thm:akv}

\noindent
\textbf{Theorem~\ref{thm:akv}}
\textit{
Let $A:=
\begin{bmatrix} 
(1 + \beta) I_n - \eta  H  & - \beta I_n \\
I_n & 0
\end{bmatrix}
\in \reals^{2 n \times 2 n}$. 
Suppose that $H \in \reals^{n \times n}$ is a positive semidefinite matrix. 
Fix a vector $v_0 \in \reals^n$.
If $\beta$ is chosen to satisfy 
$1 \geq \beta >  \max \{ \left( 1 - \sqrt{\eta \lambda_{\min}(H) } \right)^2, \left( 1 - \sqrt{\eta \lambda_{\max}(H) } \right)^2 \}$, 
then
\begin{equation}\label{thm:5rate}
\| A^k v_0 \| \leq \big( \sqrt{ \beta } \big)^k C_0 \| v_0 \|, 
\end{equation}
where the constant
\begin{equation} 
 C_0:=\frac{\sqrt{2} (\beta+1)}{
\sqrt{ \min\{  
h(\beta,\eta \lambda_{\min}(H)) , h(\beta,\eta \lambda_{\max}(H)) \} } } \geq 1,
\end{equation} 
and the function $h(\beta,z)$ 
is defined as
\begin{align}
     h(\beta,z) :=-\left(\beta-\left(1-\sqrt{z}\right)^2\right)\left(\beta-\left(1+\sqrt{z}\right)^2\right).  
\end{align}
}

We would first prove some lemmas for the analysis.
\begin{lemma} \label{lem:diagonal}
Under the assumption of Theorem~\ref{thm:akv}, $A$ is diagonalizable with respect to complex field $\mathbb{C}$ in $\mathbb{C}^n$, i.e., $\exists P$ such that $A = PDP^{-1}$ for some diagonal matrix $D$. Furthermore, the diagonal elements of $D$ all have magnitudes bounded by $\sqrt{\beta}$.
\end{lemma}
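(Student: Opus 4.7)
The plan is to exploit the spectral decomposition of $H$ to reduce $A$ to a block-diagonal form consisting of $n$ independent $2\times 2$ blocks, and then analyze each block directly.

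Since $H$ is symmetric PSD, write $H = U\Lambda U^\top$ with $U$ orthogonal and $\Lambda = \mathrm{diag}(\lambda_1,\dots,\lambda_n)$. Conjugate $A$ by the block matrix $Q := \begin{bmatrix} U & 0 \\ 0 & U \end{bmatrix}$: a direct computation gives
\begin{equation}
Q^\top A\, Q \;=\; \begin{bmatrix} (1+\beta)I_n - \eta\Lambda & -\beta I_n \\ I_n & 0 \end{bmatrix}.
\end{equation}
Because every block on the right is diagonal, a suitable permutation similarity further rearranges this matrix into a block-diagonal matrix with $n$ blocks
\begin{equation}
B_i \;=\; \begin{bmatrix} (1+\beta) - \eta\lambda_i & -\beta \\ 1 & 0 \end{bmatrix},\qquad i=1,\dots,n.
\end{equation}
Diagonalizability of $A$ over $\mathbb{C}$ and the eigenvalue bound thus reduce to analyzing each $B_i$ separately.

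Next I would compute the characteristic polynomial of $B_i$, namely $\mu^2 - \bigl((1+\beta) - \eta\lambda_i\bigr)\mu + \beta = 0$, whose discriminant is $\Delta_i = \bigl((1+\beta) - \eta\lambda_i\bigr)^2 - 4\beta$. Factoring,
\begin{equation}
\Delta_i \;=\; \bigl((1-\sqrt{\beta})^2 - \eta\lambda_i\bigr)\bigl((1+\sqrt{\beta})^2 - \eta\lambda_i\bigr).
\end{equation}
The hypothesis of Theorem~\ref{thm:akv} says $\beta > (1-\sqrt{\eta\lambda_{\min}(H)})^2$ and $\beta > (1-\sqrt{\eta\lambda_{\max}(H)})^2$; since the convex parabola $\lambda \mapsto (1-\sqrt{\eta\lambda})^2$ on $[\lambda_{\min}(H),\lambda_{\max}(H)]$ attains its maximum at one of the endpoints, the inequality in fact holds for every eigenvalue $\lambda_i$ of $H$. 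Unwinding $|1-\sqrt{\eta\lambda_i}| < \sqrt{\beta}$ in the two cases $\sqrt{\eta\lambda_i}\le 1$ and $\sqrt{\eta\lambda_i}>1$ gives $(1-\sqrt{\beta})^2 < \eta\lambda_i < (1+\sqrt{\beta})^2$, so $\Delta_i < 0$ strictly. (Note that this already forces $\lambda_{\min}(H)>0$, since $\lambda_i=0$ would require $\beta>1$, contradicting $\beta\le 1$.)

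Consequently each $B_i$ has two distinct complex-conjugate eigenvalues, so $B_i$ is diagonalizable over $\mathbb{C}$. By Vieta, the two roots multiply to the constant term $\beta$; being conjugates of equal modulus this forces each root to have magnitude exactly $\sqrt{\beta}$. Assembling these block diagonalizations and undoing the permutation and the $Q$-conjugation yields $A = P D P^{-1}$ with $D$ diagonal, every entry of $D$ having magnitude $\sqrt{\beta}$. The only subtle step is the discriminant-sign argument combined with the parabola-max observation, which is where the seemingly one-sided hypothesis in Theorem~\ref{thm:akv} is really doing its work; everything else is bookkeeping on the block decomposition.
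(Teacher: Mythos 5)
Your proposal is correct and follows essentially the same route as the paper: eigendecompose $H$, conjugate by $\mathrm{Diag}(U,U)$ and permute into $n$ blocks $\begin{bmatrix} 1+\beta-\eta\lambda_i & -\beta \\ 1 & 0 \end{bmatrix}$, then show each block has a complex-conjugate pair of eigenvalues of modulus $\sqrt{\beta}$. Your explicit discriminant factorization and the convexity-of-$(1-\sqrt{\eta\lambda})^2$ observation merely spell out details the paper asserts without proof (that the endpoint condition on $\beta$ covers every $\lambda_i$ and yields strictly non-real roots, hence diagonalizability), so the argument is the same in substance.
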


\begin{proof}
In the following, we will use the notation/operation ${\rm Diag}( \cdots )$
to represents a block-diagonal matrix that has the arguments on its main diagonal. 
Let $U{\rm Diag}([\lambda_1,\dots,\lambda_n]) U^*$ be the singular-value-decomposition of $H$, then 
\begin{align}
    A = 
\begin{bmatrix}
U& 0\\0& U
\end{bmatrix}
\begin{bmatrix} 
(1 + \beta) I_n - \eta {\rm Diag}([\lambda_1,\dots,\lambda_n])  & - \beta I_n \\
I_n & 0 
\end{bmatrix}
\begin{bmatrix}
U^*& 0\\0& U^*
\end{bmatrix}.
\end{align}
Let $\tilde{U} = \begin{bmatrix}
U& 0\\0& U
\end{bmatrix}$. Then, after applying some permutation matrix $\tilde{P}$, $A$ can be further simplified into 
\begin{align}\label{decompose1}
A = \tilde{U}\tilde{P} \Sigma \tilde{P}^T\tilde{U}^*,
\end{align}
where $\Sigma$ is a block diagonal matrix consisting of $n$ 2-by-2
matrices $\tilde{\Sigma}_i := \begin{bmatrix}
1+\beta-\eta \lambda_i& -\beta\\1& 0
\end{bmatrix}$. The characteristic polynomial of $\tilde{\Sigma}_i$ is $x^2 - (1+\beta -\lambda_i)x +\beta$. Hence it can be shown that when $\beta > (1-\sqrt{\eta \lambda_i})^2$ then the roots of polynomial are conjugate and have magnitude $\sqrt{\beta}$. These roots are exactly the eigenvalues of $\tilde{\Sigma}_i \in \reals^{2 \times 2}$. On the other hand, the corresponding eigenvectors $q_i,\bar{q}_i$ are also conjugate to each other as $\tilde{\Sigma}_i \in \reals^{2 \times 2}$ is a real matrix. As a result, $\Sigma \in \reals^{2n \times 2n}$ admits a block eigen-decomposition as follows,
\begin{align}\label{decompose2}
    \Sigma = & {\rm Diag}(\tilde{\Sigma}_i,\dots,\tilde{\Sigma}_n)\nonumber\\
    =&
    {\rm Diag}(Q_1,\dots,Q_n)
    {\rm Diag}\left(\begin{bmatrix}
    z_{1}&0\\0&\bar{z}_{1}
    \end{bmatrix},\dots,\begin{bmatrix}
    z_{n}&0\\0&\bar{z}_{n}
    \end{bmatrix}\right){\rm Diag}(Q^{-1}_1,\dots,Q^{-1}_n),
\end{align}
where $Q_i= [q_i,\bar{q}_i]$ and $z_{i}, \bar{z}_{i}$ are eigenvalues of $\tilde{\Sigma}_i$ (they are conjugate by the condition on $\beta$). 
Denote $Q:={\rm Diag}(Q_1,\dots,Q_n)$ and
\begin{align}
D:={\rm Diag}\left(\begin{bmatrix}
    z_{1}&0\\0&\bar{z}_{1}
    \end{bmatrix},\dots,\begin{bmatrix}
    z_{n}&0\\0&\bar{z}_{n}
    \end{bmatrix}\right).
\end{align}
By combining (\ref{decompose1}) and (\ref{decompose2}), we have
\begin{align}
    A & = P {\rm Diag}\left(\begin{bmatrix}
    z_{1}&0\\0&\bar{z}_{1}
    \end{bmatrix},\dots,\begin{bmatrix}
    z_{n}&0\\0&\bar{z}_{n}
    \end{bmatrix}\right) P^{-1}
     = P D P^{-1} ,
\end{align}
where 
\begin{align} \label{def:P}
P = \tilde{U}\tilde{P}Q,
\end{align} by the fact that $\tilde{P}^{-1} = \tilde{P}^T$ and $\tilde{U}^{-1}=\tilde{U}^{*}$.  
\end{proof}

\begin{proof} (of Theorem~\ref{thm:akv})
Now we proceed the proof of Theorem~\ref{thm:akv}.
In the following, we denote $v_k := A^k v_0$ (so $v_k = A v_{k-1} )$.
Let $P$ be the matrix in Lemma~\ref{lem:diagonal}, and $u_k:=P^{-1}v_k$, the dynamic can be rewritten as $u_k=P^{-1} A v_{k-1} = P^{-1}APu_{k-1}=Du_{k-1}$. As $D$ is diagonal, we immediately have 
\begin{align}
    &\|u_k\|\leq \max_{i \in [n]}|D_{ii}|^k \|u_0\|\nonumber\\
    \Rightarrow ~&\|P^{-1}v_k\|\leq \max_{i \in [n]}|D_{ii}|^k \|P^{-1}v_0\|\nonumber\\
    \Rightarrow~ &\sigma_{\rm min}(P^{-1})\|v_k\|\leq \sqrt{\beta}^k \sigma_{\rm max}(P^{-1})\|v_0\|~~~(\rm Lemma~\ref{lem:diagonal}.)\nonumber\\
   \Rightarrow ~& \sigma^{-1}_{\rm max}(P)\|v_k\|\leq \sqrt{\beta}^k \sigma^{-1}_{\rm min}(P)\|v_0\|\nonumber\\
   \Rightarrow ~& \|v_k\|\leq \sqrt{\beta}^k \frac{\sigma_{\rm max}(P)}{\sigma_{\rm min}(P)}\|v_0\|\nonumber\\
   \Rightarrow ~& \|v_k\|\leq \sqrt{\beta}^k \sqrt{\frac{\lambda_{\rm max}(PP^*)}{\lambda_{\rm min}(PP^*)}}\|v_0\|.
\end{align}
Hence, now it suffices to prove upper bound and lower bound of $\lambda_{\rm max}$ and $\lambda_{\rm min}$, respectively. By using Lemma~\ref{lem:eigbound} in the following, we obtain the inequality of (\ref{thm:5rate}).
We remark that as $C_0$ is an upper-bound of the squared root of the condition number $\sqrt{\frac{\lambda_{\rm max}(PP^*)}{\lambda_{\rm min}(PP^*)}}$, it is lower bounded by $1$.
\end{proof}

\begin{lemma} \label{lem:eigbound}
Let $P$ be the matrix in Lemma~\ref{lem:diagonal}, then we have $\lambda_{\rm max}(PP^*)\leq 2(\beta + 1)$ and $\lambda_{\rm min}(PP^*) \geq \min\{h(\beta,\eta\lambda_{\min}(H)),h(\beta,\eta \lambda_{\max}(H))\}/(1+\beta)$, where 
\begin{align}
    h(\beta,z)=-\left(\beta-\left(1-\sqrt{z}\right)^2\right)\left(\beta-\left(1+\sqrt{z}\right)^2\right).
\end{align}
\end{lemma}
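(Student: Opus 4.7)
The plan is to exploit the block-diagonal structure of $Q$ inside the factorization $P = \tilde U \tilde P Q$. Since $\tilde U$ is unitary (being block diagonal with the SVD factor $U$) and $\tilde P$ is a permutation matrix, the product $\tilde U \tilde P$ is unitary, so $PP^* = (\tilde U \tilde P)(Q Q^*)(\tilde U \tilde P)^*$ has the same spectrum as $QQ^*$. Because $Q = \mathrm{Diag}(Q_1, \ldots, Q_n)$ is block diagonal, $QQ^*$ is too, with $2\times 2$ blocks $Q_i Q_i^*$, and its spectrum is the union of the spectra of these blocks. So it suffices to bound $\lambda_{\max}$ and $\lambda_{\min}$ of each $Q_i Q_i^*$ and take the appropriate extremum over $i$.

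Next I would compute each block in closed form. Solving $\tilde\Sigma_i v = z_i v$ directly gives the eigenvector $q_i = (z_i, 1)^T$ (and similarly $\bar q_i = (\bar z_i, 1)^T$), hence
\begin{align*}
Q_i Q_i^* \;=\; \begin{bmatrix} z_i & \bar z_i \\ 1 & 1 \end{bmatrix}\begin{bmatrix} \bar z_i & 1 \\ z_i & 1 \end{bmatrix} \;=\; \begin{bmatrix} 2|z_i|^2 & 2\mathrm{Re}(z_i) \\ 2\mathrm{Re}(z_i) & 2 \end{bmatrix}.
\end{align*}
From the characteristic polynomial $z^2 - (1+\beta-\eta\lambda_i)z + \beta = 0$ I read off $|z_i|^2 = \beta$ and $2\mathrm{Re}(z_i) = 1+\beta-\eta\lambda_i$. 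Thus $\mathrm{tr}(Q_i Q_i^*) = 2(1+\beta)$ and $\det(Q_i Q_i^*) = 4\beta - (1+\beta-\eta\lambda_i)^2$; a short expansion identifies this with $h(\beta,\eta\lambda_i)$.

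From the trace/determinant relations for the $2\times 2$ PSD block, $\lambda_{\max}(Q_i Q_i^*) \leq \mathrm{tr}(Q_i Q_i^*) = 2(1+\beta)$, yielding the first bound of the lemma, and $\lambda_{\min}(Q_i Q_i^*) = \det/\lambda_{\max} \geq h(\beta,\eta\lambda_i)/(2(1+\beta))$. To reduce the minimum over all eigenvalues of $H$ to only the two endpoints $\lambda_{\min}(H), \lambda_{\max}(H)$, I will use the expanded form $h(\beta, z) = -(1-\beta)^2 + 2(1+\beta)z - z^2$, a concave quadratic in $z$, whose minimum over the interval $[\eta\lambda_{\min}(H), \eta\lambda_{\max}(H)]$ is therefore attained at an endpoint. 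Nonnegativity of $h$ on that interval follows from the hypothesis on $\beta$ in Theorem~\ref{thm:akv} together with the fact that $\lambda \mapsto (1-\sqrt{\eta\lambda})^2$ is convex in $\lambda$, so the inequality $\beta > (1-\sqrt{\eta\lambda_i})^2$ extends from the endpoints to the whole spectrum.

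The argument is largely routine algebra. The one thing to track carefully is the scalar constant in the lower bound on $\lambda_{\min}$: the crude $\lambda_{\max} \leq \mathrm{tr}$ estimate yields $h/(2(1+\beta))$, whereas the lemma asserts $h/(1+\beta)$, so I would either (i) perform the exact $2\times 2$ eigenvalue computation for the block and verify the tighter constant, or (ii) reconcile the resulting factor with the $\sqrt{2}$ in the definition of $C_0$ in Theorem~\ref{thm:akv}.
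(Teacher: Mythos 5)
Your proposal follows the paper's proof essentially step for step: reduce to the block-diagonal $QQ^*$ via unitarity of $\tilde{U}\tilde{P}$, identify each block as $Q_iQ_i^* = 2\bigl[\begin{smallmatrix} |z_i|^2 & \Re{z_i} \\ \Re{z_i} & 1 \end{smallmatrix}\bigr]$ with trace $2(1+\beta)$ and determinant $h(\beta,\eta\lambda_i)$, and push the minimum over the spectrum of $H$ to the endpoints by concavity of $h(\beta,\cdot)$ (your convexity remark for extending $\beta>(1-\sqrt{\eta\lambda_i})^2$ to interior eigenvalues is also the right justification for $h\geq 0$ there). So on the main line there is nothing different from the paper.

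The constant you flag is the one place worth settling, and your suspicion is well founded: your option (i) does not rescue the stated constant. Exactly, $\lambda_{\min}(Q_iQ_i^*)=(1+\beta)-\sqrt{(1+\beta)^2-h}$, and since $\lambda_{\max}(Q_iQ_i^*)\geq (1+\beta)$ one gets $\lambda_{\min}=h/\lambda_{\max}\leq h/(1+\beta)$, with equality only in degenerate cases; for instance $\beta=1/2$, $\eta\lambda_i=1$ gives $h=7/4$, $\lambda_{\min}=(3-\sqrt{2})/2\approx 0.79 < h/(1+\beta)=7/6$. The paper's own proof reaches $h/(1+\beta)$ by writing $\min=\det/\max$ and then dividing by the \emph{lower} bound $1+\beta$ of the maximum, which is the wrong direction; the defensible bound is the one you derive, $\lambda_{\min}(PP^*)\geq \min\{h(\beta,\eta\lambda_{\min}(H)),h(\beta,\eta\lambda_{\max}(H))\}/\bigl(2(1+\beta)\bigr)$. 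This only replaces the $\sqrt{2}$ in $C_0$ by $2$, and since $C_0$ enters all later results only through upper bounds (e.g.\ Corollary~\ref{corr:1} and the prefactors in Theorems~\ref{thm:stcFull}--\ref{thm:LinearNet}), the change is absorbed by adjusting absolute constants. In short, your argument is correct and matches the paper's route; the discrepancy you noticed is a benign constant-factor slip in the paper's lemma, not a gap in your proof.
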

\begin{proof}
As (\ref{def:P}) in the proof of Lemma 2, $P = \tilde{U}\tilde{P}{\rm Diag}(Q_1,\dots,Q_n)$. Since $\tilde{U}\tilde{P}$ is unitary, it does not affect the spectrum of $P$, therefore, it suffices to analyze the eigenvalues of $QQ^*$, where $Q ={\rm Diag}(Q_1,\dots,Q_n)$. Observe that $QQ^*$ is a block diagonal matrix with blocks $Q_iQ_i^*$, the eigenvalues of it are exactly that of $Q_iQ_i^*$, i.e., $\lambda_{\rm max}(QQ^*)=\max_{i\in[n]}\lambda_{\rm max}(Q_iQ_i^*)$ and likewise for the minimum. Recall $Q_i = [q_i,\bar{q_i}]$ consisting of eigenvectors of $\tilde{\Sigma}_i := \begin{bmatrix}
1+\beta-\eta \lambda_i& -\beta\\1& 0
\end{bmatrix}$ with corresponding eigenvalues $z_{i},\bar{z}_{i}$. The eigenvalues satisfy 
\begin{align}\label{val}
    z_i + \bar{z}_i = 2\Re{z_i} &= 1+\beta -\eta\lambda_i, \\
    z_i\bar{z}_i &= |z_i|^2 = \beta. 
\end{align}
On the other hand, the eigenvalue equation $\tilde{\Sigma}_iq_i = z_i q_i$ together with (\ref{val}) implies $q_i = [z_i,1]^T$.
Furthermore, $Q_iQ_i^* = q_iq_i^* + \bar{q}_i\bar{q}_i^*=2\Re{q_iq_i^*} = 2\Re{q_i}\Re{q_i}^T + 2\Im{q_i}\Im{q_i}^T$. Thus, 
\begin{align}\label{vec}
    Q_iQ_i^* &= 2\Re{q_i}\Re{q_i}^T + 2\Im{q_i}\Im{q_i}^T\nonumber\\
    &= 2\left(\begin{bmatrix}
    \Re{z_i}\\1
    \end{bmatrix}\begin{bmatrix}
    \Re{z_i}~ 1
    \end{bmatrix}
    +
    \begin{bmatrix}
    \Im{z_i}\\0
    \end{bmatrix}\begin{bmatrix}
    \Im{z_i}~ 0
    \end{bmatrix}\right)\nonumber\\
    &=2\begin{bmatrix}
    |z_i|^2 & \Re{z_i}\\
     \Re{z_i} & 1
    \end{bmatrix}.
\end{align}
Let the eigenvalues of $Q_iQ_i^*$ be $\theta_1,\theta_2$, then by (\ref{val})-(\ref{vec}) we must have 
\begin{align}\label{xi}
    \theta_1 +\theta_2 &= 2(\beta +1), \\
    \theta_1\theta_2 &= 4\left(\beta -(\frac{1+\beta -\eta\lambda_i}{2})^2\right)\nonumber\\
    &=-\left(\beta-\left(1-\sqrt{\eta\lambda_i}\right)^2\right)\left(\beta-\left(1+\sqrt{\eta\lambda_i}\right)^2\right) \geq 0.
\end{align}
From (\ref{xi}), as both eigenvalues are nonnegative, we deduce that \begin{align}2(1+\beta) \geq\max\{\theta_1,\theta_2\}\geq \beta +1.
\end{align}
On the other hand, from (\ref{xi}) we also have \begin{align}\label{minbound}
\min\{\theta_1,\theta_2\}
= & \theta_1\theta_2/\max\{\theta_1,\theta_2\} \nonumber\\
\geq& -\left(\beta-\left(1-\sqrt{\eta\lambda_i}\right)^2\right)\left(\beta-\left(1+\sqrt{\eta\lambda_i}\right)^2\right)/(1+\beta)\nonumber\\
:=&h(\beta, \eta \lambda_i)/(1+\beta).
\end{align}
Finally, as the eigenvalues of $QQ^*$ are composed of exactly that of $Q_iQ_i^*$, applying the bound of (\ref{minbound}) to each $i$ we have
\begin{align}
    \lambda_{\rm min} (PP^*)\geq&\min_{i\in[n]}h(\beta,\eta \lambda_i)/(1+\beta)\nonumber\\
    \geq& \min\{h(\beta,\eta \lambda_{\min}(H)),h(\beta,\eta \lambda_{\max}(H))\}/(1+\beta),
\end{align}
where the last inequality follows from the facts that $\lambda_{\min}(H) \leq \lambda_i \leq \lambda_{\max}(H)$ and $h$ is concave quadratic function of of $\lambda$ in which the minimum must occur at the boundary.
\end{proof}

\subsection{Proof of Corollary~\ref{corr:1}} \label{app:corr}

\noindent
\textbf{Corollary~\ref{corr:1}}
\textit{
Assume that $\lambda_{\min}(H) > 0$. Denote $\kappa:= \lambda_{\max}(H) / \lambda_{\min}(H)$.
Set $\eta = 1 / \lambda_{\max}(H)$ 
and set $\beta = \left( 1 - \frac{1}{2} \sqrt{ \eta \lambda_{\min} (H) } \right)^2 = \left( 1 - \frac{1}{2 \sqrt{\kappa}}  \right)^2$.
Then, $C_0 \leq \max \{ 4  , 2 \sqrt{\kappa} \} \leq 4 \sqrt{\kappa}$.
}


\begin{proof}
For notation brevity, in the following, we let $\mu:= \lambda_{\min}(H)$
and $\alpha := \lambda_{\max}(H)$.
Recall that
$h(\beta,z)=-\left(\beta-\left(1-\sqrt{z}\right)^2\right)\left(\beta-\left(1+\sqrt{z}\right)^2\right)$.
We have
\begin{equation} \label{eq:h1}
\begin{split}
h(\beta, \eta \mu)
& = - \left( (1 - \frac{1}{2} \sqrt{\eta \mu})^2 -\left(1-\sqrt{\eta \mu}\right)^2\right)\left( (1 - \frac{1}{2} \sqrt{\eta \mu})^2 -\left(1+\sqrt{\eta \mu}\right)^2\right)  
\\ & = 3 \left( \sqrt{\eta \mu} - \frac{3}{4} \eta \mu \right)
\left( \sqrt{\eta \mu} + \frac{1}{4} \eta \mu \right)
= 3 \left( \frac{1}{\sqrt{\kappa}} - \frac{3}{4 \kappa}  \right)
\left( \frac{1}{\sqrt{\kappa}}  + \frac{1}{4 \kappa}  \right)
\end{split}
\end{equation}
and
\begin{equation} \label{eq:h2}
\begin{split}
h(\beta, \eta \alpha)
& = - \left( (1 - \frac{1}{2} \sqrt{\eta \mu})^2 -\left(1-\sqrt{\eta \alpha}\right)^2\right)\left( (1 - \frac{1}{2} \sqrt{\eta \mu})^2 -\left(1+\sqrt{\eta \alpha}\right)^2\right)  
\\ & =  \left( 2 \sqrt{\eta \alpha} - \sqrt{\eta \mu}
-  \eta \alpha + \frac{1}{4} \eta \mu \right)
\left( \sqrt{\eta \mu} + 2 \sqrt{\eta \alpha} + \eta \alpha - \frac{1}{4} \eta \mu \right)
\\ &
= \left(1 - \frac{1}{\sqrt{\kappa}} + \frac{1}{4 \kappa} \right)
\left(3 + \frac{1}{\sqrt{\kappa}} - \frac{1}{4 \kappa} \right).
\end{split}
\end{equation}
We can simplify it to get that
$h(\beta, \eta \alpha) = 3 - \frac{2}{\sqrt{\kappa}} - \frac{1}{2 \kappa} + \frac{1}{2 \kappa^{3/2}} - \frac{1}{16 \kappa^2}  \geq 0.5 $.

Therefore,
we have
\begin{equation}
\begin{split}
 & 
 \frac{\sqrt{2} (\beta+1)}{
\sqrt{ 
h(\beta,\eta \mu) } }
= 
\frac{ \sqrt{2} (\beta+1) }{  
\sqrt{3 \eta \mu (1 - \frac{1}{2} \sqrt{\eta \mu} - \frac{3}{16} \eta \mu ) } }
= 
\frac{ \sqrt{2} (\beta+1) }{  
\sqrt{3  (1 - \frac{1}{2} \sqrt{\eta \mu} - \frac{3}{16} \eta \mu ) } } \sqrt{\kappa}
\leq 
\frac{1 }{  
\sqrt{ (1 - \frac{1}{2}  - \frac{3}{16}  ) } } \sqrt{\kappa}
\leq 2 \sqrt{\kappa},
\end{split}
\end{equation} 
where  we use $\eta \mu = \frac{1}{\kappa}$. On the other hand,
 $\frac{\sqrt{2} (\beta+1)}{
\sqrt{ 
h(\beta,\eta \alpha) } }
\leq 4 .$
We conclude that
\begin{equation} 
 C_0 =\frac{\sqrt{2} (\beta+1)}{
\sqrt{ \min\{  
h(\beta,\eta \nu , h(\beta,\eta \alpha) \} }}\leq \max \{ 4  , 2 \sqrt{\kappa} \} \leq 4 \sqrt{\kappa}.
\end{equation}

\end{proof}


\section{Proof of Theorem~\ref{thm:stcFull} } \label{app:sec:stc}

\noindent
\textbf{Theorem~\ref{thm:stcFull}}
\textit{
Assume the momentum parameter $\beta$ satisfies 
$1 \geq \beta >  \max \{ \left( 1 - \sqrt{\eta \mu } \right)^2, \left( 1 - \sqrt{\eta \alpha } \right)^2 \} $.
Gradient descent with Polyak's momentum has
\begin{equation} \label{eq:qq0}
\|
\begin{bmatrix}
w_{t} - w_* \\
w_{t-1} - w_*
\end{bmatrix}
\| \leq \left(  \sqrt{\beta}  \right)^{t} C_0
\|
\begin{bmatrix}
w_{0} - w_* \\
w_{-1} - w_*
\end{bmatrix}
\|,
\end{equation}
where the constant
\begin{equation}
 C_0:=\frac{\sqrt{2} (\beta+1)}{
\sqrt{ \min\{  
h(\beta,\eta \lambda_{\min}(\Gamma)) , h(\beta,\eta \lambda_{\max}(\Gamma)) \} } },
\end{equation} 
and
$   h(\beta,z)=-\left(\beta-\left(1-\sqrt{z}\right)^2\right)\left(\beta-\left(1+\sqrt{z}\right)^2\right).$  
Consequently, if the step size $\eta = \frac{1}{\alpha}$ and the momentum parameter $\beta = \left(1 - \sqrt{\eta \mu}\right)^2$, then it has
\begin{equation}\label{eq:qq1}
\|
\begin{bmatrix}
w_{t} - w_* \\
w_{t-1} - w_*
\end{bmatrix}
\| \leq \left(  1 - \frac{1}{2 \sqrt{\kappa}}   \right)^{t} 4 \sqrt{\kappa}
\|
\begin{bmatrix}
w_{0} - w_* \\
w_{-1} - w_*
\end{bmatrix}
\|.
\end{equation}
Furthermore, if $\eta = \frac{4}{(\sqrt{\mu}+\sqrt{\alpha})^2}$ 
and $\beta$ approaches $\beta \rightarrow \left( 1 - \frac{2}{\sqrt{\kappa}+1} \right)^2$ from above, then it has a convergence rate approximately
$ \left(  1 - \frac{2}{\sqrt{\kappa} + 1}   \right)$
as $t \rightarrow \infty$.
}

\begin{proof}
The result (\ref{eq:qq0}) and (\ref{eq:qq1}) is due to a trivial combination of
Lemma~\ref{lem:stc-residual}, Theorem~\ref{thm:meta}, and Corollary~\ref{corr:1}.

On the other hand, set
$\eta = \frac{4}{(\sqrt{\mu}+\sqrt{\alpha})^2}$, 
the lower bound on $\beta$ becomes $ \max \{\left( 1 - \sqrt{\eta \mu } \right)^2,  \left( 1 - \sqrt{\eta \alpha} \right)^2 \}  = \left( 1 - \frac{2}{\sqrt{\kappa}+1} \right)^2$.
Since the rate is $r=\lim_{t\rightarrow\infty}\frac{1}{t}\log(\sqrt{\beta}^{t+1}C_0)=\sqrt{\beta}$, setting $\beta \downarrow  \left( 1 - \frac{2}{\sqrt{\kappa}+1} \right)^2$ from above leads to the rate of $\left(  1 - \frac{2}{\sqrt{\kappa} + 1}   \right)$. 
Formally, it is straightforward to show that $C_0=\Theta\left(1/\sqrt{\beta-(1-\frac{2}{1+\sqrt{\kappa}})^2}\right)$, hence, for any $\beta$ converges to $(1-\frac{2}{\sqrt{\kappa}+1})^2$ slower than inverse exponential of $\kappa$, i.e.,
$\beta = (1-\frac{2}{\sqrt{\kappa}+1})^2+(\frac{1}{\kappa})^{o(t)}$, we have $r =  1 - \frac{2}{\sqrt{\kappa} + 1}   $. 


\end{proof}

\clearpage

\section{Proof of Theorem~\ref{thm:STC}}
\label{app:thm:STC}
\begin{proof} (of Theorem~\ref{thm:STC}) 
In the following, we denote $\xi_t:= w_t - w_*$ and
denote $\lambda := \mu >0$, which is a lower bound of $\lambda_{\min}(H)$ of the matrix $H:= \int_0^1 \nabla^2 f\big( (1-\tau) w_0 + w_* \big) d \tau$ defined in Lemma~\ref{lem:SC-residual}, i.e.
$\lambda_{\min}(H) \geq \lambda.$
Also, denote $\beta_*:=1 - \frac{1}{2} \sqrt{\eta \lambda}$
and $\theta := \beta_* + \frac{1}{4} \sqrt{\eta \lambda} = 1 - \frac{1}{4} \sqrt{\eta \lambda} $. Suppose $\eta = \frac{1}{\alpha }$, where $\alpha$ is the smoothness constant.
Denote 
$
 C_0:=\frac{\sqrt{2} (\beta+1)}{
\sqrt{ \min\{  
h(\beta,\eta \lambda_{\min}(H)) , h(\beta,\eta \lambda_{\max}(H)) \} } }
\leq 4 \sqrt{\kappa}
$ by Corollary~\ref{corr:1}. 
Let $C_1 = C_3 = C_0$ and
$C_2 = \frac{1}{4} \sqrt{\eta \lambda}$ in
Theorem~\ref{thm:meta}.
The goal is to show that
$
\left\|
\begin{bmatrix}
\xi_{t} \\
\xi_{t-1} 
\end{bmatrix}
\right\|
\leq \theta^{t} 2 C_0  \left\|
\begin{bmatrix}
\xi_{0} \\
\xi_{-1} 
\end{bmatrix}
\right\|
$ for all $t$ by induction. 
To achieve this, we will also use induction to show that for all iterations $s$,
\begin{equation} \label{induct:stc}
 \textstyle \| w_s - w_* \|  \textstyle  \leq  \textstyle  \textstyle R:= \frac{3}{64  \sqrt{\kappa} C_0 }.  
\end{equation}
A sufficient condition for the base case $s=0$ of (\ref{induct:stc}) to hold is
\begin{equation} \label{eq:h3}
\| \begin{bmatrix} w_0 - w_* \\ w_{-1} - w_* \end{bmatrix} \| \leq \frac{R}{2 C_0} = \frac{3}{128 \sqrt{\kappa} C_0^2},
\end{equation}
as $C_0 \geq 1$ by Theorem~\ref{thm:akv},
which in turn can be guaranteed if
$\| \begin{bmatrix} w_0 - w_* \\ w_{-1} - w_* \end{bmatrix} \|  \leq \frac{1}{ 683 \kappa^{3/2}}$ by using the upper bound $C_0 \leq 4 \sqrt{\kappa}$ of Corollary~\ref{corr:1}.

From Lemma~\ref{lem:SC-residual}, we have
\begin{equation} \label{eq:varphi-sc}
\begin{split}
\| \phi_s \| & \leq \eta \| \int_0^1 \nabla^2 f( (1-\tau) w_s + \tau w_* ) d\tau - \int_0^1 \nabla^2 f( (1-\tau) w_0 + \tau w_* ) d \tau \| \| \xi_s \|
\\ & \overset{(a)}{\leq } \eta  \alpha \left( \int_0^1 (1-\tau) \| w_s - w_0 \| d \tau \right)  \| \xi_s \|
\leq \eta \alpha \| w_s - w_0 \| \| \xi_s \|
\\ & \overset{(b)}{ \leq} 
\eta \alpha \left( \| w_s - w_* \| + \| w_0 - w_* \| \right) \| \xi_s \|, 
\end{split}
\end{equation}
where (a) is by $\alpha$-Lipschitzness of the Hessian and (b) is by the triangle inequality.
By \eqref{induct:stc}, \eqref{eq:varphi-sc}, Lemma~\ref{lem:SC-residual}, Theorem~\ref{thm:meta}, and Corollary~\ref{corr:1},
  it suffices to show that
given
$
\left\|
\begin{bmatrix}
\xi_{s} \\
\xi_{s-1} 
\end{bmatrix}
\right\|
\leq \theta^{s} 2 C_0  \left\|
\begin{bmatrix}
\xi_{0} \\
\xi_{-1} 
\end{bmatrix}
\right\|
$
and 
$\textstyle \| w_s - w_* \|  \textstyle  \leq  \textstyle  \textstyle R:= \frac{3}{64  \sqrt{\kappa} C_0 }$
 hold at $s=0,1,\dots,t-1$, one has
\begin{eqnarray}
\| \sum_{s=0}^{t-1} A^{t-s-1} \begin{bmatrix}
\varphi_s \\
0 
\end{bmatrix}
\|
& \leq &  \theta^{t}
C_0
\left\|
\begin{bmatrix}
\xi_{0} \\
\xi_{-1}  
\end{bmatrix}
\right\| \label{eq:Avarphi} \\
 \| w_t - w_* \| & \leq & R:= \frac{3}{64  \sqrt{\kappa} C_0 }, \label{eq:wR}
\end{eqnarray}
where $A:= \begin{bmatrix} 
(1 + \beta) I_n - \eta   \int_0^1 \nabla^2 f\big( (1-\tau) w_0 + w_* \big) d \tau & - \beta I_n \\
I_n & 0
\end{bmatrix}
$. 

We have
\begin{equation}
\begin{aligned}
\| \sum_{s=0}^{t-1} A^{t-s-1} \begin{bmatrix}
\varphi_s \\
0 
\end{bmatrix}
\|
& \leq  \sum_{s=0}^{t-1} \| A^{t-s-1} \begin{bmatrix}
\varphi_s \\
0 
\end{bmatrix}
\|
\\ & \overset{(a)}{\leq} \sum_{s=0}^{t-1} \beta_*^{t-s-1} C_0 \| \varphi_s \|
\\ & \overset{(b)}{\leq} 4 \eta \alpha R C_0^2 
 \sum_{s=0}^{t-1} \beta_*^{t-s-1} \theta^{s} 
 \|
\begin{bmatrix}
\xi_{0}  \\
\xi_{-1}  
\end{bmatrix}
\|
\\ &
\overset{(c)}{ \leq} R C_0^2  \frac{64}{3 \sqrt{\eta \lambda} } 
\theta^{t} \|
\begin{bmatrix}
\xi_{0}  \\
\xi_{-1}  
\end{bmatrix} 
\|
\\ &
\overset{(d)}{ \leq} C_0 \theta^{t} \|
\begin{bmatrix}
\xi_{0}  \\
\xi_{-1}  
\end{bmatrix} 
\|,
 \end{aligned}
\end{equation}
where (a) uses Theorem~\ref{thm:akv} with $\beta = \beta_*^2$,
(b) is by (\ref{eq:varphi-sc}), (\ref{induct:stc}), and the induction that $\| \xi_s \| \leq \theta^s 2 C_0 \| \begin{bmatrix}
\xi_{0}  \\
\xi_{-1}  
\end{bmatrix} 
\|
$,
(c) is because
 $\sum_{s=0}^{t-1} \beta_*^{t-1-s} \theta^s = \theta^{t-1} \sum_{s=0}^{t-1} \left( \frac{\beta_*}{\theta}  \right)^{t-1-s}$  $\leq \theta^{t-1} \sum_{s=0}^{t-1} \theta^{t-1-s}$ $\leq \theta^{t-1} \frac{4}{\sqrt{\eta\lambda}} \leq \theta^t \frac{16}{3 \sqrt{\eta\lambda} } $,
 and (d) is due to the definition of $R := \frac{3}{64 \sqrt{\kappa} C_0}$.
 Therefore, by Theorem~\ref{thm:meta}, we have
$ \left\|
\begin{bmatrix}
\xi_{t} \\
\xi_{t-1} 
\end{bmatrix}
\right\|
\leq \theta^{t} 2 C_0  \left\|
\begin{bmatrix}
\xi_{0} \\
\xi_{-1} 
\end{bmatrix}
\right\|.
$

Now let us switch to show (\ref{eq:wR}).
We have
\begin{equation}
\| \xi_t \| := \| w_t - w_* \| \overset{\text{induction}}{ \leq} \theta^t 2 C_0 \| \begin{bmatrix} w_0 - w_* \\ w_{-1} - w_* \end{bmatrix} \| \leq R,
\end{equation}
where the last inequality uses the constraint $\| \begin{bmatrix} w_0 - w_* \\ w_{-1} - w_* \end{bmatrix} \| \leq \frac{R}{2 C_0}$ by (\ref{eq:h3}).
\end{proof}
\clearpage

\section{Proof of Theorem~\ref{thm:acc} } \label{app:sec:relu}

We will need some supporting lemmas in the following for the proof.
In the following analysis,
we denote 
$
 C_0:=\frac{\sqrt{2} (\beta+1)}{
\sqrt{ \min\{  
h(\beta,\eta \lambda_{\min}(H)) , h(\beta,\eta \lambda_{\max}(H)) \} } }$,
where $h(\beta,\cdot)$ is defined in Theorem~\ref{thm:akv} and $H = H_0$ whose $(i,j)$ entry is
$(H_0)_{i,j}:= H(W_0)_{i,j} = \frac{1}{m} \sum_{r=1}^m x_i^\top x_j \mathbbm{1}\{ \langle w^{(r)}_0, x_i \rangle \geq 0 \text{ } \&  \text{ }   \langle w^{(r)}_0, x_j \rangle \geq 0 \}$, as defined in Lemma~\ref{lem:ReLU-residual}.
In the following, we also denote $\beta = (1- \frac{1}{2} \sqrt{\eta \lambda})^2 := \beta_*^2$. We summarize the notations in Table~\ref{table:1}.

\begin{table*}[h]
 \begin{tabular}{|c | c | c|} 
 \hline
 Notation & definition (or value)  & meaning  \\  
 \hline\hline
$ \N_{W}^{\text{ReLU}}(x)$ & $ \N_{W}^{\text{ReLU}}(x):= \frac{1}{\sqrt{m} } \sum_{r=1}^m a_r \sigma( \langle w^{(r)},  x \rangle )$ & the ReLU network's output given $x$ \\ \hline
$\bar{H}$ &
$\bar{H}_{i,j}  := \underset{ w^{(r)}}{\mathsf{E}}
[ x_i^\top x_j \mathbbm{1}\{ \langle w^{(r)}, x_i \rangle \geq 0 \text{ } \&  \text{ }   \langle w^{(r)}, x_j \rangle \geq 0 \}     ] .
$ & the expectation of the Gram matrix \\ \hline
$H_0$ & $H(W_0)_{i,j} = \frac{1}{m} \sum_{r=1}^m x_i^\top x_j \mathbbm{1}\{ \langle w^{(r)}_0, x_i \rangle \geq 0 \text{ } \&  \text{ }   \langle w^{(r)}_0, x_j \rangle \geq 0 \}$ & the Gram matrix at the initialization \\ \hline
$\lambda_{\min}(\bar{H})$ &  $\lambda_{\min}(\bar{H}) > 0 $ (by assumption)  & the least eigenvalue of $\bar{H}$.\\ \hline
$\lambda_{\max}(\bar{H})$ &       & the largest eigenvalue of $\bar{H}$\\ \hline
$\kappa$ &  $\kappa:= \lambda_{\max}(\bar{H}) / \lambda_{\min}(\bar{H})$      & the condition number of $\bar{H}$\\ \hline
$\lambda$ &  $\lambda := \frac{3}{4} \lambda_{\min}(\bar{H}) $     &
\shortstack{ (a lower bound of) \\ the least eigenvalue of $H_0$}.\\ \hline
$\lambda_{\max}$ &  $\lambda_{\max} := \lambda_{\max} (\bar{H}) + \frac{\lambda_{\min}(\bar{H})}{4}$     &
\shortstack{ (an upper bound of) \\ the largest eigenvalue of $H_0$}.\\ \hline
$\hat{\kappa}$ &  $\hat{\kappa}:=\frac{\lambda_{\max}}{\lambda} = \frac{4}{3} \kappa + \frac{1}{3} $      &
the condition number of $H_0$.\\ \hline
$\eta$  & $\eta = 1 / \lambda_{\max} $ & step size \\ \hline
$\beta$ & $\beta = (1- \frac{1}{2} \sqrt{\eta \lambda})^2 = (1 - \frac{1}{2 \sqrt{\hat{\kappa}}})^2 := \beta_*^2$ & momentum parameter \\ \hline
$\beta_*$ & $\beta_* = \sqrt{\beta} = 1- \frac{1}{2} \sqrt{\eta \lambda}$ & squared root of $\beta$ \\ \hline 
$\theta$ & $\theta = \beta_*  + \frac{1}{4} \sqrt{\eta \lambda} = 1 -\frac{1}{4} \sqrt{\eta \lambda} = 1 - \frac{1}{4 \sqrt{\hat{\kappa}}} $ & the convergence rate \\ \hline
$C_0$ &
$ C_0:=\frac{\sqrt{2} (\beta+1)}{
\sqrt{ \min\{  
h(\beta,\eta \lambda_{\min}(H_0)) , h(\beta,\eta \lambda_{\max}(H_0)) \} } }$
& the constant used in Theorem~\ref{thm:akv} 
\\ \hline \hline
\end{tabular}
\caption{Summary of the notations for proving Theorem~\ref{thm:acc}.} \label{table:1}
\end{table*}

\begin{lemma} \label{lem:ReLU-B} 
Suppose that 
the neurons $w^{(1)}_0, \dots, w^{(m)}_0$ are i.i.d. generated by $N(0,I_d)$ initially.
Then, for any set of weight vectors $W_t:=\{ w^{(1)}_t, \dots, w^{(m)}_t \}$ that satisfy for any $r\in [m]$, $\| w^{(r)}_t - w^{(r)}_0 \| \leq R^{\text{ReLU}} := \frac{\lambda}{1024 n C_0}$, 
it holds that
\[
\begin{aligned}
&
 \|  H_t - H_0 \|_F \leq 2 n R^{\text{ReLU}} = \frac{ \lambda}{512 C_0 }, 
\end{aligned}
\]
with probability at least $1 - n^2 \cdot \exp( -m R^{\text{ReLU}} / 10)$.
\end{lemma}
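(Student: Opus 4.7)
My plan is to reduce the Frobenius-norm bound to a uniform count of neurons whose activation pattern could have flipped between iteration $0$ and iteration $t$, and then to control that count with Gaussian anti-concentration followed by a Chernoff-type concentration inequality.

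First I would make the entrywise reduction explicit. For every pair $(i,j)$,
\begin{equation*}
(H_t)_{i,j} - (H_0)_{i,j} = \frac{x_i^\top x_j}{m}\sum_{r=1}^m \bigl(\mathbbm{1}\{w_t^{(r)} \in S_{ij}\} - \mathbbm{1}\{w_0^{(r)} \in S_{ij}\}\bigr),
\end{equation*}
where $S_{ij} := \{w \in \reals^d : x_i^\top w \geq 0 \text{ and } x_j^\top w \geq 0\}$. A summand is non-zero only if $w_0^{(r)}$ and $w_t^{(r)}$ disagree on the sign test for $x_i$ or for $x_j$; under the hypothesis $\|w_t^{(r)}-w_0^{(r)}\|\le R^{\text{ReLU}}$, such an $r$ must lie in $S_i^\perp \cup S_j^\perp$. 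Together with $\|x_i\|,\|x_j\|\le 1$, this yields the deterministic bound $|(H_t)_{i,j}-(H_0)_{i,j}| \le (|S_i^\perp|+|S_j^\perp|)/m$.

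Next I would bound each $|S_i^\perp|$ in probability. For fixed $i$ with $x_i\neq 0$, the event $A_{ir}$ forces some perturbation of $w_0^{(r)}$ of norm at most $R^{\text{ReLU}}$ to flip the sign of $x_i^\top w_0^{(r)}$, so $A_{ir}\subseteq\{|x_i^\top w_0^{(r)}| \le R^{\text{ReLU}} \|x_i\|\}$. Since $x_i^\top w_0^{(r)}/\|x_i\| \sim N(0,1)$, Gaussian anti-concentration gives $\mathbb{P}(A_{ir}) \le 2 R^{\text{ReLU}}/\sqrt{2\pi} \le R^{\text{ReLU}}$. Because $w_0^{(1)},\dots,w_0^{(m)}$ are i.i.d., the indicators $\{\mathbbm{1}\{A_{ir}\}\}_{r=1}^m$ are independent Bernoullis whose sum has mean at most $m R^{\text{ReLU}}$, so a multiplicative Chernoff bound delivers $\mathbb{P}(|S_i^\perp| \ge c\,m R^{\text{ReLU}}) \le \exp(-m R^{\text{ReLU}}/10)$ for a sufficiently small absolute constant $c$.

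The final step is a union bound over $i\in[n]$ (the $n^2$ in the stated tail budget leaves ample slack). On the good event, summing $((|S_i^\perp|+|S_j^\perp|)/m)^2$ over $(i,j)$ gives $\|H_t-H_0\|_F^2 \le 4n^2 c^2 (R^{\text{ReLU}})^2$, which, after tracking constants, collapses to $\|H_t-H_0\|_F \le 2n R^{\text{ReLU}}$; substituting $R^{\text{ReLU}} = \lambda/(1024 n C_0)$ produces the second form. The main obstacle is purely bookkeeping: calibrating the Chernoff deviation so that the per-sample failure probability matches $\exp(-m R^{\text{ReLU}}/10)$ while the constant in the high-probability bound on $|S_i^\perp|$ is small enough that the final Frobenius-norm constant lands at exactly $2n$ and not a larger multiple. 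Everything else is a standard NTK-style perturbation argument, with the one subtle point being non-unit $\|x_i\|$, which is handled cleanly by projecting onto the direction of $x_i$ before invoking anti-concentration.
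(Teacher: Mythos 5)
Your reduction steps are fine: the entrywise identity, the observation that a nonzero summand forces $r\in S_i^\perp\cup S_j^\perp$, and the anti-concentration bound $\mathbb{P}(A_{ir})\le 2R^{\text{ReLU}}/\sqrt{2\pi}$ are all correct, and they mirror the argument behind Lemma~3.2 of \citet{ZY19}, which is all the paper itself invokes (its ``proof'' is a one-line citation). The gap is in your concentration step. You claim a multiplicative Chernoff bound gives $\mathbb{P}(|S_i^\perp|\ge c\,mR^{\text{ReLU}})\le\exp(-mR^{\text{ReLU}}/10)$ ``for a sufficiently small absolute constant $c$.'' This is backwards: $\mathbb{E}|S_i^\perp|$ is itself of order $mR^{\text{ReLU}}$ (up to the constant $2/\sqrt{2\pi}\approx 0.8$), so exponentially small tail probabilities are only available for thresholds \emph{above} the mean, i.e.\ for $c$ sufficiently \emph{large} (the paper's own Lemma~\ref{lem:3.12}, quoted from \citet{ZY19}, uses $c=4$); for small $c$ the event $|S_i^\perp|\ge c\,mR^{\text{ReLU}}$ in fact holds with high probability. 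This breaks your bookkeeping: with any admissible $c$ your final bound is $\|H_t-H_0\|_F\le 2c\,nR^{\text{ReLU}}$ with $c>1$, not $2nR^{\text{ReLU}}$, and even at the borderline choice $c=1$ (threshold essentially at $1.25\times$ the mean) the best Bernstein/Chernoff exponent is roughly $mR^{\text{ReLU}}/40$, which does not imply the stated failure probability $n^2\exp(-mR^{\text{ReLU}}/10)$.

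The fix, and the route the cited lemma actually takes, is to apply the concentration \emph{per pair} $(i,j)$ rather than per sample: for fixed $(i,j)$, bound $\frac1m\sum_{r=1}^m\mathbbm{1}\{r\in S_i^\perp\cup S_j^\perp\}$, whose mean is at most $4R^{\text{ReLU}}/\sqrt{2\pi}<1.6R^{\text{ReLU}}$, by the threshold $2R^{\text{ReLU}}$ via Bernstein's inequality, and then union bound over the $n^2$ pairs --- this is exactly where the $n^2$ prefactor in the stated probability comes from (your union bound over $i\in[n]$ alone cannot reproduce both the constant $2n$ and the exponent). Decomposing $|S_i^\perp\cup S_j^\perp|\le|S_i^\perp|+|S_j^\perp|$ before concentrating, as you do, throws away the factor that makes the constant $2n$ attainable.
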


\begin{proof}
This is an application of Lemma~3.2 in \cite{ZY19}.
\end{proof}

Lemma~\ref{lem:ReLU-B} shows that if the distance between the current iterate $W_t$ and its initialization $W_0$ is small, then
the distance between the Gram matrix $H(W_t)$ and $H(W_0)$ should also be small. Lemma~\ref{lem:ReLU-B} allows us to obtain the following lemma, which bounds the size of $\varphi_t$ (defined in Lemma~\ref{lem:ReLU-residual}) in the residual dynamics.

\begin{lemma} \label{lem:ReLU-deviate2}
Following the setting as Theorem~\ref{thm:acc},
denote $\theta := \beta_* + \frac{1}{4} \sqrt{ \eta\lambda } = 1 - \frac{1}{4} \sqrt{ \eta\lambda }$.
Suppose that $\forall i \in [n], |S_i^\perp| \leq 4 m R^{\text{ReLU}}$ for some constant $R^{\text{ReLU}}:= \frac{\lambda}{1024n C_0} >0$.
If we have (I) for any $s \leq t$, the residual dynamics satisfies
$ \|
\begin{bmatrix}
\xi_{s} \\
\xi_{s-1} 
\end{bmatrix}
\| \leq 
\theta^{s} 
\cdot \nu C_0
\|
 \begin{bmatrix}
\xi_{0} \\
\xi_{-1} 
\end{bmatrix}
\|$, for some constant $\nu>0$,
and (II)
for any $r\in [m]$ and any $s\leq t$, $\| w^{(r)}_s - w^{(r)}_0 \| \leq R^{\text{ReLU}}$,
then $\phi_t$ and $\iota_t$ in Lemma~\ref{lem:ReLU-residual} satisfies
\[
\begin{aligned}
\| \phi_t \| & \leq 
\frac{ \sqrt{\eta \lambda} }{16} \theta^t
\nu 
 \| \begin{bmatrix} \xi_0 \\ \xi_{-1} \end{bmatrix} \|
, \text{ and } \| \iota_t \|  \leq \frac{\eta \lambda}{512} 
\theta^t \nu 
\| \begin{bmatrix} \xi_0 \\ \xi_{-1} \end{bmatrix} \|.
\end{aligned}
\]
Consequently, $\varphi_t$ in Lemma~\ref{lem:ReLU-residual} satisfies
\[
\| \varphi_t \| \leq \left( \frac{ \sqrt{\eta \lambda} }{16} 
 +  \frac{\eta \lambda }{512 } \right)
\theta^t \nu 
\| \begin{bmatrix} \xi_0 \\ \xi_{-1} \end{bmatrix} \|.
\]
\end{lemma}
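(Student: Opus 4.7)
The plan is to bound $\|\phi_t\|$ and $\|\iota_t\|$ separately using the explicit formulas from Lemma~\ref{lem:ReLU-residual}, together with the two assumptions of the lemma and the Gram‑matrix perturbation Lemma~\ref{lem:ReLU-B}. The bound on $\|\varphi_t\|$ then follows immediately from $\varphi_t = \phi_t + \iota_t$ and the triangle inequality.

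For $\iota_t = \eta(H_0 - H_t)\xi_t$, I would first invoke assumption (II) ($\|w^{(r)}_s - w^{(r)}_0\| \le R^{\mathrm{ReLU}}$ for every $r$) to apply Lemma~\ref{lem:ReLU-B}, obtaining $\|H_0 - H_t\|_2 \le \|H_0 - H_t\|_F \le 2nR^{\mathrm{ReLU}} = \lambda/(512 C_0)$. Combining this with assumption (I), which gives $\|\xi_t\| \le \theta^t \nu C_0 \|[\xi_0;\xi_{-1}]\|$, yields
\[
\|\iota_t\| \;\le\; \eta \cdot \tfrac{\lambda}{512 C_0} \cdot \theta^t \nu C_0 \,\|[\xi_0;\xi_{-1}]\| \;=\; \tfrac{\eta\lambda}{512}\,\theta^t \nu \,\|[\xi_0;\xi_{-1}]\|,
\]
exactly matching the claim. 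This step is essentially routine.

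For $\phi_t$, I would start from the per‑coordinate bound in Lemma~\ref{lem:ReLU-residual} and pass to the $\ell_2$ norm via $\|\phi_t\| \le \sqrt{n}\max_i |\phi_t[i]|$, then substitute the hypothesis $|S_i^\perp|\le 4mR^{\mathrm{ReLU}}$ to get
\[
\|\phi_t\| \;\le\; 8\eta n R^{\mathrm{ReLU}}\Bigl(\|u_t-y\| + \beta\sum_{s=0}^{t-1}\beta^{t-1-s}\|u_s-y\|\Bigr).
\]
Using assumption (I) to replace each $\|u_s-y\|$ by $\theta^s \nu C_0 \|[\xi_0;\xi_{-1}]\|$, the main analytic step is estimating $\sum_{s=0}^{t-1}\beta^{t-1-s}\theta^s$. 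Since $\beta = \beta_*^2 = (1-\tfrac{1}{2}\sqrt{\eta\lambda})^2$ and $\theta = 1-\tfrac{1}{4}\sqrt{\eta\lambda}$, a direct expansion shows $\theta-\beta = \tfrac{3}{4}\sqrt{\eta\lambda} - \tfrac{1}{4}\eta\lambda \ge \tfrac{1}{2}\sqrt{\eta\lambda}$ provided $\sqrt{\eta\lambda}\le 1$ (which holds under the theorem's step size). Hence $\beta<\theta$, so the geometric sum satisfies $\sum_{s=0}^{t-1}\beta^{t-1-s}\theta^s \le \theta^t/(\theta-\beta) \le 2\theta^t/\sqrt{\eta\lambda}$. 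This is the key computational step where care is needed; without it one would pick up a spurious factor of $t$.

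Plugging back, both terms inside the parenthesis are bounded by $\theta^t\nu C_0 \|[\xi_0;\xi_{-1}]\|$ multiplied by a constant times $1/\sqrt{\eta\lambda}$, yielding
\[
\|\phi_t\| \;\le\; 8\eta n R^{\mathrm{ReLU}}\cdot \bigl(1 + \tfrac{2\beta}{\sqrt{\eta\lambda}}\bigr) \theta^t \nu C_0 \,\|[\xi_0;\xi_{-1}]\|.
\]
Substituting $R^{\mathrm{ReLU}} = \lambda/(1024 n C_0)$, the $C_0$'s cancel and the dominant $2/\sqrt{\eta\lambda}$ factor reduces the expression to at most $\tfrac{1}{16}\sqrt{\eta\lambda}\,\theta^t\nu\|[\xi_0;\xi_{-1}]\|$ (numerically, $8\cdot 3/1024 = 24/1024 < 1/16$), matching the claimed bound. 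The sum of the two bounds then gives the stated estimate on $\|\varphi_t\|$. The main obstacle is the geometric‑series manipulation and verifying the numerical constants, as everything else is a direct application of the cited lemmas.
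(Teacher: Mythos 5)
Your proposal is correct and follows essentially the same route as the paper's proof: bound $\iota_t$ by combining Lemma~\ref{lem:ReLU-B} (via assumption (II)) with the induction hypothesis, and bound $\phi_t$ from the per-coordinate estimate of Lemma~\ref{lem:ReLU-residual}, the hypothesis $|S_i^\perp|\leq 4mR^{\text{ReLU}}$, and a geometric-series estimate using $\beta<\theta$ (the paper bounds the sum via $\beta=\beta_*^2\leq\theta^2$ and $\sum_\tau\theta^\tau\leq\frac{1}{1-\theta}$, while you bound it via $\theta-\beta\geq\tfrac{1}{2}\sqrt{\eta\lambda}$, which is an equivalent manipulation). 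Your constant tracking ($\tfrac{3}{128}<\tfrac{1}{16}$, and $\eta\lambda\leq 1$ under the theorem's step size) checks out, so the claimed bounds on $\|\phi_t\|$, $\|\iota_t\|$, and hence $\|\varphi_t\|$ follow as stated.
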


\begin{proof}
Denote $\beta_* := 1 - \frac{1}{2} \sqrt{ \eta\lambda }$
and $\theta := \beta_* + \frac{1}{4} \sqrt{ \eta\lambda } = 1 - \frac{1}{4} \sqrt{ \eta\lambda }$.
We have by Lemma~\ref{lem:ReLU-residual}
\begin{equation} \label{eq:main4}
\begin{split}
 \| \phi_t \|
&  = \sqrt{ \sum_{i=1}^n \phi_t[i]^2 } \leq \sqrt{  \sum_{i=1}^n  \big( \frac{ 2 \eta \sqrt{n} |S_i^\perp|}{ m } \big( \| \xi_t \| +  \beta \sum_{\tau=0}^{t-1} \beta^{t-1-\tau}  \| \xi_{\tau}  \| \big) \big)^2 }
\\ & 
\overset{(a)}{\leq} 8 \eta n R^{\text{ReLU}} \big( \| \xi_t  \| + \beta \sum_{\tau=0}^{t-1} \beta^{t-1-\tau}  \| \xi_{\tau}  \| \big)
\\ & 
\overset{(b)}{\leq} 
8 \eta n R^{\text{ReLU}}
\left( \theta^{t} 
 \nu C_0 \| \begin{bmatrix} \xi_0 \\ \xi_{-1} \end{bmatrix} \| 
+ 
\beta \sum_{\tau=0}^{t-1} \beta^{t-1-\tau}  
\theta^{\tau} 
 \nu C_0 
 \| \begin{bmatrix} \xi_0 \\ \xi_{-1} \end{bmatrix} \| \right)
\\ & 
\overset{(c)}{=} 
8 \eta n R^{\text{ReLU}}
\left( 
\theta^{t} 
 \nu C_0 \| \begin{bmatrix} \xi_0 \\ \xi_{-1} \end{bmatrix} \|
 + \beta_*^{2} \nu C_0
  \sum_{\tau=0}^{t-1} \beta_*^{2(t-1-\tau)}
  \theta^{\tau} 
 \| \begin{bmatrix} \xi_0 \\ \xi_{-1} \end{bmatrix} \| \right)
\\ & 
\overset{(d)}{\leq}
8 \eta n R^{\text{ReLU}}
\left( 
\theta^{t} 
 \nu C_0 \| \begin{bmatrix} \xi_0 \\ \xi_{-1} \end{bmatrix} \|
 + \beta_*^{2} \nu C_0
   \theta^{t-1} 
  \sum_{\tau=0}^{t-1} 
  \theta^{t-1-\tau} 
 \| \begin{bmatrix} \xi_0 \\ \xi_{-1} \end{bmatrix} \| \right)
\\ & \leq
8 \eta n R^{\text{ReLU}} \theta^{t} ( 1 + \beta_* \sum_{\tau=0}^{t-1} \theta^{\tau} ) 
\nu C_0
\| \begin{bmatrix} \xi_0 \\ \xi_{-1} \end{bmatrix} \|
\\ & \leq
8 \eta n R^{\text{ReLU}} \theta^{t} ( 1 + \frac{\beta_*}{1-\theta} ) 
\nu C_0
\| \begin{bmatrix} \xi_0 \\ \xi_{-1} \end{bmatrix} \|
\\ & \overset{(e)}{\leq} 
\frac{\sqrt{\eta \lambda}}{16} \theta^t \nu 
 \| \begin{bmatrix} \xi_0 \\ \xi_{-1} \end{bmatrix} \|,
\end{split}
\end{equation}
where (a) is by
$|S_i^\perp| \leq 4 m R^{\text{ReLU}}$,
(b) is by induction that $\| \xi_t \| \leq \theta^t 
 \nu C_0 
 \| \begin{bmatrix} \xi_0 \\ \xi_{-1} \end{bmatrix} \|
$ as $u_0 = u_{-1}$, (c) uses that $\beta= \beta_*^2 $,
(d) uses $\beta_* \leq \theta$,
(e) uses $1+\frac{\beta_*}{1-\theta} \leq \frac{2}{1-\theta} \leq \frac{8}{\sqrt{\eta \lambda}}$ and $R^{\text{ReLU}}:= \frac{\lambda}{1024 n C_0}$.

Now let us switch to bound $\| \iota_t \|$.
\begin{equation} 
\begin{split}
\| \iota_t \| & \leq \eta \|  H_0 - H_t \|_2 \| \xi_t \|
\leq \frac{ \eta \lambda }{512 C_0}
 \theta^{t} \nu C_0 
  \| \begin{bmatrix} \xi_0 \\ \xi_{-1} \end{bmatrix} \|, 
\end{split}
\end{equation}
where we uses Lemma~\ref{lem:ReLU-B} that 
 $\|  H_0 - H_t \|_2 \leq \frac{ \lambda }{512 C_0}$
 and the induction that
$
\| \begin{bmatrix} \xi_t \\ \xi_{t-1} \end{bmatrix} \| \leq \theta^t  \nu C_0 
 \| \begin{bmatrix} \xi_0 \\ \xi_{-1} \end{bmatrix} \| $.

\end{proof}

The assumption of Lemma~\ref{lem:ReLU-deviate2}, $\forall i \in [n], |S_i^\perp| \leq 4 m R^{\text{ReLU}}$ only depends on the initialization. 
Lemma~\ref{lem:3.12} 
shows that it holds
with probability at least $1- n \cdot \exp( - m R^{\text{ReLU}})$.

\begin{lemma} \label{lem:ReLU-deviate1} 
Following the setting as Theorem~\ref{thm:acc},
denote $\theta := \beta_* + \frac{1}{4} \sqrt{ \eta\lambda } = 1 - \frac{1}{4} \sqrt{ \eta\lambda }$.
Suppose that the initial error satisfies $\| \xi_0 \|^2 = O(  n \log ( m / \delta) \log^2 (n / \delta) )$.
If for any $s < t$, the residual dynamics satisfies
$ \|
\begin{bmatrix}
\xi_{s} \\
\xi_{s-1} 
\end{bmatrix}
\| \leq \theta^{s} \cdot \nu C_0 
\|
 \begin{bmatrix}
\xi_{0} \\
\xi_{-1} 
\end{bmatrix}
\|$, for some constant $\nu>0$,
then \[ 
 \| w_{t}^{(r)} - w_0^{(r)} \|  \leq R^{\text{ReLU}} := \frac{\lambda}{1024n C_0} .\]
 \end{lemma}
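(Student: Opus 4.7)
\textbf{Proof proposal for Lemma~\ref{lem:ReLU-deviate1}.}
My plan is to unroll the momentum update for a single neuron, bound each per-iteration gradient in terms of the current residual, and then use the inductive decay hypothesis to reduce the whole quantity to a double geometric sum in $\beta$ and $\theta$. Concretely, from Algorithm~\ref{alg:HB1} applied to neuron $r$, I would write $w_{k+1}^{(r)} - w_k^{(r)} = -\eta \sum_{s=0}^{k} \beta^{k-s} \frac{\partial \ell(W_s)}{\partial w_s^{(r)}}$, so that telescoping gives
\[
w_t^{(r)} - w_0^{(r)} \;=\; -\eta \sum_{k=0}^{t-1} \sum_{s=0}^{k} \beta^{k-s} \frac{\partial \ell(W_s)}{\partial w_s^{(r)}}.
\]
Taking norms and using the per-neuron gradient bound $\bigl\|\frac{\partial \ell(W_s)}{\partial w_s^{(r)}}\bigr\| \le \tfrac{\sqrt{n}}{\sqrt{m}}\|\xi_s\|$ (already derived inside the proof of Lemma~\ref{lem:ReLU-residual}) reduces the problem to controlling $\sum_{k=0}^{t-1}\sum_{s=0}^k \beta^{k-s}\|\xi_s\|$.

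Next, I would plug in the inductive hypothesis $\|\xi_s\|\le \theta^s\,\nu C_0\,\|\begin{bmatrix}\xi_0\\\xi_{-1}\end{bmatrix}\|$, so the double sum becomes $\sum_k\sum_s \beta^{k-s}\theta^s$. The key algebraic step is to observe that $\beta = \beta_*^2 < \theta$ under the chosen parameters, which lets me factor the inner sum as $\theta^k\sum_{j=0}^{k}(\beta/\theta)^j \le \tfrac{\theta^{k+1}}{\theta-\beta}$. A direct calculation gives $\theta-\beta = \tfrac{\sqrt{\eta\lambda}}{4}(3-\sqrt{\eta\lambda}) \ge \tfrac{\sqrt{\eta\lambda}}{2}$ (using $\eta\lambda\le 1$) and $1-\theta = \tfrac{\sqrt{\eta\lambda}}{4}$. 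Combining yields
\[
\sum_{k=0}^{t-1}\sum_{s=0}^{k}\beta^{k-s}\theta^s \;\le\; \frac{1}{\theta-\beta}\cdot\frac{1}{1-\theta} \;\le\; \frac{8}{\eta\lambda}.
\]
Putting everything together then gives $\|w_t^{(r)}-w_0^{(r)}\| \le \tfrac{8\sqrt{n}}{\sqrt{m}\,\lambda}\,\nu C_0\,\|\begin{bmatrix}\xi_0\\\xi_{-1}\end{bmatrix}\|$.

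Finally, to establish $\|w_t^{(r)}-w_0^{(r)}\| \le R^{\text{ReLU}} = \tfrac{\lambda}{1024 n C_0}$, I would show that the assumed width $m = \Omega(\lambda^{-4} n^4 \kappa^2 \log^3(n/\delta))$ from Theorem~\ref{thm:acc} is large enough. Using Corollary~\ref{corr:1} to absorb $C_0 = O(\sqrt{\hat\kappa}) = O(\sqrt{\kappa})$ and the hypothesis $\|\xi_0\|^2 = O(n\log(m/\delta)\log^2(n/\delta))$ (which follows from standard concentration at initialization, as in \citet{DZPS19,ZY19}), the required lower bound on $m$ reduces to precisely the stated order, with $\nu$ being an absolute constant (the target value will ultimately be $\nu=8$ to match~\eqref{eq:thm0}).

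The main obstacle I anticipate is not any single estimate but the careful bookkeeping of the scales $\beta$, $\beta_*$, $\theta$, and $1-\theta$. Because Polyak's momentum mixes a rate $\beta$ slower than the desired residual decay $\theta$, it is essential that $\theta - \beta$ remain of order $\sqrt{\eta\lambda}$ rather than $\eta\lambda$; otherwise the double sum would blow up by an extra factor of $\sqrt{\kappa}$ and the $m$-dependence in Theorem~\ref{thm:acc} would worsen. This is exactly the place where the specific choices $\beta = (1-\tfrac{1}{2}\sqrt{\eta\lambda})^2$ and $\theta = 1-\tfrac{1}{4}\sqrt{\eta\lambda}$ (rather than, say, $\theta = \beta_*$) earn their keep in the modular framework.
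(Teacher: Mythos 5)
Your proposal is correct and follows essentially the same route as the paper's proof: unroll the momentum recursion per neuron, bound each gradient by $\tfrac{\sqrt{n}}{\sqrt{m}}\|\xi_s\|$, insert the inductive decay hypothesis, and control the resulting double geometric sum by a factor of order $1/(\eta\lambda)$ before invoking the width and initialization bounds. The only (immaterial) difference is that you bound the sum via $\theta-\beta\geq\tfrac{1}{2}\sqrt{\eta\lambda}$, while the paper uses $\beta\leq\theta^{2}$ to reduce it to $\tfrac{1}{(1-\theta)^{2}}=\tfrac{16}{\eta\lambda}$; both yield the same order and the same requirement on $m$.
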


\begin{proof}
We have 
\begin{equation}
\begin{split}
 \| w_{t+1}^{(r)} - w_0^{(r)} \| 
& \overset{(a)}{\leq} 
\eta \sum_{s=0}^t \|  M_s^{(r)} \|  
\overset{(b)}{=} 
\eta \sum_{s=0}^t
\| \sum_{\tau=0}^s \beta^{s-\tau}   \frac{ \partial L(W_{\tau})}{ \partial w_{\tau}^{(r)} }  \|
\leq 
\eta \sum_{s=0}^t \sum_{\tau=0}^s \beta^{s-\tau} 
\|   \frac{ \partial L(W_{\tau})}{ \partial w_{\tau}^{(r)} }  \|
\\ &
\overset{(c)}{\leq} \eta \sum_{s=0}^t \sum_{\tau=0}^s \beta^{s-\tau} 
\frac{\sqrt{n}}{\sqrt{m}} \| y - u_{\tau} \|
\\ &
\overset{(d)}{\leq} \eta \sum_{s=0}^t \sum_{\tau=0}^s \beta^{s-\tau} 
\frac{\sqrt{2n}}{\sqrt{m}}  \theta^{\tau} 
\nu C_0
 \| y - u_0 \|  
\\ &
\overset{(e)}{\leq} \frac{\eta \sqrt{2 n} }{ \sqrt{m} } 
\sum_{s=0}^t \frac{ \theta^s}{1 - \theta}
\nu C_0  \| y - u_0 \| 
\leq 
 \frac{\eta \sqrt{2 n}  }{ \sqrt{m} } \left(   
\frac{\nu C_0}{(1 - \theta)^2}  \right) \| y - u_0 \|
\\ &
\overset{(f)}{=} 
\frac{\eta \sqrt{2 n} }{ \sqrt{m} } 
\left(   \frac{16 \nu C_0}{\eta \lambda}  \right)
\| y - u_0 \|
\\ &
\overset{(g)}{=}
\frac{ \eta \sqrt{2n} }{ \sqrt{m} }
\left(   \frac{16 \nu C_0}{\eta \lambda}  \right)
O(\sqrt{ n \log ( m / \delta) \log^2 (n / \delta)} )
\\ & \textstyle
\overset{(h)}{\leq} \frac{\lambda}{1024 n C_0},
\end{split}
\end{equation}
where (a), (b) is by the update rule of momentum, which is
$ w_{t+1}^{(r)} - w_t^{(r)} = - \eta M_t^{(r)}$, where $M_t^{(r)}:= \sum_{s=0}^t \beta^{t-s} \frac{ \partial L(W_{s})}{ \partial w_{s}^{(r)} }$, (c) is because $\|\frac{ \partial L(W_{s})}{ \partial w_{s}^{(r)} }\| = \| \sum_{i=1}^n (y_i - u_s[i]) \frac{1}{\sqrt{m}} a_r x_i \cdot \mathbbm{1}\{ \langle w_s^{(r)}, x \rangle \geq 0 \} \| \leq \frac{1}{\sqrt{m}} \sum_{i=1}^n | y_i - u_s[i] | \leq \frac{\sqrt{n}}{\sqrt{m}} \| y - u_s \|$,
(d) is by
$ \|
\begin{bmatrix}
\xi_{s} \\
\xi_{s-1} 
\end{bmatrix}
\| \leq \theta^{s}  \nu C_0
\|
 \begin{bmatrix}
\xi_{0} \\
\xi_{-1} 
\end{bmatrix}
\|
$
(e) is because that $\beta=\beta_*^2 \leq \theta^2$,
(f) we use $\theta := (1 - \frac{1}{4} \sqrt{ \eta\lambda } )$, 
so that $\frac{1}{(1-\theta)^2} = \frac{16}{\eta \lambda}$,
(g) 
is by that the initial error satisfies $\| y - u_0 \|^2 = O(  n \log ( m / \delta) \log^2 (n / \delta) ),$
and (h) is by the choice of the number of neurons $m = \Omega( \lambda^{-4} n^{4 } C_0^4 \log^3 ( n / \delta)   ) = \Omega( \lambda^{-4} n^{4 } \kappa^2 \log^3 ( n / \delta)   )$, as $C_0 = \Theta( \sqrt{\kappa} )$ by Corollary~\ref{corr:1}.

The proof is completed.

\end{proof}

Lemma~\ref{lem:ReLU-deviate1} basically says that if the size of the residual errors is bounded and decays over iterations, then the distance between the current iterate $W_t$ and its initialization $W_0$ is well-controlled. The lemma will allows us to invoke Lemma~\ref{lem:ReLU-B} and Lemma~\ref{lem:ReLU-deviate2}
when proving Theorem~\ref{thm:acc}. 
The proof of Lemma~\ref{lem:ReLU-deviate1} is in Appendix~\ref{app:sec:relu}.
The assumption of Lemma~\ref{lem:ReLU-deviate1},
$\|  \xi_0 \|^2 = O(  n \log ( m / \delta)$  $\log^2 (n / \delta) )$,
  is satisfied by the random initialization with probability at least $1-\delta / 3$ according to   
Lemma~\ref{lem:3.10} .

\begin{lemma} (Claim 3.12 of \cite{ZY19}) \label{lem:3.12}
Fix a number $R_1 \in (0,1)$.
Recall that $S_i^\perp$ is a random set defined in Subsection~\ref{inst:ReLU}.
With probability at least $1 - n \cdot \exp(-mR_1)$, we have that
for all $i \in [n]$,
\[
\displaystyle
| S_i^\perp | \leq 4 m R_1.
\]
\end{lemma}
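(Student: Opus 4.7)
The plan is to reduce the statement to standard Gaussian anti-concentration combined with a Chernoff bound over the $m$ independent neurons. First, I would give a clean deterministic characterization of the event $A_{ir}$ in terms of the initialization alone: by Cauchy--Schwarz, a perturbation $w$ within $\|w - w_0^{(r)}\| \leq R_1$ can flip the sign of $\langle \cdot, x_i \rangle$ if and only if $|\langle w_0^{(r)}, x_i \rangle| \leq R_1 \|x_i\|$. This lets me replace the quantifier over $w$ by a simple small-margin condition that depends only on $w_0^{(r)}$ and the fixed data $x_i$.

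Next, since $w_0^{(r)} \sim N(0, I_d)$, the scalar $\langle w_0^{(r)}, x_i \rangle / \|x_i\|$ is a standard Gaussian (for $x_i \neq 0$), so by the pointwise upper bound on the Gaussian density,
\[
\Pr[A_{ir}] \;=\; \Pr_{Z \sim N(0,1)}\!\big[|Z| \leq R_1\big] \;\leq\; \sqrt{\tfrac{2}{\pi}}\, R_1 \;\leq\; R_1.
\]
Crucially, the events $\{A_{ir}\}_{r=1}^m$ depend on disjoint coordinates of the random initialization and are therefore mutually independent for fixed $i$.

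I would then write $|S_i^\perp| = \sum_{r=1}^m \mathbbm{1}\{A_{ir}\}$ as a sum of independent $\{0,1\}$ random variables each with mean at most $R_1$, and apply a standard Chernoff bound. Using the exponential moment method with parameter $\lambda=1$,
\[
\Pr\!\big[|S_i^\perp| \geq 4 m R_1\big] \;\leq\; e^{-4 m R_1} \prod_{r=1}^m \big(1 + (e-1)\Pr[A_{ir}]\big) \;\leq\; \exp\!\big(-m R_1 (4 - (e-1))\big) \;\leq\; \exp(-m R_1),
\]
since $5 - e > 1$. A union bound over $i \in [n]$ then gives the claimed bound $n \cdot \exp(-m R_1)$.

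There is no real obstacle here; the only thing to verify carefully is the geometric reduction in the first step (that the ball radius $R_1$ translates exactly into the margin $R_1 \|x_i\|$, and that $\|x_i\| \leq 1$ lets us drop the dependence on $\|x_i\|$ in the final probability). Everything else is a routine anti-concentration plus Chernoff calculation, which is why the lemma can be cited directly from \citet{ZY19} once these constants are matched.
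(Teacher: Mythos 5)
Your proof is correct: the reduction of $A_{ir}$ to the small-margin event $|\langle w_0^{(r)}, x_i\rangle| \leq R_1\|x_i\|$ (only the forward inclusion is needed for the upper bound, and it holds by Cauchy--Schwarz), the Gaussian anti-concentration bound $\Pr[|Z|\leq R_1]\leq \sqrt{2/\pi}\,R_1 \leq R_1$, independence over the $m$ i.i.d.\ neurons, the Chernoff calculation with exponent $(5-e)mR_1 \geq mR_1$, and the union bound over $i\in[n]$ all check out. The paper itself gives no proof of this lemma and simply cites Claim 3.12 of \citet{ZY19}, whose argument is essentially the one you reconstructed (anti-concentration plus a concentration inequality over the neurons, there via Bernstein rather than your moment-generating-function bound), so this is the same approach up to an immaterial choice of tail bound.
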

A similar lemma also appears in \citep{DZPS19}.
Lemma~\ref{lem:3.12} says that the number of neurons whose activation patterns for a sample $i$ could change during the execution is only a small faction of $m$ if $R_1$ is a small number,
i.e. $| S_i^\perp | \leq 4 m R_1 \ll m$. 

\begin{lemma} \label{lem:3.10} (Claim 3.10 in \cite{ZY19})
Assume that $w_0^{(r)} \sim N(0,I_d)$ and $a_r$ uniformly sampled from $\{-1,1\}$. For $0 < \delta < 1$, we have that 
\[
\| y - u_0 \|^2 = O( n \log ( m / \delta) \log^2 (n / \delta) ),
\]
with probability at least $1-\delta$.
\end{lemma}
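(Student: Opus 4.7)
\textbf{Proposal for Lemma~\ref{lem:3.10}.} My plan is to bound $\|y - u_0\|^2$ by separately controlling $\|y\|^2$ and $\|u_0\|^2$ via the triangle inequality, $\|y-u_0\|^2 \le 2\|y\|^2 + 2\|u_0\|^2$. Since the labels $y_i$ come from a standard classification/regression setup with $|y_i| = O(1)$, we have $\|y\|^2 = O(n)$ deterministically (or with trivial probability bound), so all the work is in controlling $\|u_0\|^2 = \sum_{i=1}^n u_0[i]^2$, where $u_0[i] = \frac{1}{\sqrt{m}}\sum_{r=1}^m a_r \sigma(\langle w_0^{(r)}, x_i\rangle)$.

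The main tool is a two-stage concentration argument that exploits the independence of $\{a_r\}$ from $\{w_0^{(r)}\}$. Stage one: for every pair $(i,r) \in [n]\times[m]$, $\langle w_0^{(r)}, x_i\rangle \sim N(0, \|x_i\|^2)$ with $\|x_i\| \le 1$, so it is $1$-sub-Gaussian. A standard Gaussian tail bound together with a union bound over all $nm$ pairs yields an event $\mathcal{E}_1$ of probability at least $1-\delta/2$ on which $\max_{i,r}|\sigma(\langle w_0^{(r)}, x_i\rangle)| \le c\sqrt{\log(mn/\delta)}$ for an absolute constant $c$. Stage two: condition on $\mathcal{E}_1$ and treat $u_0[i]$ as a sum of independent bounded random variables in the $a_r$'s (which are Rademacher and independent of the weights). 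Hoeffding's inequality gives $\Pr(|u_0[i]| > t \mid \mathcal{E}_1) \le 2\exp\!\bigl(-\tfrac{m t^2}{2 \sum_r \sigma(\langle w_0^{(r)}, x_i\rangle)^2}\bigr)$. Bounding $\tfrac{1}{m}\sum_r \sigma(\langle w_0^{(r)}, x_i\rangle)^2$ either by the crude worst-case $O(\log(m/\delta))$ from $\mathcal{E}_1$, or more tightly by a sub-exponential/Bernstein argument on its mean $\|x_i\|^2/2 \le 1/2$, and then choosing $t = \Theta(\sqrt{\log(m/\delta)\log(n/\delta)})$ yields $|u_0[i]|^2 = O(\log(m/\delta)\log^2(n/\delta))$ with probability at least $1 - \delta/(2n)$ per $i$.

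A final union bound over $i \in [n]$ and summation produce $\|u_0\|^2 = O(n\log(m/\delta)\log^2(n/\delta))$ on an event of total probability at least $1-\delta$, which together with $\|y\|^2 = O(n)$ yields the claimed bound $\|y-u_0\|^2 = O(n\log(m/\delta)\log^2(n/\delta))$. The main obstacle is calibrating the two concentration stages so that the logarithmic factors line up as advertised: one must be careful not to double-count $\log$ factors when passing from the Gaussian tail bound (which introduces $\log(mn/\delta)$) to the Rademacher Hoeffding bound (which introduces $\log(n/\delta)$ after the union bound over samples), and one should use the tighter variance proxy $\tfrac{1}{m}\sum_r \sigma(\langle w_0^{(r)},x_i\rangle)^2$ rather than the maximum whenever possible. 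A secondary subtlety is ensuring the event $\mathcal{E}_1$ is chosen jointly over all $i,r$ before applying the conditional Rademacher bound, so that the two stages remain independent in the sense needed for the union bound.
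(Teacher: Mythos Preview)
The paper does not supply its own proof of this lemma; it simply cites Claim~3.10 of \cite{ZY19}. Your two-stage argument (Gaussian tail bound on the inner products $\langle w_0^{(r)},x_i\rangle$ followed by Hoeffding on the Rademacher signs $a_r$, conditioned on the first event) is precisely the standard route and is correct: the crude variance proxy $\tfrac{1}{m}\sum_r \sigma(\langle w_0^{(r)},x_i\rangle)^2 \le c^2\log(mn/\delta)$ yields $|u_0[i]|^2 = O(\log(mn/\delta)\log(n/\delta))$, and since $\log(mn/\delta)\le \log(m/\delta)+\log(n/\delta)$ this is absorbed into $O(\log(m/\delta)\log^2(n/\delta))$ once both logs exceed~$1$.

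Two small points worth making explicit when you write it up. First, the bound $\|y\|^2=O(n)$ relies on $|y_i|=O(1)$, which the present paper does not state outright but which is a standing assumption in the framework of \cite{DZPS19,ZY19} that it follows; you should flag it. Second, your remark about the tighter Bernstein route (using that $\tfrac{1}{m}\sum_r \sigma(\langle w_0^{(r)},x_i\rangle)^2$ concentrates around $\|x_i\|^2/2\le 1/2$) would actually \emph{remove} the $\log(m/\delta)$ factor and give the stronger $\|u_0\|^2=O(n\log(n/\delta))$; the stated bound in the lemma corresponds to the crude worst-case proxy, so for matching the statement exactly you should stick with the crude version.
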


\subsection{Proof of Theorem~\ref{thm:acc}}

\begin{proof} 
 (of Theorem~\ref{thm:acc})
Denote $\lambda:= \frac{3}{4} \lambda_{\min}(\bar{H})>0$.
Lemma~\ref{lem:ReLU-A} shows that $\lambda$
is a lower bound of $\lambda_{\min}(H)$ of the matrix $H$ defined in Lemma~\ref{lem:ReLU-residual}.
Also, denote $\beta_*:=1 - \frac{1}{2} \sqrt{\eta \lambda}$ (note that $\beta=\beta_*^2$)
and $\theta := \beta_* + \frac{1}{4} \sqrt{\eta \lambda} = 1 - \frac{1}{4} \sqrt{\eta \lambda} $. In the following,
we let $\nu = 2 $ in Lemma~\ref{lem:ReLU-deviate2},
~\ref{lem:ReLU-deviate1},
and let $C_1 = C_3 = C_0$ and
$C_2 = \frac{1}{4} \sqrt{\eta \lambda}$ in
Theorem~\ref{thm:meta}. The goal is to show that
$
\left\|
\begin{bmatrix}
\xi_{t} \\
\xi_{t-1} 
\end{bmatrix}
\right\|
\leq \theta^{t} 2 C_0  \left\|
\begin{bmatrix}
\xi_{0} \\
\xi_{-1} 
\end{bmatrix}
\right\|
$ for all $t$ by induction. 
To achieve this, we will also use induction to show that for all iterations $s$,
\begin{equation} \label{induct:relu}
 \textstyle \forall r \in [m], \| w^{(r)}_s - w^{(r)}_0 \|  \textstyle  \leq  \textstyle  \textstyle R^{\text{ReLU}} :=\frac{\lambda}{1024 n C_0},  
\end{equation}
which is clear true in the base case $s=0$.

By Lemma~\ref{lem:ReLU-residual},~\ref{lem:ReLU-A},~\ref{lem:ReLU-B},~\ref{lem:ReLU-deviate2}, Theorem~\ref{thm:meta}, and Corollary~\ref{corr:1},
 it suffices to show that
given
$
\left\|
\begin{bmatrix}
\xi_{s} \\
\xi_{s-1} 
\end{bmatrix}
\right\|
\leq \theta^{s} 2 C_0  \left\|
\begin{bmatrix}
\xi_{0} \\
\xi_{-1} 
\end{bmatrix}
\right\|
$ and (\ref{induct:relu}) hold at $s=0,1,\dots,t-1$, one has
\begin{eqnarray} 
\textstyle
\| \sum_{s=0}^{t-1} A^{t-s-1} \begin{bmatrix}
\varphi_s \\
0 
\end{bmatrix}
\|
& \textstyle \leq & \textstyle 
\theta^{t}
C_0
\left\|
\begin{bmatrix}
\xi_{0} \\
\xi_{-1} 
\end{bmatrix}
\right\|,
\label{eq:thm-ReLU-1}
\\ \textstyle \forall r \in [m], \| w^{(r)}_t - w^{(r)}_0 \|  &\textstyle  \leq  \textstyle & \textstyle R^{\text{ReLU}} :=\frac{\lambda}{1024 n C_0},  
\label{eq:thm-ReLU-2}
\end{eqnarray}
where the matrix $A$ and the vector $\varphi_t$ are defined in Lemma~\ref{lem:ReLU-residual}. The inequality (\ref{eq:thm-ReLU-1}) is the required  condition for using the result of Theorem~\ref{thm:meta}, while the inequality (\ref{eq:thm-ReLU-2}) helps us to show (\ref{eq:thm-ReLU-1}) through invoking Lemma~\ref{lem:ReLU-deviate2} to bound the terms $\{\varphi_s\}$ as shown in the following.

We have
\begin{equation} \label{eq:main5}
\begin{split}
& 
\| \sum_{s=0}^{t-1} A^{t-s-1} \begin{bmatrix}
\varphi_s \\
0 
\end{bmatrix}
\|
\overset{(a)}{\leq} 
 \sum_{s=0}^{t-1} \beta_*^{t-s-1} C_0 \| \varphi_s \| 
\\ &
\overset{(b)}{ \leq }
\left( \frac{ \sqrt{\eta \lambda}}{16}  
+ \frac{\eta \lambda}{512}  
\right)  2 C_0 \|
\begin{bmatrix}
\xi_{0} \\
\xi_{-1} 
\end{bmatrix}
\| \left(
 \sum_{s=0}^{t-1}
\beta_*^{t-1-s}  \theta^{s}  \right)
\\ & 
\overset{(c)}{ \leq }
\left( 
\frac{1 }{2} 
+ \frac{1}{64} \sqrt{\eta \lambda} 
\right) \theta^{t-1}
 C_0 \|
\begin{bmatrix}
\xi_{0} \\
\xi_{-1} 
\end{bmatrix}
\| 
\overset{(d)}{ \leq }  \theta^t C_0  \|
\begin{bmatrix}
\xi_{0} \\
\xi_{-1} 
\end{bmatrix}
\|, 
\end{split}
\end{equation}

where (a) uses Theorem~\ref{thm:akv}, (b) is due to Lemma~\ref{lem:ReLU-deviate2}, Lemma~\ref{lem:3.12},
(c) is because $\sum_{s=0}^{t-1} \beta_*^{t-1-s} \theta^s = \theta^{t-1} \sum_{s=0}^{t-1} \left( \frac{\beta_*}{\theta}  \right)^{t-1-s} \leq \theta^{t-1} \sum_{s=0}^{t-1} \theta^{t-1-s}$ $\leq \theta^{t-1} \frac{4}{\sqrt{\eta\lambda}}$, (d) uses that $\theta \geq \frac{3}{4}$ and $\eta \lambda \leq 1$. Hence, we have shown (\ref{eq:thm-ReLU-1}). 
 Therefore, by Theorem~\ref{thm:meta}, we have
$ \left\|
\begin{bmatrix}
\xi_{t} \\
\xi_{t-1} 
\end{bmatrix}
\right\|
\leq \theta^{t} 2 C_0  \left\|
\begin{bmatrix}
\xi_{0} \\
\xi_{-1} 
\end{bmatrix}
\right\|.
$

By Lemma~\ref{lem:ReLU-deviate1} and Lemma~\ref{lem:3.10}, 
we have (\ref{eq:thm-ReLU-2}). 
Furthermore, with the choice of $m$, we have $3 n^2 \exp( - m R^{\text{ReLU}} /10 ) \leq \delta$.  Thus, we have completed the proof.

\end{proof}

\clearpage

\section{Proof of Theorem~\ref{thm:LinearNet}} \label{app:sec:linear}

We will need some supporting lemmas in the following for the proof.
In the following analysis,
we denote 
$
 C_0:=\frac{\sqrt{2} (\beta+1)}{
\sqrt{ \min\{  
h(\beta,\eta \lambda_{\min}(H)) , h(\beta,\eta \lambda_{\max}(H)) \} } }$,
where $h(\beta,\cdot)$ is the constant defined in Theorem~\ref{thm:akv} and $\textstyle H = H_0 \textstyle := \frac{1}{ m^{L-1} d_y } \sum_{l=1}^L [ (\W{l-1:1}_0 X)^\top (\W{l-1:1}_0 X ) \otimes
  \W{L:l+1}_0 (\W{L:l+1}_0)^\top ]   \in \reals^{d_y n \times d_y n},
$ as defined in Lemma~\ref{lem:DL-residual}.
We also denote $\beta = (1- \frac{1}{2} \sqrt{\eta \lambda})^2 := \beta_*^2$.
As mentioned in the main text, following \citet{DH19,HXP20},
we will further assume that (A1) there exists a $W^*$ such that $Y = W^* X$, $X \in \reals^{d \times \bar{r}}$, and $\bar{r}=rank(X)$, which is actually without loss of generality (see e.g. the discussion in Appendix B of \citet{DH19}). We summarize the notions in Table~\ref{table:2}.

\begin{table*}[h]
\centering
 \begin{tabular}{|c | c | c|} 
 \hline
 Notation & definition (or value)  & meaning  \\  
 \hline\hline
$ \N_{W}^{L\text{-linear}}(x)$ & $
\N_W^{L\text{-linear}}(x) := \frac{1}{\sqrt{m^{L-1} d_{y}}} \W{L} \W{L-1} \cdots \W{1} x,$ &output of the deep linear network \\ \hline
$H_0$ & \shortstack{
$H_0 \textstyle := \frac{1}{ m^{L-1} d_y } \sum_{l=1}^L [ (\W{l-1:1}_0 X)^\top (\W{l-1:1}_0 X ) $ \\ \qquad \qquad $ \otimes
  \W{L:l+1}_0 (\W{L:l+1}_0)^\top ] \in \reals^{d_y n \times d_y n}$ } 
 & $H$ in (\ref{eq:meta}) is $H=H_0$ (Lemma~\ref{lem:DL-residual}) \\ \hline
$\lambda_{\max}(H_0)$ & $\lambda_{\max}(H_0)\leq L \sigma^2_{\max}(X) / d_y$  (Lemma~\ref{lem:DL-A})    & the largest eigenvalue of $H_0$\\ \hline
$\lambda_{\min}(H_0)$ & $\lambda_{\min}(H_0)\geq L \sigma^2_{\min}(X) / d_y$ (Lemma~\ref{lem:DL-A})     & the least eigenvalue of $H_0$\\ \hline
$\lambda$ & $\lambda:= L \sigma^2_{\min}(X) / d_y$     & \shortstack{ (a lower bound of) \\ the least eigenvalue of $H_0$}\\ \hline
$\kappa$ &  $\kappa:= \frac{\lambda_{1}(X^\top X) }{ \lambda_{\bar{r}}(X^\top X)} = \frac{\sigma^2_{\max}(X)}{ \sigma^2_{\min}(X) } $ (A1)      & the condition number of the data matrix $X$ \\ \hline
$\hat{\kappa}$ & $\hat{\kappa} := \frac{\lambda_{\max}(H_0)}{ \lambda_{\min}(H_0)} \leq \frac{\sigma^2_{\max}(X) }{ \sigma^2_{\min}(X)} = \kappa$ (Lemma~\ref{lem:DL-A})   & the condition number of $H_0$ \\ \hline
$\eta$  & $\eta = \frac{d_y}{L \sigma^2_{\max}(X)} $ & step size \\ \hline
$\beta$ & $\beta = (1- \frac{1}{2} \sqrt{\eta \lambda})^2 = (1 - \frac{1}{2 \sqrt{\kappa}})^2 := \beta_*^2$ & momentum parameter \\ \hline
$\beta_*$ & $\beta_* = \sqrt{\beta} = 1- \frac{1}{2} \sqrt{\eta \lambda}$ & squared root  of $\beta$ \\ \hline 
$\theta$ & $\theta = \beta_*  + \frac{1}{4} \sqrt{\eta \lambda} = 1 -\frac{1}{4} \sqrt{\eta \lambda} = 1 - \frac{1}{4 \sqrt{\kappa}} $ & the convergence rate \\ \hline
$C_0$ &
$ C_0:=\frac{\sqrt{2} (\beta+1)}{
\sqrt{ \min\{  
h(\beta,\eta \lambda_{\min}(H_0)) , h(\beta,\eta \lambda_{\max}(H_0)) \} } }$
& the constant used in Theorem~\ref{thm:akv} 
\\ \hline \hline
\end{tabular}
\caption{Summary of the notations for proving Theorem~\ref{thm:LinearNet}.
We will simply use $\kappa$ to represent the condition number of the matrix $H_0$ in the analysis since we have $\hat{\kappa} \leq \kappa$. 
} \label{table:2}
\end{table*}

\begin{lemma} \label{lem:DL-A}
 [Lemma 4.2 in \citep{HXP20}] 
 By the orthogonal initialization,
we have
\[
\begin{aligned}
& \lambda_{\min}(H_0)  \geq L \sigma^2_{\min}(X) / d_y, \quad
   \lambda_{\max}(H_0)  \leq L \sigma^2_{\max}(X) / d_y .
\\ &   \sigma_{\max}( \W{j:i}_0 ) = m^{ \frac{j-i+1}{2} }, \quad
\sigma_{\min}( \W{j:i}_0 ) = m^{ \frac{j-i+1}{2} }
\end{aligned}
\]
Furthermore, with probability $1-\delta$,
\[
\begin{aligned}
 \ell(W_0)  \leq B_0^2 = O\left( 1 + \frac{\log(\bar{r}/\delta)}{d_y} + \| W_* \|^2_2  \right),
\end{aligned}
\]
for some constant $B_0 > 0$.
\end{lemma}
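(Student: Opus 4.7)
\textbf{Proof plan for Lemma~\ref{lem:DL-A}.} The plan is to proceed in three steps: first establish clean singular-value identities for the intermediate products $\W{j:i}_0$ by telescoping, then deduce the spectral bounds on $H_0$ as an algebraic consequence, and finally bound the initial loss via a first-moment computation plus concentration on the Stiefel manifold, which will be the main technical hurdle.

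\textbf{Step 1 (singular values of the layer products).} The orthogonal initialization guarantees $(\W{1}_0)^\top \W{1}_0 = m I_d$, $\W{L}_0 (\W{L}_0)^\top = m I_{d_y}$, and $(\W{l}_0)^\top \W{l}_0 = \W{l}_0 (\W{l}_0)^\top = m I_m$ for every intermediate layer $2 \leq l \leq L-1$. I would prove by induction on $j-i$ that
\[
(\W{j:i}_0)^\top \W{j:i}_0 = m^{j-i+1} I_{d_{i-1}} \quad \text{or} \quad \W{j:i}_0 (\W{j:i}_0)^\top = m^{j-i+1} I_{d_j},
\]
whichever uses the smaller dimension, since the telescoping collapses each interior factor $(\W{l}_0)^\top \W{l}_0 = m I_m$ into a scalar. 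This immediately yields $\sigma_{\max}(\W{j:i}_0) = \sigma_{\min}(\W{j:i}_0) = m^{(j-i+1)/2}$ on the nonzero singular values.

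\textbf{Step 2 (spectrum of $H_0$).} Substituting Step~1 into the definition of $H_0$: for every $l$, $(\W{l-1:1}_0 X)^\top (\W{l-1:1}_0 X) = m^{l-1}\, X^\top X$ (with the convention $\W{0:1}_0 = I$) and $\W{L:l+1}_0 (\W{L:l+1}_0)^\top = m^{L-l} I_{d_y}$ (with $\W{L:L+1}_0 = I$). Each summand therefore equals $\frac{1}{d_y}(X^\top X) \otimes I_{d_y}$, and summing over $l \in [L]$ gives
\[
H_0 = \frac{L}{d_y}\, (X^\top X) \otimes I_{d_y}.
\]
Its eigenvalues are $\{L\,\lambda_i(X^\top X)/d_y\}_i$, each with multiplicity $d_y$, from which the stated bounds on $\lambda_{\min}(H_0)$ and $\lambda_{\max}(H_0)$ follow (in fact as equalities).

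\textbf{Step 3 (initial loss bound --- the main obstacle).} Normalize $\W{L}_0 = \sqrt{m}\, R$ with $RR^\top = I_{d_y}$ and $\W{L-1:1}_0 = \sqrt{m^{L-1}}\, Q$ with $Q^\top Q = I_d$ (the latter from Step~1), where $R$ is Haar on the Stiefel manifold and independent of $Q$. Then
\[
U_0 = \frac{1}{\sqrt{m^{L-1} d_y}} \W{L:1}_0 X = \sqrt{m/d_y}\; R Q X.
\]
Conditioning on $Q$ and invoking right-rotational invariance of Haar, $R Q$ has the same distribution as the first $d$ columns of a Haar-random $d_y \times m$ matrix with orthonormal rows. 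A direct moment calculation gives $\mathbb{E}[Q^\top R^\top R Q \mid Q] = (d_y/m) I_d$, hence $\mathbb{E}\|U_0\|_F^2 = \|X\|_F^2$. The hard part is the concentration: I would apply Lévy-type concentration on the orthogonal group to the $\sigma_{\max}(X)$-Lipschitz map $R \mapsto \|RQX\|_F$, obtaining
\[
\|U_0\|_F \leq \|X\|_F + O\!\left(\sigma_{\max}(X)\sqrt{\log(\bar{r}/\delta)/d_y}\right)
\]
with probability at least $1-\delta$. Combined with the deterministic bound $\|Y\|_F = \|W_* X\|_F \leq \|W_*\|_2\, \|X\|_F$ and the triangle inequality, squaring yields $\ell(W_0) = \tfrac{1}{2}\|U_0 - Y\|_F^2 \leq B_0^2 = O(1 + \|W_*\|_2^2 + \log(\bar{r}/\delta)/d_y)$ once the $\|X\|_F^2$ and $\sigma_{\max}^2(X)$ factors are folded into the $O(\cdot)$ via the standard normalization used in \citet{HXP20,DH19}. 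The subtle point is that the product of two independent Haar orthogonals is not itself Haar, so the conditioning argument --- using invariance of $R$ to absorb the arbitrary but fixed orthonormal frame $Q$ --- is essential for reducing the problem to a single concentration event on one orthogonal factor.
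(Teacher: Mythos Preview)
The paper does not supply its own proof of this lemma; it is quoted verbatim as Lemma~4.2 of \citet{HXP20} and used as a black box. Your plan is correct and is essentially the argument one would reconstruct from that source: Steps~1 and~2 are exact (indeed you obtain the sharper identity $H_0 = \tfrac{L}{d_y}(X^\top X)\otimes I_{d_y}$, so the eigenvalue bounds are equalities), and Step~3 is the right outline. Two small remarks on Step~3: the factor $\bar r$ inside the logarithm in \citet{HXP20} arises from a union bound over the $\bar r$ columns of $X$ rather than from a single Lipschitz concentration event on $R\mapsto\|RQX\|_F$, and the absence of an explicit $\|X\|_F$ dependence in the final $B_0^2$ is, as you suspect, a normalization convention that is absorbed into the big-$O$ constant in the cited reference.
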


We remark that Lemma~\ref{lem:DL-A} implies that the condition number of $H_0$ satisfies
\begin{equation}
\hat{\kappa} := \frac{\lambda_{\max}(H_0) }{\lambda_{\min}(H_0)} 
\leq \frac{\sigma^2_{\max}(X) }{ \sigma^2_{\min}(X)} = \kappa.
\end{equation}
\begin{lemma} \label{lem:linear-deviate1}
Following the setting as Theorem~\ref{thm:LinearNet},
denote $\theta := \beta_* + \frac{1}{4} \sqrt{ \eta\lambda } = 1 - \frac{1}{4} \sqrt{ \eta\lambda }$.
If we have (I) for any $s \leq t$, the residual dynamics satisfies
$ \|
\begin{bmatrix}
\xi_{s} \\
\xi_{s-1} 
\end{bmatrix}
\| \leq \theta^{s} 
\cdot \nu C_0
\|
 \begin{bmatrix}
\xi_{0} \\
\xi_{-1} 
\end{bmatrix}
\|, $ for some constant $\nu > 0$,
and (II) for all $l \in [L]$ and for any $s \leq t$,
$\| \W{l}_s - \W{l}_0 \|_F \leq R^{L\text{-linear}} := 
\frac{64 \| X \|_2 \sqrt{d_y}}{ L \sigma_{\min}^2(X) } \nu C_0 B_0$,
then 
\[ 
\| \phi_t \| 
\leq 
\frac{ 43  \sqrt{d_y} }{\sqrt{m} \| X \|_2}   
 \theta^{2t} \nu^2 C_0^2
\left( 
 \frac{ \| \xi_0 \|  }{1 -\theta}  \right)^2, \quad
 \| \psi_t \| 
\leq 
\frac{ 43  \sqrt{d_y} }{\sqrt{m} \| X \|_2}   
 \theta^{2(t-1)} \nu^2 C_0^2
\left( 
 \frac{ \| \xi_0 \|  }{1 -\theta}  \right)^2,
 \quad
 \| \iota_t \|  \leq  \frac{ \eta \lambda }{80} \theta^{t} \nu C_0 
  \| \begin{bmatrix} \xi_0 \\ \xi_{-1} \end{bmatrix} \| .
\]
Consequently,  $\varphi_t$ in Lemma~\ref{lem:DL-residual} satisfies
\[
\| \varphi_t \| \leq
\frac{ 1920 \sqrt{d_y} }{\sqrt{m} \| X \|_2} \frac{1}{\eta \lambda}   
 \theta^{2t}  \nu^2 C_0^2  \| \begin{bmatrix} \xi_0 \\ \xi_{-1} \end{bmatrix} \|^2
 +
\frac{\eta \lambda}{80} \theta^{t} \nu C_0 
  \| \begin{bmatrix} \xi_0 \\ \xi_{-1} \end{bmatrix} \| .
\]
\end{lemma}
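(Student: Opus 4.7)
The plan is to bound the three components $\iota_t,\phi_t,\psi_t$ defined in Lemma~\ref{lem:DL-residual} separately, reducing each to uniform spectral estimates on the partial weight products $\W{j:i}_s$ and on the momentum iterates $M_{s,l}$. I would first combine Lemma~\ref{lem:DL-A} (which gives $\sigma_{\max}(\W{j:i}_0)=m^{(j-i+1)/2}$ under orthogonal initialization) with hypothesis (II) and a short telescoping argument to obtain a uniform bound $\|\W{j:i}_s\|_2\le 2\,m^{(j-i+1)/2}$ for every layer range $i\le j$ and every $s\le t$, leveraging the largeness of $m$ from Theorem~\ref{thm:LinearNet} to absorb the telescoping error. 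These bounds immediately yield $\|\partial\ell(W_s)/\partial \W{l}_s\|_F\le c\,m^{(L-1)/2}\|X\|_2\|\xi_s\|/\sqrt{m^{L-1}d_y}$; then summing the momentum geometric series under $\beta=\beta_*^2\le\theta^2$ and invoking hypothesis (I) produces $\|M_{s,l}\|_F\le c'\,m^{(L-1)/2}\|X\|_2\,\nu C_0\,\theta^s\|\xi_0\|/[\sqrt{m^{L-1}d_y}(1-\theta)]$ uniformly in $l$ and $s\le t$.

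For $\iota_t=\eta(H_0-H_t)\xi_t$, I would exploit the explicit Kronecker-product form of $H_t$ in Lemma~\ref{lem:DL-residual}: each of the $L$ summands is a product of partial-Grams that is Lipschitz in the $\W{l}_t$, and the spectral bounds of the preliminary step give $\|H_0-H_t\|_2\le c''\,L\,\|X\|_2^2 R^{L\text{-linear}}/(\sqrt{m}\,d_y)$. Plugging in the prescribed $R^{L\text{-linear}}=64\|X\|_2\sqrt{d_y}\,\nu C_0 B_0/(L\sigma_{\min}^2(X))$ together with $\eta=d_y/(L\sigma_{\max}^2(X))$ makes $\eta\|H_0-H_t\|_2\le \eta\lambda/80$, and hypothesis (I) then gives the claimed bound on $\|\iota_t\|$.

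The crucial observation for $\phi_t$ and $\psi_t$ is that both are \emph{second- and higher-order remainders}. The matrix $\Phi_t$ is exactly what remains after subtracting the zeroth- and first-order terms from the expansion of $\Pi_l(\W{l}_t-\eta M_{t,l})$, so $\Phi_t=\sum_{S\subseteq[L],\,|S|\ge 2}(-\eta)^{|S|}$ times the ordered product with $M_{t,l}$ placed at positions $l\in S$ and $\W{l}_t$ elsewhere. For $\psi_t$, the identity $L\W{L:1}_t=\sum_l\W{L:l+1}_t\W{l}_t\W{l-1:1}_t$ rewrites the bracket inside $\v(\cdot)$ as $\beta\bigl[\sum_l\W{L:l+1}_t(\W{l}_t-\W{l}_{t-1})\W{l-1:1}_t-(\W{L:1}_t-\W{L:1}_{t-1})\bigr]$, which is again a pure second-order remainder of the product difference in $\W{l}_t-\W{l}_{t-1}=-\eta M_{t-1,l}$. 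Applying the preliminary spectral and momentum bounds (at times $t$ and $t-1$ respectively), estimating $\|\Phi_t X\|_F\le\|\Phi_t\|_F\|X\|_2$, and dividing by $\sqrt{m^{L-1}d_y}$ should produce matching bounds of the form $\tfrac{c_0\sqrt{d_y}}{\sqrt{m}\|X\|_2}\,\theta^{2t}\,\nu^2 C_0^2(\|\xi_0\|/(1-\theta))^2$ for both $\|\phi_t\|$ and $\|\psi_t\|$. The combined $\|\varphi_t\|$ estimate then follows by the triangle inequality, after using $\theta^{2(t-1)}\le 2\theta^{2t}$ and $1/(1-\theta)^2=16/(\eta\lambda)$ to match the stated coefficient $1920/(\eta\lambda)$.

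\textbf{Main obstacle.} The hardest point will be ensuring the final constants are \emph{independent of $L$}: a naive expansion of $\Pi_l(\W{l}_t-\eta M_{t,l})$ has $2^L$ terms, and the sum over $|S|\ge 2$ is exponentially large in $L$. The width requirement $m\ge C\kappa^5(\cdots)$ is calibrated precisely so that the per-term factor $(\eta\|M_{t,l}\|_F/\sqrt{m})^{|S|-1}$ dominates this combinatorial blow-up, collapsing the remainder to a clean $O(\eta^2\|M\|^2)$ estimate uniformly in $L$. Threading this cancellation through both $\phi_t$ and $\psi_t$ while keeping the precise dependence on $\|X\|_2$, $d_y$, $m$, and $\nu C_0$ is where the bulk of the technical work lies; the rest of the proof is a fairly mechanical application of the spectral preliminaries.
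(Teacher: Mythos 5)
Your proposal follows essentially the same route as the paper's proof: uniform spectral bounds on the partial products $\W{j:i}_s$ (Lemma~\ref{lem:linear-eigen}) feed a geometric-series bound on $\|M_{s,l}\|_F$ via $\beta=\beta_*^2\le\theta^2$; $\phi_t$ and $\psi_t$ are then treated as second-and-higher-order remainders whose binomial sums are collapsed by making the per-term factor at most $1/2$ through the width condition (your identity rewriting $\psi_t$ as $\beta\bigl[\sum_l\W{L:l+1}_t(\W{l}_t-\W{l}_{t-1})\W{l-1:1}_t-(\W{L:1}_t-\W{L:1}_{t-1})\bigr]$ is just a repackaging of the paper's cancellation of the zeroth- and first-order coefficients $B_0=B_1=0$); and $\iota_t$ is bounded by perturbation estimates on $\W{L:l+1}_t(\W{L:l+1}_t)^\top$ and $(\W{l-1:1}_tX)^\top\W{l-1:1}_tX$, where your intermediate bound on $\|H_0-H_t\|_2$ misses a factor of $L$ but this is harmless since $LR^{L\text{-linear}}$ is $L$-independent and the stated width still suffices. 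So the plan is sound and matches the paper's argument in all essentials.
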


\begin{proof}
By Lemma~\ref{lem:DL-residual},
$\varphi_t = \phi_t + \psi_t  + \iota_t \in \reals^{d_y n}$,
we have
\begin{equation}
\begin{aligned}
 \phi_t & := \frac{1}{\sqrt{m^{L-1} d_y}} \v( \Phi_t X)
\text{ , with } 
\Phi_t   := \Pi_l \left( \W{l}_t - \eta M_{t,l} \right)
- \W{L:1}_t +  \eta \sum_{l=1}^L \W{L:l+1}_t M_{t,l} \W{l-1:1}_t,
\end{aligned}
\end{equation}
and
\begin{equation}
\begin{aligned}
& \psi_t:= \frac{1}{\sqrt{m^{L-1} d_y}} 
\v\left(    (L-1) \beta \W{L:1}_{t}  X + \beta  \W{L:1}_{t-1} X
- \beta \sum_{l=1}^L \W{L:l+1}_t \W{l}_{t-1} \W{l-1:1}_{t} X \right).
\end{aligned}
\end{equation}
and
\begin{equation}
\begin{aligned}
& \iota_t:= \eta (H_0 - H_t) \xi_t.
\end{aligned}
\end{equation}

So if we can bound $\| \phi_t \|$, $\| \psi_t \|$, and $\| \iota_t\|$ respectively, then
we can bound $\| \varphi_t \| $
 by the triangle inequality.
\begin{equation} \label{eq:var}
\| \varphi_t \| \leq \| \phi_t \| + \|\psi_t \| + \| \iota_t \| .
\end{equation}

Let us first upper-bound $\| \phi_t \|$.
Note that $\Phi_t$ is the sum of all the high-order (of $\eta$'s) term in the product,
\begin{equation}
\W{L:1}_{t+1} = \Pi_l \left( \W{l}_t - \eta M_{t,l} \right)
= \W{L:1}_t - \eta \sum_{l=1}^L \W{L:l+1}_t M_{t,l} \W{l-1:1} + \Phi_t.
\end{equation}
By induction, we can bound the gradient norm of each layer as
\begin{equation} \label{eq:gnorm-linear}
\begin{split}
 \| \frac{ \partial \ell(\W{L:1}_s)}{ \partial \W{l}_s } \|_F
 &
\leq \frac{1}{\sqrt{m^{L-1} d_y} } \| \W{L:l+1}_s \|_2 \| U_s - Y \|_F \| \W{l-1:1}_s \|_2 \| X \|_2
\\ &
\leq \frac{1}{\sqrt{m^{L-1} d_y} } 1.1 m^{\frac{L-l}{2}}  
\theta^s \nu C_0 2 \sqrt{2} \| U_0 - Y \|_F
 1.1 m^{\frac{l-1}{2}} \| X \|_2
\\ &
\leq
\frac{4 \| X \|_2}{\sqrt{d_y}} \theta^s \nu C_0 \| U_0 - Y \|_F,
\end{split}
\end{equation}
where the second inequality we use Lemma~\ref{lem:linear-eigen} and that 
$\| \begin{bmatrix} \xi_s \\ \xi_{s-1} \end{bmatrix} \| \leq \theta^s \nu C_0 \| \begin{bmatrix} \xi_0 \\ \xi_{-1} \end{bmatrix} \|$ and $\| \xi_s \| = \| U_s - Y \|_F$.

So the momentum term of each layer can be bounded as
\begin{equation} \label{eq:tmpM2}
\begin{split}
\| M_{t,l} \|_F & = \| \sum_{s=0}^t \beta^{t-s}  \frac{ \partial \ell(\W{L:1}_s)}{ \partial \W{l}_s }  \|_F
\leq \sum_{s=0}^t \beta^{t-s} \| \frac{ \partial \ell(\W{L:1}_s)}{ \partial \W{l}_s }  \|_F
\\ & \leq 
\frac{4 \| X \|_2}{\sqrt{d_y}} 
\sum_{s=0}^t \beta^{t-s} \theta^s \nu C_0  \| U_0 - Y \|_F.
\\ & \leq 
\frac{4 \| X \|_2}{\sqrt{d_y}} 
\sum_{s=0}^t \theta^{2(t-s)} \theta^s \nu C_0  \| U_0 - Y \|_F.
\\ &  \leq
\frac{4 \| X \|_2}{\sqrt{d_y}}  \frac{ \theta^t }{1 -\theta} \nu C_0 \| U_0 - Y \|_F, 
\end{split}
\end{equation}
where in the second to last inequality we use $\beta= \beta_*^2 \leq \theta^2$.

Combining all the pieces together, we can bound 
$\| \frac{1}{\sqrt{m^{L-1} d_y} } \Phi_t X \|_F$ as
\begin{equation}  
\begin{split}
& \| \frac{1}{\sqrt{m^{L-1} d_y} } \Phi_t X \|_F
\\ &
\overset{(a)}{\leq} \frac{1}{ \sqrt{ m^{L-1} d_y}  } \sum_{j=2}^L {L \choose j} 
\left( \eta 
\frac{4 \| X \|_2}{\sqrt{d_y}}  \frac{ \theta^{t} }{1 -\theta}  \nu C_0    \| U_0 - Y \|_F  \right)^j (1.1)^{j+1}  m^{\frac{L-j}{2}} \| X \|_2
\\ &
\overset{(b)}{\leq} 1.1 \frac{1}{ \sqrt{ m^{L-1} d_y}  } \sum_{j=2}^L L^j 
\left( \eta
\frac{4.4 \| X \|_2}{\sqrt{d_y}}  \frac{ \theta^{t} }{1 -\theta} \nu C_0 \| U_0 - Y \|_F\right)^j  m^{\frac{L-j}{2}} \| X \|_2
\\ &
\leq 1.1  \sqrt{ \frac{ m}{ d_y}  } \| X \|_2 \sum_{j=2}^L  
\left( \eta
\frac{ 4.4  L \| X \|_2}{\sqrt{m d_y}}  \frac{ \theta^{t} }{1 -\theta} \nu C_0   \| U_0 - Y \|_F\right)^j,   
\end{split}
\end{equation}
where (a) uses (\ref{eq:tmpM2}) and Lemma~\ref{lem:linear-eigen}
for bounding a $j \geq 2$ higher-order terms like
$\frac{1}{ \sqrt{ m^{L-1} d_y} }\beta \W{L:k_j+1}_{t} \cdot (-\eta M_{t,k_j}) \W{k_j-1:k_{j-1}+1}_{t} 
\cdot (-\eta M_{t,k_{j-1}}) \cdots  (-\eta M_{t,k_{1}}) \cdot \W{k_1-1:1}_{t} $, where $1 \leq k_1 < \cdots < k_j \leq L$
 and (b) uses that ${L \choose j  } \leq \frac{L^j}{j!} $

To proceed, 
let us bound 
$\eta
\frac{ 4.4 L \| X \|_2}{\sqrt{m d_y}}  \frac{ \theta^{t} }{1 -\theta} \nu C_0  \| U_0 - Y \|_F$ in the sum above. We have
\begin{equation}
\begin{aligned}
\eta \frac{ 4.4  L \| X \|_2}{\sqrt{m d_y}}  \frac{ \theta^{t} }{1 -\theta} \nu C_0 \| U_0 - Y \|_F
 &
\leq 4.4
\sqrt{ \frac{  d_y}{ m }  } \frac{1}{ \| X \|_2 }  \frac{ \theta^{t} }{1 -\theta} \nu C_0  \| U_0 - Y \|_F
\\ & \leq 0.5,  
\end{aligned}
\end{equation}
where the last inequality uses that $\tilde{C}_1 \frac{d_y B_0^2 C_0^2  }{ \| X \|_2^2 } \frac{1}{ \left( 1 - \theta \right)^2}  \leq \tilde{C}_1
\frac{d_y B_0^2 C_0^2}{ \| X \|_2^2 } \frac{1}{ \eta \lambda}
\leq \tilde{C}_2
\frac{d_y B_0^2 \kappa^{2}  }{  \| X \|_2^2 } 
   \leq m $, for some sufficiently large constant $\tilde{C}_1, \tilde{C}_2 >0$.
Combining the above results, we have 
\begin{equation} \label{eq:phi}
\begin{split}
\| \phi_t \| &  = \| \frac{1}{ \sqrt{ m^{L-1} d_y}  } \Phi_t X \|_F
\\ & \leq 1.1  \sqrt{ \frac{ m}{ d_y}  } \| X \|_2 
\left( \eta
\frac{4.4  L \| X \|_2}{\sqrt{m d_y}}  \frac{ \theta^{t} }{1 -\theta} \nu C_0  \| U_0 - Y \|_F \right)^2  
\sum_{j=2}^{L-2} 
\left(  0.5 \right)^{j-2}   
\\ & \leq 2.2  \sqrt{ \frac{ m}{ d_y}  } \| X \|_2
\left( \eta
\frac{4.4  L \| X \|_2}{\sqrt{m d_y}}  \frac{ \theta^{t} }{1 -\theta} \nu C_0    \| U_0 - Y \|_F \right)^2   
\\ & \leq  \frac{ 43  \sqrt{d_y} }{ \sqrt{m} \| X \|_2}   
\left( 
 \frac{ \theta^{t} }{1 -\theta} \nu C_0  \| U_0 - Y \|_F \right)^2.   
\end{split}
\end{equation}

Now let us switch to upper-bound $\| \psi_t \|$.
It is equivalent to upper-bounding the Frobenius norm of
$\frac{1}{ \sqrt{ m^{L-1} d_y} }\beta (L-1)  \W{L:1}_{t} X + \frac{1}{ \sqrt{ m^{L-1} d_y} }\beta  \W{L:1}_{t-1} X
- \frac{1}{ \sqrt{ m^{L-1} d_y} } \beta \sum_{l=1}^L \W{L:l+1}_t \W{l}_{t-1} \W{l-1:1}_{t} X$,
which can be rewritten as
\begin{equation} \label{eq:import}
\begin{aligned}
& \underbrace{\frac{1}{ \sqrt{ m^{L-1} d_y} }   \beta (L-1)  \cdot \Pi_{l=1}^L \left( \W{l}_{t-1} - \eta M_{t-1,l} \right) X }_{\text{first term} } + \underbrace{ \frac{1}{ \sqrt{ m^{L-1} d_y} }\beta  \W{L:1}_{t-1} X }_{\text{second term}}
\\ & 
\underbrace{
- \frac{1}{ \sqrt{ m^{L-1} d_y} } \beta \sum_{l=1}^L \Pi_{i=l+1}^L \left( \W{i}_{t-1} - \eta M_{t-1,i} \right)  \W{l}_{t-1} \Pi_{j=1}^{l-1} \left( \W{j}_{t-1} - \eta M_{t-1,j} \right)  X }_{\text{third term}}.
\end{aligned}
\end{equation}
The above can be written as $B_0 + \eta B_1 + \eta^2 B_2 + \dots + \eta^L B_L$ for some matrices $B_0,\dots, B_L \in \reals^{d_y \times n}$.
Specifically, we have
\begin{equation}
\begin{split}
B_0 & = \underbrace{ \frac{1}{ \sqrt{ m^{L-1} d_y} }(L-1) \beta \W{L:1}_{t-1} X }_{ \text{due to the first term} }+ 
 \underbrace{  \frac{1}{ \sqrt{ m^{L-1} d_y} }\beta  \W{L:1}_{t-1} X }_{ \text{due to the second term} }
 \underbrace{
- \frac{1}{ \sqrt{ m^{L-1} d_y} }\beta L \W{L:1}_{t-1} X }_{ \text{due to the third term} } = 0
\\
B_1 & = \underbrace{ - \frac{1}{ \sqrt{ m^{L-1} d_y} }(L-1) \beta \sum_{l=1}^L \W{L:l+1}_{t-1}  M_{t-1,l} \W{l-1:1}_{t-1} }_{ \text{due to the first term} }
+ \underbrace{ \frac{1}{ \sqrt{ m^{L-1} d_y} }\beta \sum_{l=1}^L \sum_{k \neq l}
\W{L:k+1}_{t-1}  M_{t-1,k} \W{k-1:1}_{t-1} }_{ \text{due to the third term} }
= 0.
\end{split}
\end{equation}
So what remains on (\ref{eq:import}) are all the higher-order terms (in terms of the power of $\eta$), i.e. those with $\eta M_{t-1,i}$ and  $\eta M_{t-1,j}$, $\forall i \neq j$ or higher.

To continue, observe that for a fixed $(i,j)$, $i < j$, 
the second-order term that involves $\eta M_{t-1,i}$ and $\eta M_{t-1,j}$ on 
(\ref{eq:import}) is with coefficient
$\frac{1}{ \sqrt{ m^{L-1} d_y} }\beta$,
because the first term on (\ref{eq:import})
contributes to
$\frac{1}{ \sqrt{ m^{L-1} d_y} }(L-1) \beta $, while the third term on (\ref{eq:import})
contributes to
$- \frac{1}{ \sqrt{ m^{L-1} d_y} }(L-2) \beta$. 
Furthermore,
for a fixed $(i,j,k)$, $i < j < k$, 
the third-order term that involves $\eta M_{t-1,i}$, $\eta M_{t-1,j}$, and 
$\eta M_{t-1,k}$
on (\ref{eq:import}) is with coefficient $-2 \frac{1}{ \sqrt{ m^{L-1} d_y} }\beta$,
as the first term on (\ref{eq:import})
contributes to
$- \frac{1}{ \sqrt{ m^{L-1} d_y} }(L-1) \beta $, while the third term on (\ref{eq:import})
contributes to
$\frac{1}{ \sqrt{ m^{L-1} d_y} }(L-3) \beta$. 
Similarly, for a $p$-order term $\eta \underbrace{ M_{t-1,*}, \cdots, \eta M_{t-1,**} }_{ \text{p terms} }$,
the coefficient is 
$(p-1) \frac{1}{ \sqrt{ m^{L-1} d_y} }\beta (-1)^{p}$.

By induction (see (\ref{eq:tmpM2})), we can bound the norm of the momentum
at layer $l$ as
\begin{equation} \label{eq:tmpM}
\| M_{t-1,l} \|_F \leq \frac{4 \| X \|_2}{\sqrt{d_y}}  \frac{ \theta^{t-1} }{1 -\theta} \nu C_0 \| U_0 - Y \|_F .
\end{equation}
Combining all the pieces together, we have
\begin{equation} \label{eq:qqq1}
\begin{split}
& \| \frac{1}{ \sqrt{ m^{L-1} d_y} }\beta (L-1)  \W{L:1}_{t} X + \frac{1}{ \sqrt{ m^{L-1} d_y} }\beta  \W{L:1}_{t-1} X
- \frac{1}{ \sqrt{ m^{L-1} d_y} } \beta \sum_{l=1}^L \W{L:l+1}_t \W{l}_{t-1} \W{l-1:1}_{t} X \|_F
\\ &
\overset{(a)}{\leq} \frac{\beta}{ \sqrt{ m^{L-1} d_y}  } \sum_{j=2}^L \left(j-1\right) {L \choose j} 
\left( \eta 
\frac{4 \| X \|_2}{\sqrt{d_y}}  \frac{ \theta^{t-1} }{1 -\theta}  \nu C_0   \| U_0 - Y \|_F  \right)^j (1.1)^{j+1}  m^{\frac{L-j}{2}} \| X \|_2
\\ &
\overset{(b)}{\leq} 1.1 \frac{\beta}{ \sqrt{ m^{L-1} d_y}  } \sum_{j=2}^L L^j 
\left( \eta
\frac{4.4 \| X \|_2}{\sqrt{d_y}}  \frac{ \theta^{t-1} }{1 -\theta} \nu C_0    \| U_0 - Y \|_F\right)^j  m^{\frac{L-j}{2}} \| X \|_2
\\ &
\leq 1.1 \beta \sqrt{ \frac{ m}{ d_y}  } \| X \|_2 \sum_{j=2}^L  
\left( \eta
\frac{ 4.4  L \| X \|_2}{\sqrt{m d_y}}  \frac{ \theta^{t-1} }{1 -\theta} \nu C_0    \| U_0 - Y \|_F\right)^j,   
\end{split}
\end{equation}
where (a) uses (\ref{eq:tmpM}), the above analysis of the coefficients of the higher-order terms
 and Lemma~\ref{lem:linear-eigen}
for bounding a $j \geq 2$ higher-order terms like
$\frac{1}{ \sqrt{ m^{L-1} d_y} }\beta (j-1) (-1)^{j} \W{L:k_j+1}_{t-1} \cdot (-\eta M_{t-1,k_j}) \W{k_j-1:k_{j-1}+1}_{t-1} 
\cdot (-\eta M_{t-1,k_{j-1}}) \cdots  (-\eta M_{t-1,k_{1}}) \cdot \W{k_1-1:1}_{t-1} $, where $1 \leq k_1 < \cdots < k_j \leq L$
 and (b) uses that ${L \choose j  } \leq \frac{L^j}{j!} $

Let us bound 
$\eta
\frac{ 4.4  L \| X \|_2}{\sqrt{m d_y}}  \frac{ \theta^{t-1} }{1 -\theta} \nu C_0   \| U_0 - Y \|_F$ in the sum above. We have
\begin{equation} \label{eq:qqq2}
\begin{aligned}
\eta \frac{ 4.4 L \| X \|_2}{\sqrt{m d_y}}  \frac{ \theta^{t-1} }{1 -\theta} \nu C_0  \| U_0 - Y \|_F
 &
\leq 4.4
\sqrt{ \frac{  d_y}{ m }  } \frac{1}{ \| X \|_2 }  \frac{ \theta^{t-1} }{1 -\theta} \nu C_0  \| U_0 - Y \|_F
\\ & \leq 0.5,  
\end{aligned}
\end{equation}
where the last inequality uses that $\tilde{C}_1 \frac{d_y B_0^2 C_0^2  }{ \| X \|_2^2 } \frac{1}{ \left( 1 - \theta \right)^2}  \leq \tilde{C}_1
\frac{d_y B_0^2 C_0^2 }{ \| X \|_2^2 } \frac{1}{ \eta \lambda}
\leq \tilde{C}_2
\frac{d_y B_0^2 \kappa^{2}  }{  \| X \|_2^2 } 
   \leq m $, for some sufficiently large constant $\tilde{C}_1, \tilde{C}_2 >0$.
Combining the above results, i.e. (\ref{eq:qqq1}) and (\ref{eq:qqq2}), we have 
\begin{equation} \label{eq:psi}
\begin{split}
\| \psi_t \|& \leq \| \frac{1}{ \sqrt{ m^{L-1} d_y} }\beta (L-1)  \W{L:1}_{t} X + \frac{1}{ \sqrt{ m^{L-1} d_y} }\beta  \W{L:1}_{t-1} X
- \frac{1}{ \sqrt{ m^{L-1} d_y} } \beta \sum_{l=1}^L \W{L:l+1}_t \W{l}_{t-1} \W{l-1:1}_{t} X \|_F
\\ & \leq 1.1 \beta \sqrt{ \frac{ m}{ d_y}  } \| X \|_2 
\left( \eta
\frac{4.4  L \| X \|_2}{\sqrt{m d_y}}  \frac{ \theta^{t-1} }{1 -\theta} \nu C_0   \| U_0 - Y \|_F \right)^2  
\sum_{j=2}^{L-2} 
\left(  0.5 \right)^{j-2}   
\\ & \leq 2.2 \beta \sqrt{ \frac{ m}{ d_y}  } \| X \|_2
\left( \eta
\frac{4.4  L \| X \|_2}{\sqrt{m d_y}}  \frac{ \theta^{t-1} }{1 -\theta} \nu C_0   \| U_0 - Y \|_F \right)^2   
\\ & \leq  \frac{ 43  \sqrt{d_y} }{ \sqrt{m} \| X \|_2}   
\left( 
 \frac{ \theta^{t-1} }{1 -\theta} \nu C_0   \| U_0 - Y \|_F \right)^2,
\end{split}
\end{equation}
where the last inequality uses $\eta \leq \frac{d_y}{ L \| X \|_2^2}$.

Now let us switch to bound $\| \iota_t \|$.
We have
\begin{equation} 
\begin{aligned} \label{eq:jj1}
& \| \iota_t \| = \| \eta (H_t - H_0) \xi_t \|
\\ & =
\frac{\eta}{m^{L-1} d_y} 
\| \sum_{l=1}^L \W{L:l+1}_t (\W{L:l+1}_t)^\top ( U_t - Y) (\W{l-1:1}_t X)^\top \W{l-1:1}_t X
\\ & \qquad \qquad \qquad \qquad -
\sum_{l=1}^L \W{L:l+1}_0 (\W{L:l+1}_0)^\top ( U_t - Y) (\W{l-1:1}_0 X)^\top \W{l-1:1}_0 X\|_F
\\ & \leq 
\frac{\eta}{m^{L-1} d_y} 
\sum_{l=1}^L  
\| \W{L:l+1}_t (\W{L:l+1}_t)^\top ( U_t - Y) (\W{l-1:1}_t X)^\top \W{l-1:1}_t  X 
\\ & \qquad \qquad \qquad \qquad -
\W{L:l+1}_0  (\W{L:l+1}_0 )^\top ( U_t - Y) (\W{l-1:1}_0 X)^\top \W{l-1:1}_0 X\|_F
\\ & \leq 
\frac{\eta}{m^{L-1} d_y} 
\sum_{l=1}^L  \big( 
\underbrace{ 
\|\left( \W{L:l+1}_t (\W{L:l+1}_t)^\top - \W{L:l+1}_0 (\W{L:l+1}_0)^\top \right)  ( U_t - Y) (\W{l-1:1}_t X)^\top \W{l-1:1}_t  X \|_F }_{\text{ first term} }
\\ & \qquad \qquad \qquad \qquad + 
\underbrace{ 
\| \W{L:l+1}_0 (\W{L:l+1}_0)^\top  ( U_t - Y) \left( \W{l-1:1}_t X)^\top \W{l-1:1}_t X  - (\W{l-1:1}_0 X)^\top \W{l-1:1}_0 X   \right)  \|_F \big)
}_{\text{ second term} }.
\end{aligned}
\end{equation}

Now let us bound the first term. We have
\begin{equation} \label{eq:j0}
\begin{aligned}
&\underbrace{ \|\left( \W{L:l+1}_t (\W{L:l+1}_t)^\top - \W{L:l+1}_0 (\W{L:l+1}_0)^\top \right)  ( U_t - Y) (\W{l-1:1}_t X)^\top \W{l-1:1}_t X  \|_F }_{\text{ first term} }
\\ & \leq 
\| \W{L:l+1}_t (\W{L:l+1}_t)^\top - \W{L:l+1}_0 (\W{L:l+1}_0)^\top \|_2
\| U_t - Y \|_F \|  (\W{l-1:1}_t X)^\top \W{l-1:1}_t X  \|_2.
\end{aligned}
\end{equation}
For $\|  (\W{l-1:1}_t X)^\top \W{l-1:1}_t X  \|_2$, by using 
Lemma~\ref{lem:linear-deviate2} and Lemma~\ref{lem:linear-eigen},
we have
\begin{equation} \label{eq:j00}
\| (\W{l-1:1}_t X)^\top \W{l-1:1}_t X \|_2 
\leq \left( \sigma_{\max}( \W{l-1:1}_t X) \right)^2 \leq
\left( 1.1 m^{\frac{l-1}{2}} \sigma_{\max}(X) \right)^2.
\end{equation}

For $\| \W{L:l+1}_t (\W{L:l+1}_t)^\top - \W{L:l+1}_0 (\W{L:l+1}_0)^\top \|_2
$, denote $\W{L:l+1}_t = \W{L:l+1}_0 + \Delta^{(L:l+1)}_t$, we have
\begin{equation} \label{eq:j1}
\begin{split}
& \| \W{L:l+1}_t (\W{L:l+1}_t)^\top - \W{L:l+1}_0 (\W{L:l+1}_0)^\top \|_2
\\ & \leq \| \Delta_t^{(L:l+1)}  (\W{L:l+1}_t)^\top  +  \W{L:l+1}_t (\Delta_t^{(L:l+1)})^\top + \Delta_t^{(L:l+1)} ( \Delta_t^{(L:l+1)})^\top \|_2
\\ & \leq 2 \| \Delta_t^{(L:l+1)} \|_2 \cdot \sigma_{\max} (\W{L:l+1}_t)  +
\| \Delta_t^{(L:l+1)} \|_2^2 
\\ & \leq 2 \| \Delta_t^{(L:l+1)} \|_2 \cdot 
\left( 1.1 m^{\frac{L-l}{2}}\right)  +
\| \Delta_t^{(L:l+1)} \|_2^2.
\end{split}
\end{equation}

Therefore, we have to bound $\| \Delta_t^{(L:l+1)} \|_2$.
We have for any $1\leq i \leq j \leq L$.
\begin{equation} \label{eq:tw}
\W{j:i}_t = \left( \W{j}_0  + \Delta_j \right)
 \cdots \left( \W{i}_0 + \Delta_i  \right),
\end{equation}
where $\| \Delta_i \|_2 \leq \|\W{i}_t - \W{i}_0 \|_F \leq D:
= \frac{64 \| X \|_2 \sqrt{d_y}}{ L \sigma_{\min}^2(X) } \nu  C_0 B_0$
by Lemma~\ref{lem:linear-deviate2}.
The product (\ref{eq:tw}) above minus $\W{j:i}_0$ can be written as a finite sum of some terms of the form
\begin{equation}
\W{j:k_l+1}_0 \Delta_{k_l} \W{k_l-1: k_{l-1}+1}_0 \Delta_{k_{l-1}} \cdots \Delta_{k_1} \W{k_1-1:i}_0,
\end{equation}
where $i \leq k_1 < \cdots < k_l \leq j$. Recall that 
$\| \W{j':i'}_0 \|_2 = m^{\frac{j'-i'+1}{2} }$ by Lemma~\ref{lem:DL-A}.
Thus, we can bound 
\begin{equation} \label{eq:delta}
\begin{split}
\| \Delta_t^{(j:i)} \|_2 \leq
\| \W{j:i}_t - \W{j:i}_0 \|_F & \leq 
\sum_{l=1}^{j-i+1} { j-i+1 \choose l } (D)^l m^{\frac{j-i+1-l}{2}}
= (\sqrt{m} + D)^{j-i+1} - (\sqrt{m})^{j-i+1}
\\ & = (\sqrt{m})^{j-i+1} \left(  (1+D/\sqrt{m})^{j-i+1} -1  \right)
\leq (\sqrt{m})^{j-i+1} \left(  (1+D/\sqrt{m})^{L} -1  \right)
\\ & 
\overset{(a)}{=} \left(  (1+\frac{1}{\sqrt{C'}L\kappa})^{L} -1  \right)( \sqrt{m} )^{j-i+1} \overset{(b)}{\leq} \left(\exp\left(\frac{1}{\sqrt{C'}\kappa}\right)-1\right)( \sqrt{m} )^{j-i+1} 
\\ & \overset{(c)}{\leq} \left(1 + (e-1)\frac{1}{\sqrt{C'}\kappa} - 1\right)( \sqrt{m} )^{j-i+1}  \overset{(d)}{\leq} 
 \frac{1}{480 \kappa } ( \sqrt{m} )^{j-i+1} ,
\end{split}
\end{equation}
where (a) 
uses $\frac{D}{\sqrt{m}} \leq \frac{1}{\sqrt{C'}L\kappa}$, 
for some constant $C'>0$, since $C' \frac{d_y C_0^2 B_0^2 \kappa^4}{ \| X \|^2_2  } \leq C \frac{d_y  B_0^2 \kappa^5}{ \| X \|^2_2  } \leq m$, (b) follows by the inequality $(1+x/n)^n\leq e^x, \forall x \geq 0, n >0$, (c) from Bernoulli's inequality $e^r\leq 1 + (e-1)r,\forall 0 \leq r\leq 1$, and (d) by choosing any sufficiently larger $C'$. 

From (\ref{eq:delta}), we have 
$\| \Delta_t^{(L:l+1)} \|_2 \leq  \frac{1}{480 \kappa } ( \sqrt{m} )^{L-l}.$
Combining this with (\ref{eq:j0}), (\ref{eq:j00}), and (\ref{eq:j1}), we have
\begin{equation} \label{eq:jj2}
\begin{split}
&\underbrace{ \|\left( \W{L:l+1}_t (\W{L:l+1}_t)^\top - \W{L:l+1}_0 (\W{L:l+1}_0)^\top \right)  ( U_t - Y) (\W{l-1:1}_t X)^\top \W{l-1:1}_t X  \|_F }_{\text{ first term} }
\\ & 
\leq \left( 2 \| \Delta_t^{(L:l+1)} \|_2 \cdot 
\left( 1.1 m^{\frac{L-l}{2}}\right)  +
\| \Delta_t^{(L:l+1)} \|_2^2 \right)
\left( 1.1 m^{\frac{l-1}{2}} \sigma_{\max}(X) \right)^2 \| U_t - Y \|_F
\\ & 
\leq \left( 2 \frac{1}{480 \kappa } ( \sqrt{m} )^{L-l} \cdot 
\left( 1.1 m^{\frac{L-l}{2}} \right)  +
\big( \frac{1}{480 \kappa } ( \sqrt{m} )^{L-l} \big)^2 \right)
\left( 1.1 m^{\frac{l-1}{2}} \sigma_{\max}(X) \right)^2 \| U_t - Y \|_F
\\ & 
\leq \frac{\sigma_{\min}^2(X)}{160} m^{L-1}  \| U_t - Y \|_F,
\end{split}
\end{equation}
where in the last inequality we use $\kappa := \frac{\sigma_{\max}^2(X)}{\sigma_{\min}^2(X) }$.

Now let us switch to bound the second term, we have
\begin{equation} \label{eq:j33}
\begin{split}
& \underbrace{ 
\| (\W{L:l+1}_0 (\W{L:l+1}_0)^\top  ( U_t - Y) \left( \W{l-1:1}_t X)^\top \W{l-1:1}_t X  - (\W{l-1:1}_0 X)^\top \W{l-1:1}_0 X   \right)  \|_F \big)
}_{\text{ second term} }
\\ & 
\leq 
\| (\W{L:l+1}_0 (\W{L:l+1}_0)^\top \|_2 \| U_t - Y \|_F 
\| (\W{l-1:1}_t X)^\top \W{l-1:1}_t X  - (\W{l-1:1}_0 X)^\top \W{l-1:1}_0 X   \|_2.
\end{split}
\end{equation}
For $\| \W{L:l+1}_0 (\W{L:l+1}_0)^\top \|_2 $, based on Lemma~\ref{lem:DL-A}, we have
\begin{equation} \label{eq:j30}
\| \W{L:l+1}_0 (\W{L:l+1}_0)^\top \|_2 \leq m^{L-l}.
\end{equation}
To bound 
$\| (\W{l-1:1}_t X)^\top \W{l-1:1}_t X  - (\W{l-1:1}_0 X)^\top \W{l-1:1}_0 X   \|_2 $, we proceed as follows.
Denote $\W{l-1:1}_t = \W{l-1:1}_0 + \Delta^{(l-1:1)}_t$, we have
\begin{equation} \label{eq:j3}
\begin{split}
& \| (\W{l-1:1}_t X)^\top \W{l-1:1}_t X  - (\W{l-1:1}_0 X)^\top \W{l-1:1}_0 X   \|_2 
\\ & \leq 2 \| (\Delta^{(l-1:1)}_t X)^\top \W{l-1:1}_t X \|_2 + \| \Delta^{(l-1:1)}_t X \|^2_2
\\ &
\leq \left( 2 \ \| \Delta^{(l-1:1)}_t \| \| \W{l-1:1}_t \|_2
+  \| \Delta^{(l-1:1)}_t \|^2_2
\right) \| X \|^2_2
\\ &
\leq \left(2  \frac{1}{480 \kappa} m^{\frac{l-1}{2}}  1.1 m^{\frac{l-1}{2}}
+  \left( \frac{1}{480 \kappa} m^{\frac{l-1}{2}} \right)^2
\right) \| X \|^2_2
\\ &
\leq \frac{\sigma_{\min}^2(X)}{160} m^{l-1}  ,
\end{split}
\end{equation}
where the second to last inequality uses (\ref{eq:delta}),
Lemma~\ref{lem:linear-deviate2}, and
Lemma~\ref{lem:linear-eigen}, while the last inequality uses
$\kappa := \frac{\sigma_{\max}^2(X)}{\sigma_{\min}^2(X) }$.
Combining (\ref{eq:j33}), (\ref{eq:j30}), (\ref{eq:j3}), we have
\begin{equation} \label{eq:jj3}
\begin{split}
& \underbrace{ 
\| (\W{L:l+1}_0 (\W{L:l+1}_0)^\top  ( U_t - Y) \left( \W{l-1:1}_t X)^\top \W{l-1:1}_t X  - (\W{l-1:1}_0 X)^\top \W{l-1:1}_0 X   \right)  \|_F \big)
}_{\text{ second term} }
\\ & \leq \frac{\sigma_{\min}^2(X)}{160} m^{L-1} \| U_t - Y \|_F .
\end{split}
\end{equation}

Now combing (\ref{eq:jj1}), (\ref{eq:jj2}), and (\ref{eq:jj3}), we have
\begin{equation} \label{eq:iota}
\begin{split}
\| \iota_t \| \leq \frac{\eta}{m^{L-1} d_y} L \frac{\sigma_{\min}^2(X)}{80} m^{L-1} \| U_t - Y \|_F = \frac{\eta \lambda}{80} \| \xi_t \|,
\end{split}
\end{equation} 
where we use $\lambda:= \frac{L \sigma^2_{\min}(X)}{ d_y }$.

Now we have (\ref{eq:phi}), (\ref{eq:psi}), and  (\ref{eq:iota}),  
which leads to
\begin{equation}
\begin{split}
\| \varphi_t \| & \leq 
\| \phi_t \| + \|\psi_t \| + \| \iota_t \| 
\\ & \leq 
\frac{ 43  \sqrt{d_y} }{\sqrt{m} \| X \|_2}   
( \theta^{2t} +  \theta^{2(t-1)}) \nu^2 C_0^2
\left( 
 \frac{ \| \xi_0 \|  }{1 -\theta}  \right)^2
 + \frac{\eta \lambda}{80} \nu C_0  \| \begin{bmatrix} \xi_0 \\ \xi_{-1} \end{bmatrix} \| .
\\ & \leq
\frac{ 1920 \sqrt{d_y} }{\sqrt{m} \| X \|_2} \frac{1}{\eta \lambda}   
 \theta^{2t} \nu^2 C_0^2  \| \begin{bmatrix} \xi_0 \\ \xi_{-1} \end{bmatrix}\|^2
 + \frac{\eta \lambda}{80} \nu C_0  \| \begin{bmatrix} \xi_0 \\ \xi_{-1} \end{bmatrix} \| .
\end{split}
\end{equation}
where the last inequality uses that $1 \leq \frac{16}{9} \theta^2$ as $\eta \lambda \leq 1$ so that $\theta \geq \frac{3}{4}$.
\end{proof}

\begin{lemma}~\label{lem:linear-deviate2}
Following the setting as Theorem~\ref{thm:LinearNet},
denote $\theta := \beta_* + \frac{1}{4} \sqrt{ \eta\lambda } = 1 - \frac{1}{4} \sqrt{ \eta\lambda }$.
If for any $s \leq t$, the residual dynamics satisfies
$\textstyle \|
\begin{bmatrix}
\xi_{s} \\
\xi_{s-1} 
\end{bmatrix}
\| \leq \theta^{s} 
\cdot \nu C_0
\|
 \begin{bmatrix}
\xi_{0} \\
\xi_{-1} 
\end{bmatrix}
\|,
$ 
for some constant $\nu > 0$,
then 
\[
\| \W{l}_t - \W{l}_0 \|_F \leq R^{L\text{-linear}} := 
\frac{64 \| X \|_2 \sqrt{d_y}}{ L \sigma_{\min}^2(X) } \nu C_0 B_0.
\]
\end{lemma}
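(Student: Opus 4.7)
\textbf{Proof proposal for Lemma~\ref{lem:linear-deviate2}.}
My plan is to integrate the momentum updates and bound the resulting series by the hypothesized residual decay. Starting from $\W{l}_{t} - \W{l}_{0} = -\eta\sum_{s=0}^{t-1} M_{s,l}$ with $M_{s,l} = \sum_{\tau=0}^{s}\beta^{s-\tau}\,\partial\ell(\W{L:1}_\tau)/\partial \W{l}_\tau$, I reduce the claim to bounding individual gradient norms
$\|\partial\ell(\W{L:1}_\tau)/\partial \W{l}_\tau\|_F \leq \frac{1}{\sqrt{m^{L-1}d_y}}\|\W{L:l+1}_\tau\|_2\,\|U_\tau - Y\|_F\,\|\W{l-1:1}_\tau\|_2\,\|X\|_2$.

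The key step is then to import two ingredients. First, from Lemma~\ref{lem:linear-eigen} (applicable since we are carrying out this bound inside the outer induction of Theorem~\ref{thm:LinearNet}, where the weight-deviation inequality is assumed at all earlier times), the operator norms of the weight products satisfy $\|\W{L:l+1}_\tau\|_2 \leq 1.1\,m^{(L-l)/2}$ and $\|\W{l-1:1}_\tau\|_2 \leq 1.1\,m^{(l-1)/2}$. Second, the residual-dynamics hypothesis together with $\xi_{-1} = \xi_0$ (from $W_{-1}=W_0$) and Lemma~\ref{lem:DL-A} giving $\|\xi_0\| = \|U_0 - Y\|_F \leq \sqrt{2}\,B_0$ yields $\|U_\tau - Y\|_F \leq \theta^{\tau}\cdot \nu C_0 \cdot 2 B_0$. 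Combining these produces $\|\partial\ell/\partial \W{l}_\tau\|_F \leq (4\|X\|_2/\sqrt{d_y})\,\theta^{\tau}\,\nu C_0\,\sqrt{2} B_0$ (up to constants; this essentially reproduces \eqref{eq:gnorm-linear}).

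Substituting into the double sum, I use $\beta = \beta_*^2 \leq \theta^2$ to collapse the momentum series:
\begin{equation*}
\|M_{s,l}\|_F \leq \frac{4\|X\|_2}{\sqrt{d_y}}\,\nu C_0\sqrt{2} B_0 \sum_{\tau=0}^{s}\theta^{2(s-\tau)}\theta^{\tau}
\leq \frac{4\|X\|_2}{\sqrt{d_y}}\,\nu C_0\sqrt{2} B_0 \cdot \frac{\theta^{s}}{1-\theta},
\end{equation*}
and then $\sum_{s=0}^{t-1}\theta^{s}/(1-\theta) \leq 1/(1-\theta)^2 = 16/(\eta\lambda)$, since $1-\theta = \tfrac{1}{4}\sqrt{\eta\lambda}$. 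Multiplying by $\eta$ and plugging in $\lambda = L\sigma_{\min}^2(X)/d_y$ cancels $\eta$ and produces exactly the claimed $R^{L\text{-linear}} = \tfrac{64\|X\|_2\sqrt{d_y}}{L\sigma_{\min}^2(X)}\nu C_0 B_0$ (up to an absorbable $\sqrt{2}$).

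The main obstacle is the apparent circularity: Lemma~\ref{lem:linear-eigen}, which controls the products $\|\W{j:i}_\tau\|_2$, itself requires weights to be within $R^{L\text{-linear}}$ of initialization. My strategy handles this by viewing Lemma~\ref{lem:linear-deviate2} as the \emph{inductive step} of the outer induction in Theorem~\ref{thm:LinearNet}: at iteration $t$ one already has the deviation bound at all $s < t$ from the induction hypothesis, so Lemma~\ref{lem:linear-eigen} applies at those earlier times, which is precisely what the gradient-norm bound and hence the integrated displacement bound at time $t$ require. A secondary nuisance is tracking the $\sqrt{2}$ and numerical constants so that the final bound lands cleanly at $64$; this is absorbed by the slack between $\sqrt{2}\cdot 4\cdot\sqrt{2}\cdot 16 \leq 64\cdot\tfrac{3}{2}$ and the flexibility in tightening $1.1$.
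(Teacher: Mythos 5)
Your proposal follows essentially the same route as the paper's proof: integrate the momentum updates, bound each gradient norm via Lemma~\ref{lem:linear-eigen} and the residual-decay hypothesis (the paper's (\ref{eq:gnorm-linear})), collapse the momentum series with $\beta=\beta_*^2\le\theta^2$, and sum the geometric series using $1/(1-\theta)^2=16/(\eta\lambda)$, with the circularity handled exactly as you describe, as the inductive step of the outer induction in Theorem~\ref{thm:LinearNet}. The only quibble is your closing constant check ($\sqrt{2}\cdot 4\cdot\sqrt{2}\cdot 16=128\not\le 96$); in fact no extra slack is needed, since the coefficient $4$ in the gradient bound already absorbs the $1.1^2$ and $\sqrt{2}$ factors, so the displacement bound lands at $64$ exactly as in the paper.
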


\begin{proof}
We have 
\begin{equation}
\begin{split}
 \| \W{l}_{t+1} - \W{l}_0 \|_F 
& \overset{(a)}{\leq} 
\eta \sum_{s=0}^t \|  M_{s,l} \|_F  
\overset{(b)}{=} 
\eta \sum_{s=0}^t
\| \sum_{\tau=0}^s \beta^{s-\tau}   \frac{ \partial \ell(\W{L:1}_{\tau})}{ \partial \W{L}_{\tau} }  \|_F
\leq 
\eta \sum_{s=0}^t \sum_{\tau=0}^s \beta^{s-\tau} 
\| \frac{ \partial \ell(\W{L:1}_{\tau})}{ \partial \W{L}_{\tau} }   \|_F
\\ &
\overset{(c)}{\leq} \eta \sum_{s=0}^t \sum_{\tau=0}^s \beta_*^{2(s-\tau)} 
\frac{4 \| X \|_2}{\sqrt{d_y}} \theta^{\tau} \nu C_0 \| U_0 - Y \|_F.
\\ &
\overset{(d)}{\leq} \eta \sum_{s=0}^t \frac{ \theta^{s} }{ 1 - \theta}
\frac{4 \| X \|_2}{\sqrt{d_y}} \nu C_0 \| U_0 - Y \|_F.
\\ &
\leq 
\frac{4 \eta \| X \|_2}{\sqrt{d_y}} \frac{1}{(1 - \theta) (1 - \theta)} \nu C_0 \| U_0 - Y \|_F
\\ &
\overset{(e)}{\leq} \frac{64 \| X \|_2}{ \lambda \sqrt{d_y}} \nu C_0 \| U_0 - Y \|_F
\\ &
\overset{(f)}{\leq} \frac{64 \| X \|_2 \sqrt{d_y}}{ L \sigma_{\min}^2(X) } \nu C_0 B_0,
\end{split}
\end{equation}
where (a), (b) is by the update rule of momentum, which is
$ \W{l}_{t+1} - \W{l}_t = - \eta M_{t,l}$, where $M_{t,l}:= \sum_{s=0}^t \beta^{t-s} \frac{ \partial \ell(W_{L:1})}{ \partial \W{l}_s }$, (c) is because $\|\frac{ \partial \ell(W_{L:1})}{ \partial \W{l}_s }\|_F = 
\frac{4 \| X \|_2}{\sqrt{d_y}} \theta^{s} \nu C_0 \| U_0 - Y \|_F$ (see (\ref{eq:gnorm-linear})),
(d) is because that $\beta= \beta_*^2 \leq \theta^2$,
(e) is because that $\frac{1}{(1-\theta)^2} = \frac{16}{\eta \lambda}$,
and (f) uses the upper-bound $B_0 \geq \| U_0 - Y \|$ defined in Lemma~\ref{lem:DL-A} and $\lambda:= \frac{L \sigma_{\min}^2(X)}{d_y}$.
The proof is completed.

\end{proof}

\begin{lemma}  \citep{HXP20} \label{lem:linear-eigen}
Let $R^{L\text{-linear}}$ be an upper bound that satisfies $\| \W{l}_t - \W{l}_t \|_F \leq R^{L\text{-linear}}$ for all $l$ and $t$.
Suppose the width $m$ satisfies $m > C (LR^{L\text{-linear}})^2$, where $C$ is any sufficiently large constant.
Then, 
\[
\begin{aligned}
\textstyle \sigma_{\max}( \W{j:i}_t ) \leq 1.1 m^{ \frac{j-i+1}{2} },  &\textstyle \text{ } 
\sigma_{\min}( \W{j:i}_t ) \geq 0.9 m^{ \frac{j-i+1}{2} }.
\end{aligned}
\]
\end{lemma}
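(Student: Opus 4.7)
}

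I read the hypothesis as $\|\W{l}_t - \W{l}_0\|_F \leq R := R^{L\text{-linear}}$ (there is a typo in the statement: $\W{l}_t$ appears on both sides of the difference). The plan is a direct binomial/telescoping expansion around the orthogonal initialization. The key initialization fact I would use is that the random orthogonal initialization described after~\eqref{eq:NetworkLinear} makes $\W{l}_0$ have all singular values exactly $\sqrt{m}$ for every $l\in[L]$, and more generally (already recorded in Lemma~\ref{lem:DL-A}) $\sigma_{\max}(\W{j:i}_0) = \sigma_{\min}(\W{j:i}_0) = m^{(j-i+1)/2}$. Writing $\W{l}_t = \W{l}_0 + \Delta_l$ with $\|\Delta_l\|_2\leq\|\Delta_l\|_F\leq R$, the goal becomes showing that $\W{j:i}_t$ stays spectrally close to the perfectly conditioned product $\W{j:i}_0$.

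Next I would expand the non-commutative product
\[
\W{j:i}_t \;=\; \bigl(\W{j}_0+\Delta_j\bigr)\bigl(\W{j-1}_0+\Delta_{j-1}\bigr)\cdots\bigl(\W{i}_0+\Delta_i\bigr)
\]
into $2^{j-i+1}$ ordered terms indexed by subsets $S\subseteq\{i,\ldots,j\}$ of positions at which a $\Delta_l$ replaces $\W{l}_0$. For a subset of size $k$, the surviving $\W{\cdot}_0$-factors form $k+1$ consecutive blocks, each block of length $b$ having operator norm exactly $m^{b/2}$ by Lemma~\ref{lem:DL-A}; submultiplicativity therefore bounds the corresponding term by $R^k\,m^{(j-i+1-k)/2}$. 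Summing over the $\binom{j-i+1}{k}$ terms of each cardinality and subtracting the $S=\emptyset$ contribution yields the binomial estimate
\[
\|\W{j:i}_t - \W{j:i}_0\|_2 \;\leq\; \sum_{k=1}^{j-i+1}\binom{j-i+1}{k} R^k\, m^{(j-i+1-k)/2} \;=\; \bigl(\sqrt{m}+R\bigr)^{j-i+1} - m^{(j-i+1)/2}.
\]
This is exactly the same telescoping bookkeeping that already appears in the derivation of~\eqref{eq:delta}, so the manipulation is standard.

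Finally I would convert this into singular-value statements. By the triangle inequality and Weyl's inequality,
\[
\sigma_{\max}(\W{j:i}_t) \leq \bigl(\sqrt{m}+R\bigr)^{j-i+1}, \qquad \sigma_{\min}(\W{j:i}_t) \geq 2 m^{(j-i+1)/2} - \bigl(\sqrt{m}+R\bigr)^{j-i+1}.
\]
Factoring out $m^{(j-i+1)/2}$, both desired bounds reduce to the single inequality $(1+R/\sqrt{m})^{j-i+1}\leq 1.1$. The width hypothesis $m > C(LR)^2$ gives $R/\sqrt{m} < 1/(L\sqrt{C})$, whence by $(1+x/n)^n\leq e^x$ we obtain $(1+R/\sqrt{m})^{j-i+1}\leq (1+R/\sqrt{m})^L \leq e^{1/\sqrt{C}}$, which is at most $1.1$ once $C \geq (\ln 1.1)^{-2}$. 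This immediately gives $\sigma_{\max}(\W{j:i}_t)\leq 1.1\, m^{(j-i+1)/2}$ and $\sigma_{\min}(\W{j:i}_t)\geq (2-1.1)\,m^{(j-i+1)/2} = 0.9\,m^{(j-i+1)/2}$. The argument has no genuine difficulty; the only point requiring a bit of care is that the order of factors in each subset-indexed term is preserved (the expansion is noncommutative), but it is, and the width condition is calibrated precisely so that the compounding factor $e^{1/\sqrt{C}}$ sits below the chosen constants $1.1$ and $0.9$.
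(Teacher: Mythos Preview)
Your proposal is correct and follows essentially the same route as the paper's own proof: write $\W{l}_t = \W{l}_0 + \Delta_l$, expand the noncommutative product, bound $\|\W{j:i}_t - \W{j:i}_0\|$ by the binomial sum $(\sqrt{m}+R)^{j-i+1} - m^{(j-i+1)/2}$, and then use $m > C(LR)^2$ together with $(1+R/\sqrt{m})^L \leq e^{1/\sqrt{C}}$ to make the perturbation at most $0.1\, m^{(j-i+1)/2}$, concluding via Weyl's inequality and Lemma~\ref{lem:DL-A}. Your reading of the typo ($\W{l}_t - \W{l}_0$) and your explicit threshold $C \geq (\ln 1.1)^{-2}$ are both correct refinements of what the paper leaves implicit.
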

\begin{proof}
The lemma has been proved in proof of Claim 4.4 and Claim 4.5 in \cite{HXP20}. For completeness, let us replicate the proof here.

We have for any $1\leq i \leq j \leq L$.
\begin{equation}
\W{j:i}_t = \left( \W{j}_0  + \Delta_j \right)
 \cdots \left( \W{i}_0 + \Delta_i  \right),
\end{equation}
where $\Delta_i = \W{i}_t - \W{i}_0$.
The product above minus $\W{j:i}_0$ can be written as a finite sum of some terms of the form
\begin{equation}
\W{j:k_l+1}_0 \Delta_{k_l} \W{k_l-1: k_{l-1}+1}_0 \Delta_{k_{l-1}} \cdots \Delta_{k_1} \W{k_1-1:i}_0,
\end{equation}
where $i \leq k_1 < \cdots < k_l \leq j$. Recall that 
$\| \W{j':i'}_0 \|_2 = m^{\frac{j'-i'+1}{2} }$.
Thus, we can bound 
\begin{equation} \label{eq:wdist}
\begin{aligned}
\| \W{j:i}_t - \W{j:i}_0 \|_F & \leq 
\sum_{l=1}^{j-i+1} { j-i+1 \choose l } (R^{L\text{-linear}})^l m^{\frac{j-i+1-l}{2}}
= (\sqrt{m} + R^{L\text{-linear}})^{j-i+1} - (\sqrt{m})^{j-i+1}
\\ & = (\sqrt{m})^{j-i+1} \left(  (1+R^{L\text{-linear}}/\sqrt{m})^{j-i+1} -1  \right)
\leq (\sqrt{m})^{j-i+1} \left(  (1+R^{L\text{-linear}}/\sqrt{m})^{L} -1  \right)
\\ & \leq 0.1 ( \sqrt{m} )^{j-i+1},
\end{aligned}
\end{equation}
where the last step uses $m > C (LR^{L\text{-linear}})^2$.
By combining this with Lemma~\ref{lem:DL-A}, one can obtain the result.

\end{proof}
\noindent
\textbf{Remark:}
In the proof of Lemma~\ref{lem:linear-deviate1}, we obtain a tighter bound of the distance $\| \W{j:i}_t - \W{j:i}_0 \|_F \leq O(\frac{1}{\kappa} ( \sqrt{m} )^{j-i+1} )$. 
However,
to get the upper-bound $\sigma_{\max}( \W{j:i}_t )$ shown in Lemma~\ref{lem:linear-eigen}, (\ref{eq:wdist}) is sufficient for the purpose.

\subsection{Proof of Theorem~\ref{thm:LinearNet}} \label{sec:linear}

\begin{proof} (of Theorem~\ref{thm:LinearNet})
Denote $\lambda:= L \sigma_{\min}^2(X) / d_y$.
By Lemma~\ref{lem:DL-A}, $\lambda_{\min}(H) \geq \lambda$.
Also, denote $\beta_*:=1 - \frac{1}{2} \sqrt{\eta \lambda}$
and $\theta := \beta_* + \frac{1}{4} \sqrt{\eta \lambda} = 1 - \frac{1}{4} \sqrt{\eta \lambda} $.
Let $\nu = 2 $ in Lemma~\ref{lem:linear-deviate1},
~\ref{lem:linear-deviate2},
and let $C_1 = C_3 = C_0$ and
$C_2 = \frac{1}{4} \sqrt{\eta \lambda}$ in
Theorem~\ref{thm:meta}.
The goal is to show that
$
\left\|
\begin{bmatrix}
\xi_{t} \\
\xi_{t-1} 
\end{bmatrix}
\right\|
\leq \theta^{t} 2 C_0  \left\|
\begin{bmatrix}
\xi_{0} \\
\xi_{-1} 
\end{bmatrix}
\right\|
$ for all $t$ by induction.
To achieve this, we will also use induction to show that for all iterations $s$,
\begin{equation} \label{induct:linear}
\forall l \in [L], \| \W{l}_t - \W{l}_0 \|
 \leq   R^{L\text{-linear}}:= \frac{64 \| X \|_2 \sqrt{d_y}}{ L \sigma_{\min}^2(X) }  C_0 B_0,  
\end{equation}
which is clearly true in the base case $s=0$.

By Lemma~\ref{lem:DL-residual},~\ref{lem:DL-A}, \ref{lem:linear-deviate1},~\ref{lem:linear-deviate2}, Theorem~\ref{thm:meta} and Corollary~\ref{corr:1}, it suffices to show that
$
\left\|
\begin{bmatrix}
\xi_{s} \\
\xi_{s-1} 
\end{bmatrix}
\right\|
\leq
\theta^{s}  \cdot 2C_0 \left\|
\begin{bmatrix}
\xi_{0} \\
\xi_{-1} 
\end{bmatrix}
\right\|
$
and 
$
\forall l \in [L], \| \W{l}_s - \W{l}_0 \|
\leq   R^{L\text{-linear}}$
 hold at $s=0,1,\dots,t-1$, one has 
\begin{eqnarray}
\| \sum_{s=0}^{t-1} A^{t-s-1} \begin{bmatrix}
\varphi_s \\
0 
\end{bmatrix}
\|
& \leq & 
\theta^{t}
 C_0
\left\|
\begin{bmatrix}
\xi_{0} \\
\xi_{-1} 
\end{bmatrix}
\right\|,
\label{eq:thm-DL-1}
\\ \forall l \in [L], \| \W{l}_t - \W{l}_0 \| & \leq & R^{L\text{-linear}}:= \frac{64 \| X \|_2 \sqrt{d_y}}{ L \sigma_{\min}^2(X) }  C_0 B_0,
 \label{eq:thm-DL-2}
\end{eqnarray}
where the matrix $A$ and the vector $\varphi_t$ are defined in Lemma~\ref{lem:DL-residual}, and $B_0$ is a constant such that $B_0 \geq \| Y - U_0 \|_F$ with probability $1-\delta$ by Lemma~\ref{lem:DL-A}.
 The inequality (\ref{eq:thm-DL-1}) is the required  condition for using the result of Theorem~\ref{thm:meta}, while the inequality (\ref{eq:thm-DL-2}) helps us to show (\ref{eq:thm-DL-1}) through invoking Lemma~\ref{lem:linear-deviate1} to bound the terms $\{\varphi_s\}$ as shown in the following.

Let us show (\ref{eq:thm-DL-1}) first.
We have
\begin{equation} \label{eq:6-linear}
\begin{aligned}
 \| \sum_{s=0}^{t-1} A^{t-1-s} \| \begin{bmatrix}
\varphi_{s} \\ 0 \end{bmatrix} \|
& \overset{(a)}{ \leq} 
\sum_{s=0}^{t-1} \beta_*^{t-1-s} C_0 \| \varphi_s \|
\\ & \overset{(b)}{ \leq} 
\frac{ 1920 \sqrt{d_y} }{\sqrt{m} \| X \|_2} \frac{1}{\eta \lambda}   
\sum_{s=0}^{t-1} \beta_*^{t-1-s}
 \theta^{2s}  4 C_0^3  \| \begin{bmatrix} \xi_0 \\ \xi_{-1} \end{bmatrix} \|^2
 +  \sum_{s=0}^{t-1} \beta_*^{t-1-s}
\frac{\eta \lambda}{80}  \theta^{s} 2 C_0^2 
  \| \begin{bmatrix} \xi_0 \\ \xi_{-1} \end{bmatrix} \| 
\\ & \overset{(c)}{ \leq} 
\frac{ 1920 \sqrt{d_y} }{\sqrt{m} \| X \|_2} \frac{1}{\eta \lambda}   
\sum_{s=0}^{t-1} \beta_*^{t-1-s}
 \theta^{2s}  4 C_0^3  \| \begin{bmatrix} \xi_0 \\ \xi_{-1} \end{bmatrix} \|^2
+ \frac{2 \sqrt{\eta \lambda}}{15}  \theta^{t} C_0^2 
  \| \begin{bmatrix} \xi_0 \\ \xi_{-1} \end{bmatrix} \| 
 \\ & \overset{(d)}{ \leq} 
\frac{ 1920 \sqrt{d_y} }{\sqrt{m} \| X \|_2} \frac{16}{3(\eta \lambda)^{3/2}}   
\theta^{t} 4 C_0^3
\| \begin{bmatrix} \xi_0 \\ \xi_{-1} \end{bmatrix} \|^2 +  
\frac{2 \sqrt{\eta \lambda}}{15}  \theta^{t} C_0^2 
  \| \begin{bmatrix} \xi_0 \\ \xi_{-1} \end{bmatrix} \| 
\\ & \overset{(e)}{ \leq} 
\frac{1}{3} \theta^{t} C_0
\| \begin{bmatrix} \xi_0 \\ \xi_{-1} \end{bmatrix} \|
 + 
\frac{2 \sqrt{\eta \lambda}}{15}  \theta^{t} C_0^2 
  \| \begin{bmatrix} \xi_0 \\ \xi_{-1} \end{bmatrix} \| 
\\ & \overset{(f)}{ \leq}
\theta^{t}   C_0  \| \begin{bmatrix} \xi_0 \\ \xi_{-1} \end{bmatrix} \|,
\end{aligned}
\end{equation}
where (a) uses Theorem~\ref{thm:akv} with $\beta = \beta_*^2$,
(b)
is by Lemma~\ref{lem:linear-deviate1}, (c)
uses
$\sum_{s=0}^{t-1} \beta_*^{t-1-s} \theta^s = \theta^{t-1} \sum_{s=0}^{t-1} \left( \frac{\beta_*}{\theta}  \right)^{t-1-s} \leq \theta^{t-1} \sum_{s=0}^{t-1} \theta^{t-1-s}$ $\leq \theta^{t-1} \frac{4}{\sqrt{\eta\lambda}} \leq \theta^{t} \frac{16}{3\sqrt{\eta\lambda}} $, $\beta_* = 1 - \frac{1}{2} \sqrt{\eta \lambda} \geq \frac{1}{2}$
, and $\theta = 1 - \frac{1}{4} \sqrt{\eta \lambda} \geq \frac{3}{4}$,
(d) uses $\sum_{s=0}^{t-1} \beta_*^{t-1-s} \theta^{2s} \leq \sum_{s=0}^{t-1} \theta^{t-1+s} \leq \frac{\theta^{t-1}}{1-\theta}  \leq \theta^{t} \frac{16}{3\sqrt{\eta\lambda}} $,
(e) is because
$C' \frac{d_y  C_0^4 B_0^2 }{  \| X \|_2^2 } \frac{1}{ (\eta \lambda)^3 } 
\leq C \frac{d_y  B_0^2 }{ \| X \|_2^2 } \kappa^5  \leq m
$ for some sufficiently large constants $C', C > 0$, and (f) uses that $\eta \lambda = \frac{1}{\kappa}$ and $C_0 \leq 4 \sqrt{\kappa}$ by Corollary~\ref{corr:1}.
Hence, we have shown (\ref{eq:thm-DL-1}). 
 Therefore, by Theorem~\ref{thm:meta}, we have
$ \left\|
\begin{bmatrix}
\xi_{t} \\
\xi_{t-1} 
\end{bmatrix}
\right\|
\leq \theta^{t} 2 C_0  \left\|
\begin{bmatrix}
\xi_{0} \\
\xi_{-1} 
\end{bmatrix}
\right\|.
$

By Lemma~\ref{lem:linear-deviate2}, we have (\ref{eq:thm-DL-2}). 
Thus, we have completed the proof.

\end{proof}
\clearpage

\section{Experiment} \label{app:exp}

\subsection{ReLU network}
We report a proof-of-concept experiment for training the ReLU network. We sample $n=5$ points from the normal distribution, and then scale the size to the unit norm. We generate the labels uniformly random from $\{1,-1\}$. We let $m=1000$ and $d=10$. We compare vanilla GD and gradient descent with Polyak's momentum. 
Denote $\hat{\lambda}_{\max}:= \lambda_{\max}(H_0)$, $\hat{\lambda}_{\min} := \lambda_{\min}(H_0)$, and $\hat{\kappa}:= \hat{\lambda}_{\max} /\hat{\lambda}_{\min} $. Then, for gradient descent with Polyak's momentum, we set the step size $\eta =  1 / \left(  \hat{\lambda}_{\max} \right)$ and set the momentum parameter $\beta = (1 - \frac{1}{2} \frac{1}{\sqrt{ \hat{\kappa} }}) ^2$.  
For gradient descent, we set the same step size. The result is shown on Figure~\ref{fig:exp}.

We also report the percentiles of pattern changes over iterations. Specifically, we report the quantity
\[
\frac{ \sum_{i=1}^n \sum_{r=1}^m \mathbbm{1} \{ \text{sign}( x_i^\top w_t^{(r)} ) \neq  \text{sign}( x_i^\top w_0^{(r)} ) \} }{  m n  },
\]
as there are $m n$ patterns. For gradient descent with Polyak's momentum, the percentiles of pattern changes is approximately $0.76\%$; while for vanilla gradient descent, the percentiles of pattern changes is $0.55\%$. 

\subsection{Deep linear network}

We let the input and output dimension $d = d_y = 20$, the width of the intermediate layers $m=50$, the depth $L=100$. We sampled a $X \in \reals^{20 \times 5}$ from the normal distribution. We let  $W^* = I_{20} + 0.1\bar{W}$, where $\bar{W} \in \reals^{20 \times 20}$ is sampled from the normal distribution.
Then, we have $Y = W^* X$,
$\eta = \frac{d_y}{L \sigma_{\max}^2(X)}$ and $\beta = (1-\frac{1}{2} \sqrt{\eta \lambda})^2$, where $\lambda= \frac{L \sigma_{\min}^2(X)}{ d_y}$.
Vanilla GD also uses the same step size. 
The network is initialized by the orthogonal initialization and both algorithms start from the same initialization.
The result is shown on Figure~\ref{fig:exp2}. 

\begin{figure}[h]
  \centering
    \includegraphics[width=0.4\textwidth]{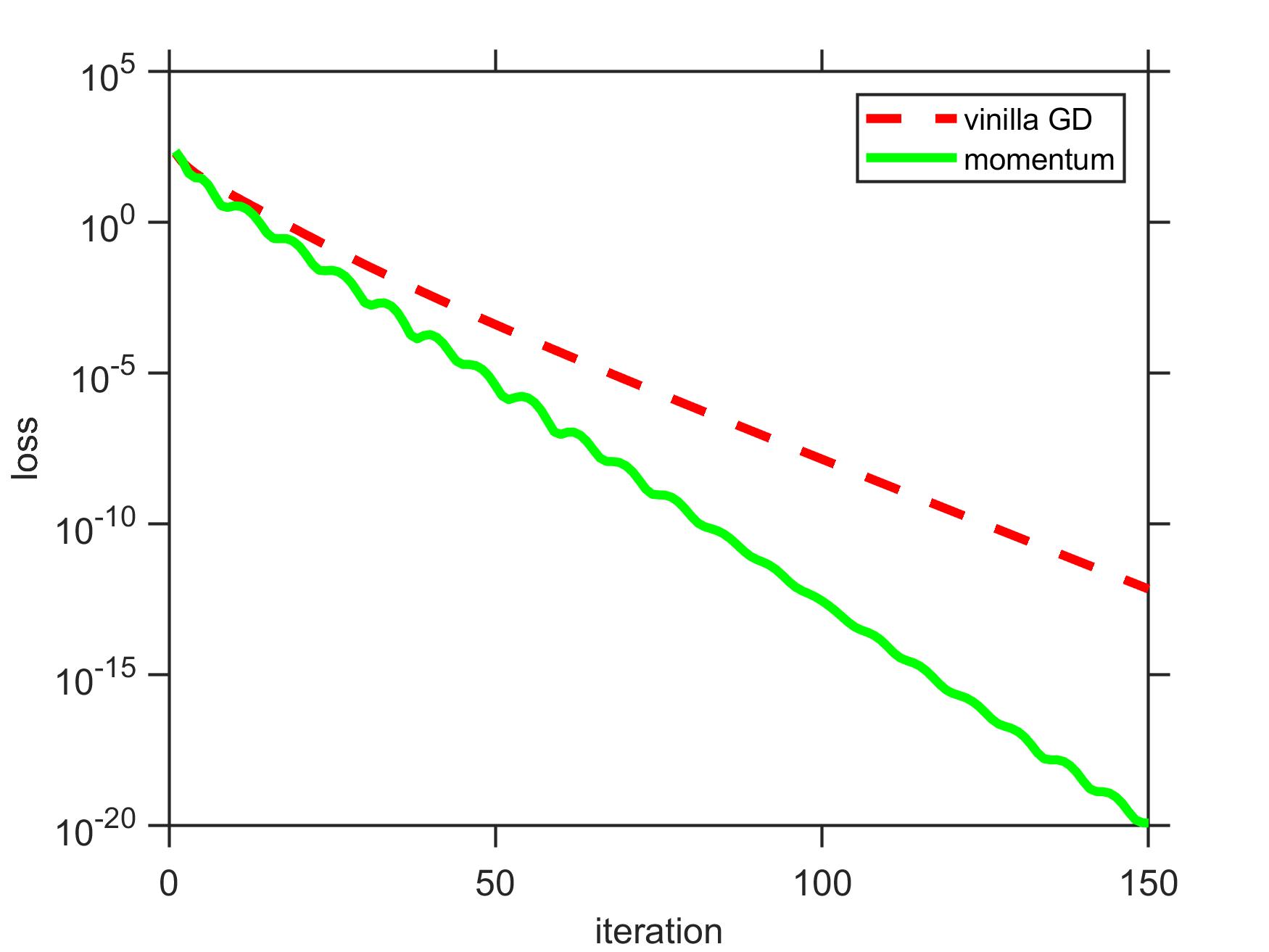}
    \caption{\footnotesize Training a $100$-layer deep linear network. Here ``momentum'' stands for gradient descent with Polyak's momentum. 
    } 
        \label{fig:exp2} 
\end{figure}

\end{document}